\definecolor{org}{HTML}{F8A145}
\crefname{appendix}{app.}{apps.}
\Crefname{appendix}{App.}{Apps.}
\crefname{section}{sec.}{secs.}
\Crefname{section}{Sec.}{Secs.}
\crefname{figure}{fig.}{figs.}
\Crefname{figure}{Fig.}{Figs.}
\crefname{table}{tab.}{tabs.}
\Crefname{table}{Tab.}{Tabs.}
\crefname{equation}{equ.}{equs.}
\Crefname{equation}{Equ.}{Equs.}
\Crefname{theorem}{Thm.}{Thms.}
\Crefname{proposition}{Prop.}{Props.}
\theoremstyle{plain}
\newtheorem{theorem}{Theorem}[section]
\newtheorem{proposition}[theorem]{Proposition}
\newtheorem{lemma}[theorem]{Lemma}
\newtheorem*{rthm1}{\textbf{Restate of Proposition} \ref{proposition_4_1}}
\newtheorem*{rthm2}{\textbf{Restate of Theorem} \ref{them: Generation Frequency Fairness Equal Distance Fairness}}
\newtheorem*{remark1}{Remark 1}
\newtheorem*{insight1}{\textcolor{blue}{Insight.1}}
\newtheorem*{insight2}{\textcolor{blue}{Insight.2}}
\theoremstyle{definition}
\newtheorem{definition}[theorem]{Definition}
\newtheorem{assumption}[theorem]{Assumption}
\theoremstyle{remark}
\def \x {\mathbf{x}}
\def \z {\mathbf{z}}
\def \lightfair {\texttt{LightFair}}
\definecolor{mypurple}{RGB}{144,72,200}
\definecolor{mygreen}{RGB}{118,147,60}
\definecolor{mylightpurple}{RGB}{228,223,236}
\newcolumntype{P}[1]{>{\RaggedRight\hspace{0pt}}p{#1}}
\newcolumntype{X}[1]{>{\RaggedRight\hspace*{0pt}}p{#1}}
\colorlet{linecol}{black!75}
\newcommand{\highlight}[2]{\colorbox{#1!17}{$\displaystyle #2$}}
\colorlet{mhpurple}{Plum!80}
\renewcommand{\highlight}[2]{\colorbox{#1!17}{#2}}
\title{\lightfair: Towards an Efficient Alternative for Fair T2I Diffusion via Debiasing Pre-trained Text Encoders}
\author{\parbox{14cm}
  {\centering
    {\large  Boyu Han$^{1,2}$ \ \ \ \ \ \ \ \ \  Qianqian Xu$^{1,3}$\thanks{Corresponding authors.} \ \ \ \ \ \ \ \ \  Shilong Bao$^{2}$ \ \ \ \ \ \ \ \ \ Zhiyong Yang$^{2}$  \\ \quad\quad Kangli Zi$^{1}$ \ \ \ \ \ \ \ \ \ Qingming Huang$^{2,1*}$ }\\
    {\normalsize
    \textnormal{$^1$ State Key Laboratory of AI Safety, Institute of Computing Technology, CAS}\\
    \textnormal{$^2$ School of Computer Science and Tech., University of Chinese Academy of Sciences}\\
    \textnormal{$^3$ Peng Cheng Laboratory}\\
    }
    {\tt\small \{hanboyu23z,xuqianqian,zikangli\}@ict.ac.cn, \{baoshilong,yangzhiyong21,qmhuang\}@ucas.ac.cn ~~ }
  }
}
\begin{document}

\maketitle

\begin{abstract} 
This paper explores a novel lightweight approach \texttt{LightFair} to achieve fair text-to-image diffusion models (T2I DMs) by addressing the adverse effects of the text encoder. Most existing methods either couple different parts of the diffusion model for full-parameter training or rely on auxiliary networks for correction. They incur heavy training or sampling burden and unsatisfactory performance. Since T2I DMs consist of multiple components, with the text encoder being the most fine-tunable and front-end module, this paper focuses on mitigating bias by fine-tuning text embeddings. To validate feasibility, we observe that the text encoder's neutral embedding output shows substantial skewness across image embeddings of various attributes in the CLIP space. More importantly, the noise prediction network further amplifies this imbalance. To finetune the text embedding, we propose a collaborative distance-constrained debiasing strategy that balances embedding distances to improve fairness without auxiliary references. However, mitigating bias can compromise the original generation quality. To address this, we introduce a two-stage text-guided sampling strategy to limit when the debiased text encoder intervenes. Extensive experiments demonstrate that \texttt{LightFair} is effective and efficient. Notably, on Stable Diffusion v1.5, our method achieves SOTA debiasing at just $1/4$ of the training burden, with virtually no increase in sampling burden. The code is available at \href{https://github.com/boyuh/LightFair}{https://github.com/boyuh/LightFair}.
\end{abstract}
\section{Introduction}
\label{sec: introduction}
Recently, with the rapid progress of machine learning~\cite{zhang2023lightfr,bao2022minority,yang2024harnessing,ye2025towards,ye2024robust} and computer vision~\cite{lin2024glditalker,han2024aucseg,han2025dual,wei2025moka,wang2025time}, \textit{text-to-image (T2I)} diffusion models \cite{ho2020denoising,betker2023improving,jiang2024moderating}, such as \textit{Stable Diffusion (SD)}~\cite{rombach2022high}, have gained widespread attention. These models effectively combine text-based inputs with image generation, delivering remarkable performance across a broad range of applications \cite{songscore,lipmanflow,couairon2023diffedit,brooks2024video,li2025hybrid}. However, research \cite{cho2023dall,seshadri2024bias,wang2023t2iat} has revealed that these models often produce \textbf{biased content} regarding various demographic factors, say gender, race, and age. Such biases pose significant societal risks, particularly when these models are deployed in real-world scenarios~\cite{bianchi2023easily,cho2023dall,luccioni2024stable,zhou2024bias}.

Many efforts have been made to mitigate attribute bias, which can generally be divided into two camps. The first camp~\cite{shenfinetuning,gandikota2024unified} involves retraining or fine-tuning diffusion models to adjust the generated distribution. However, most of them rely on a strategy that couples different parts of the diffusion model for full-parameter training. It leads to highly complex gradient chains~\cite{shenfinetuning}, resulting in a significant computational burden, as shown by the \textbf{solid-lined method} in \Cref{fig: Intro Time and Spatial Complexity}. Moreover, tuning a large number of parameters may lead to excessive debiasing, which lowers generation quality. The second camp~\cite{friedrich2023fair,chuang2023debiasing,jiang2024mitigating} uses post-processing methods during inference, relying on external reference information or auxiliary networks to address attribute bias. The use of third-party models increases sampling time and reduces generation efficiency, as shown by the \textbf{dashed-lined methods} in \Cref{fig: Intro Time and Spatial Complexity}. Furthermore, hidden biases in these third-party models often prevent these methods from ensuring complete fairness. Hence, a natural question arises: \textit{Can we develop a lightweight alternative to resolve attribute bias effectively?}

\begin{wrapfigure}{r}{0.6\columnwidth}
\vspace{-10pt}
    \begin{center}
        \includegraphics[width=\linewidth]{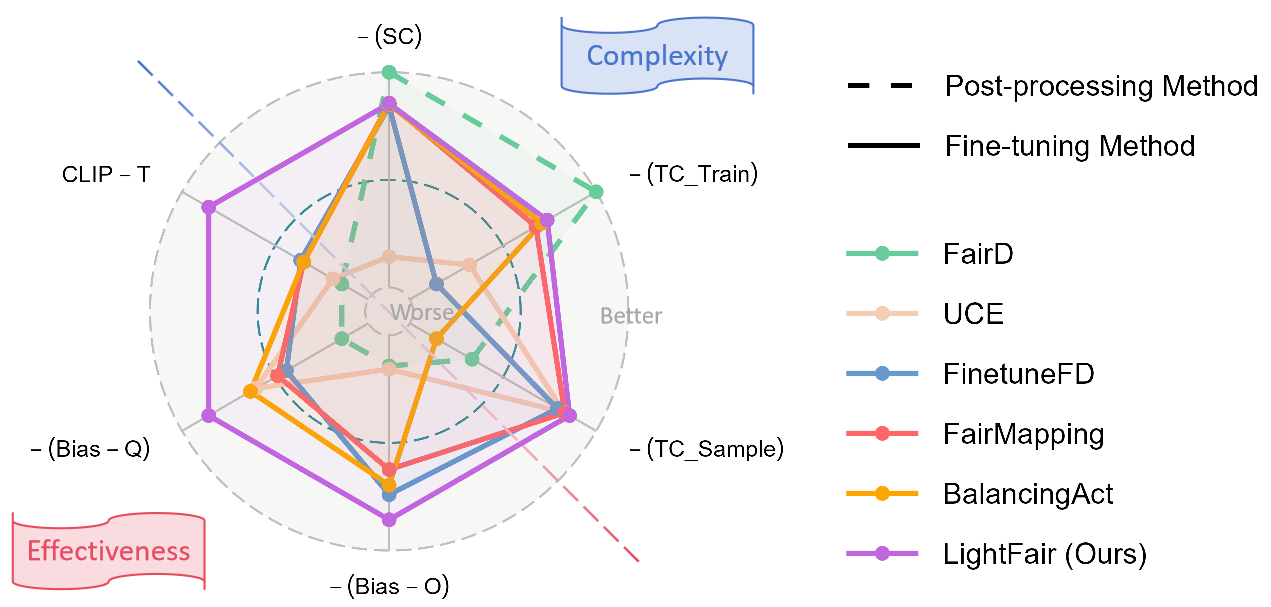}
    \end{center}
    \caption{\textbf{Overview of the complexity and effectiveness of different methods.} Complexity metrics include spatial complexity during training (SC), time complexity during training (TC\_Train), and time complexity during sampling (TC\_Sample). Effectiveness metrics include Bias-O and Bias-Q for measuring generative bias, and CLIP-T for evaluating generation quality. Lower-is-better metrics are negated for consistent comparison. See \Cref{appsec: Analysis of Time and Spatial Complexity} for detailed results.}
   \label{fig: Intro Time and Spatial Complexity}
\vspace{-10pt}
\end{wrapfigure}

In search of an answer, this paper explores a novel approach \texttt{LightFair} to achieve lightweight debiasing by refining the pre-trained text encoder. Specifically, we argue that the text embedding in T2I DMs inherently carries biases, which lead to biased generated images. To this end, we empirically demonstrate that an \textbf{unfair} T2I DM exhibits a \textbf{skewed or imbalanced embedding distribution} between neutral \underline{text} and various \underline{image} attributes in the CLIP space, where the extent of bias is reflected in the distances between these embeddings. Most importantly, we reveal that the bias introduced by the text encoder can be \textbf{further amplified} during the recurrent denoising prediction steps, underscoring the critical need to directly debias the text encoder.

Next, to ensure efficient inference, we perform LoRA~\cite{hulora} fine-tuning on the text encoder to mitigate bias without relying on auxiliary networks. However, achieving fairness requires aligning the attribute distribution of generated images with a fair distribution, but the former is often difficult to obtain. Based on our findings, we demonstrate that equalizing distances within the embedding space can implicitly achieve equalized distributions (\Cref{them: Generation Frequency Fairness Equal Distance Fairness}). Motivated by this, we propose a collaborative distance-constrained debiasing strategy, which comprises two key components. 1) it enforces constraints on the distances between text embeddings and the semantic centers of different attributes, promoting both equalized odds and equalized quality. 2) since we use image embeddings to approximate the semantic centers of various attributes, we introduce an adaptive foreground extraction strategy to minimize the influence of background features. As shown in \Cref{fig: Intro Time and Spatial Complexity}, our \texttt{LightFair} achieves lower time and space complexity, ensuring lightweight debiasing.

Taking a step further, we observe that debiasing may inevitably harm the model's generation quality~\cite{shenfinetuning,li2023fair}. To mitigate this impact, it is crucial to find the optimal intervention time for the debiased text encoder. To that end, we conduct a fine-grained analysis of the diffusion model’s generation process~\cite{peidiffusion,lurobust}. The results reveal that low-frequency information (attribute-independent) emerges during early denoising stages, while high-frequency information (\textbf{attribute-dependent}) appears in \textbf{later} stages (\Cref{proposition_4_1}). In light of this, we propose a two-stage text-guided sampling strategy, where the debiased text encoder is applied only in the later sampling stages. This approach balances bias mitigation and image quality preservation. Finally, comprehensive empirical studies consistently speak to the efficacy of our proposed method.

Our main contributions are summarized as follows:
\begin{itemize}[leftmargin=*]
    \item This paper shows the adverse effects of the text encoder on fairness in text-to-image diffusion models. To our knowledge, this issue remains underexplored within the fairness community.
    \item We propose a lightweight fine-tuning method \texttt{LightFair} to achieve fair diffusion models. It employs a collaborative distance-constrained debiasing strategy to maintain both equalized odds and equalized quality. It also incorporates a two-stage text-guided sampling strategy that mitigates its impact on image generation quality.
    \item Comprehensive empirical results across two versions of SD, four attributes, and diverse prompts demonstrate the effectiveness and lightweight nature of our proposed method in addressing bias.
\end{itemize}
\section{Related Work}
\label{sec: relatedwork}

\textbf{Text-to-image Generative Methods.} The fields of machine learning~\cite{shi2025jailbreak,bao2025aucpro,bao2024improved,dai2023drauc,wang2025unified,ye2023sequence} and computer vision~\cite{bao2025towards,bao2019collaborative,bao2022rethinking,hua2024reconboost,huaopenworldauc,liu2024not,lu2025bidirectional} have undergone a paradigm shift from understanding~\cite{ma2025genhancer,luo2024revive,luo2025long,wen2025partial,liu2025reliable,fu2025weighted} to generative models~\cite{ho2020denoising,rombach2022high,guo2025audiostory,zhang2024long}. Among these, T2I generative modeling has emerged as a key research direction. T2I generation methods are mainly divided into three categories based on their probabilistic modeling approach: autoregressive models~\cite{ramesh2021zero,yu2022scaling,zhang2024var}, generative adversarial networks~\cite{goodfellow2014generative,karras2019style,karras2020analyzing,patashnik2021styleclip}, and diffusion models~\cite{ho2020denoising,rombach2022high,betker2023improving,esser2024scaling}. In recent years, diffusion models have advanced significantly, offering greater stability, scalability, and higher image quality. \textit{Denoising Diffusion Probabilistic Models (DDPM)}~\cite{ho2020denoising} generate images unconditionally through a straightforward, iterative denoising process. Stable Diffusion~\cite{rombach2022high}, an extension of DDPM, incorporates text guidance to produce high-resolution images. Additionally, diffusion-based architectures have been successfully applied to various tasks, including style-transfer~\cite{preechakul2022diffusion,zhang2023inversion,qinmixbridge,cong2025art3d}, scene generation~\cite{brooks2024video,wu2025video,tan2024imagine360,yang2025layerpano3d,gao2025devil,li2025genhsi}, and image-editing~\cite{mokady2023null,couairon2023diffedit,gao2025eraseanything,lione,liu2025animatescene}, achieving notable results.

\textbf{Bias in Diffusion Models and Mitigation Methods.} Diffusion models are highly data-driven and prone to inheriting and amplifying imbalances and biases~\cite{bianchi2023easily,cho2023dall,luccioni2024stable,teo2024measuring,zhou2024bias} present in large-scale datasets~\cite{schuhmann2022laion}. \cite{cho2023dall,seshadri2024bias,wang2023t2iat} observe that, when no attribute prompts are provided, Stable Diffusion exhibits attribute biases along social dimensions such as gender and race. \cite{friedrich2023fair} guides fair generation by introducing random attribute text prompts. \cite{chuang2023debiasing,jiang2024mitigating} perform text prompt corrections in the latent space. \cite{shenfinetuning} modifies model parameters through fine-tuning on balanced data. \cite{gandikota2024unified} conducts concept editing by updating the model's cross-attention layers. \cite{parihar2024balancing,li2023fair,wang2024moesd,hou2024invdiff} introduce an auxiliary network to help the model eliminate bias. However, these methods often treat the diffusion model as an end-to-end system, overlooking the unique roles of its individual components. Such untargeted fine-tuning may lead to over-debiasing and a decline in generation quality. Moreover, many of these approaches depend on external reference information or auxiliary networks to address attribute bias. The fairness and performance of these third-party models are difficult to guarantee, making it challenging to achieve a truly fair diffusion model. Most importantly, these methods impose significant computational burdens during training or sampling. To address these issues, this paper proposes a debiasing method focused on the text encoder. The method features a lightweight design, eliminates the need for auxiliary networks, and offers a targeted approach to mitigate bias.
\section{Preliminaries}
\label{sec: preliminaries}

In this section, we briefly introduce the diffusion model and the fair diffusion model.

\textbf{Diffusion Model.} The diffusion model~\cite{ho2020denoising} consists of two processes: a forward noising process and a reverse denoising process. In the forward process, samples $\x_0 \sim q(\x)$ drawn from a given data distribution are progressively corrupted with Gaussian noise, eventually degrading into pure Gaussian noise over $T$ time steps. It is defined as:
\begin{equation}
q(\x_t|\x_0)=\mathcal{N}(\x_t;\sqrt{\bar{\alpha}_t}\x_0,(1-\bar{\alpha}_t)\mathbf{I}),
\end{equation}
where $\overline{\alpha_t} = \prod_{i=1}^t \alpha_i$ and $\alpha_t \in (0, 1)$ is a hyperparameter controlling the noise level. In the backward process, a neural network parameterized by $\theta$ predicts the noise added at each time step during the forward process, recovering $\x_T$ back to the original data distribution $\x_0$. The denoising process is modeled as:
\begin{equation}
p_{\theta}(\x_{t-1}|\x_t)=\mathcal{N}(\x_{t-1};\mu_{\theta}(\x_t,t),\sigma_t^2\mathbf{I}),
\end{equation}
Here, $\mu_{\theta}(\x_t, t) = \frac{1}{\sqrt{\alpha_t}}(\x_t-\frac{\beta_t}{\sqrt{1-\bar{\alpha}_t}}\epsilon_{\theta}(\x_t,t))$ is parameterized by the noise prediction network $\epsilon_{\theta}(\x_t,t)$, $\beta_t = 1 - \alpha_t$, and $\sigma_t^2$ is typically chosen as either $\sigma_t^2 = \beta_t$ or $\sigma_t^2 = \frac{1 - \bar{\alpha}_{t-1}}{1 - \bar{\alpha}_t} \beta_t$.

\textbf{‌Stable Diffusion.} The Stable Diffusion~\cite{rombach2022high} is a classic text-to-image diffusion model. It additionally provides a prompt $P$ to guide the diffusion model in generating images. Specifically, it employs a noise prediction network (typically a U-Net) in the latent space while utilizing a text encoder (usually CLIP) to encode $P$, thereby providing textual guidance. For latent diffusion models, an image encoder $g^{e}$ maps the training image $\x_0$ to its latent space representation $\z_0 = g^{e}(\x_0)$. The image decoder $g^{d}$ maps the denoised $\z_0$ from the generation process back to the image space as $\x_0 = g^{d}(\z_0)$. For the text encoder, it encodes the textual prompt $P$ using $f^t$, which is then incorporated into the noise prediction network $\epsilon_{\theta}(f^t(P), \z_t, t)$.

\textbf{Fair Diffusion Model.} Following the notations in \cite{li2023fair,parihar2024balancing,wang2024moesd}, the textual prompt $P = prompt(a, c)$ is typically composed of $a$, an attribute word from the set $A$, and $c$, a main word from the set $C$. For example, $prompt(\text{`\texttt{female}'}, \text{`\texttt{doctor}'})$ represents ``\texttt{Photo portrait of a female doctor}". We denote textual prompts without an attribute word as $prompt(\cdot, c)$, for instance, $prompt(\cdot, \text{`\texttt{doctor}'}) = ``\texttt{Photo portrait of a doctor}"$. Currently, fair diffusion models have two goals:

\textbf{Goal 1: Equalized Odds} encourages equal generation frequency for images with different attributes when given an unspecified attribute prompt $prompt(\cdot, c)$, expressed as:
\begin{equation}
\mathbb{P}(a_i | prompt(\cdot, c)) = \mathbb{P}(a_j | prompt(\cdot, c)), \quad \forall a_i, a_j \in A.
\end{equation}
This probability is typically computed using an additional attribute classifier $h(\cdot)$.

\textbf{Goal 2: Equalized Quality} promotes equal image quality for different attribute images, expressed as:
\begin{equation}
Q(prompt(a_i, c)) = Q(prompt(a_j, c)), \quad \forall a_i, a_j \in A,
\end{equation}
where $Q(prompt(a_i, c))$ represents the quality score of the images generated using $prompt(a_i, c)$. The quality score is typically measured using metrics such as CLIP~\cite{radford2021learning} or DINO~\cite{oquab2024dinov2}.
\section{\texttt{LightFair}}
\label{sec: methodology}

\begin{figure*}[t]
  \centering
   \includegraphics[width=\linewidth]{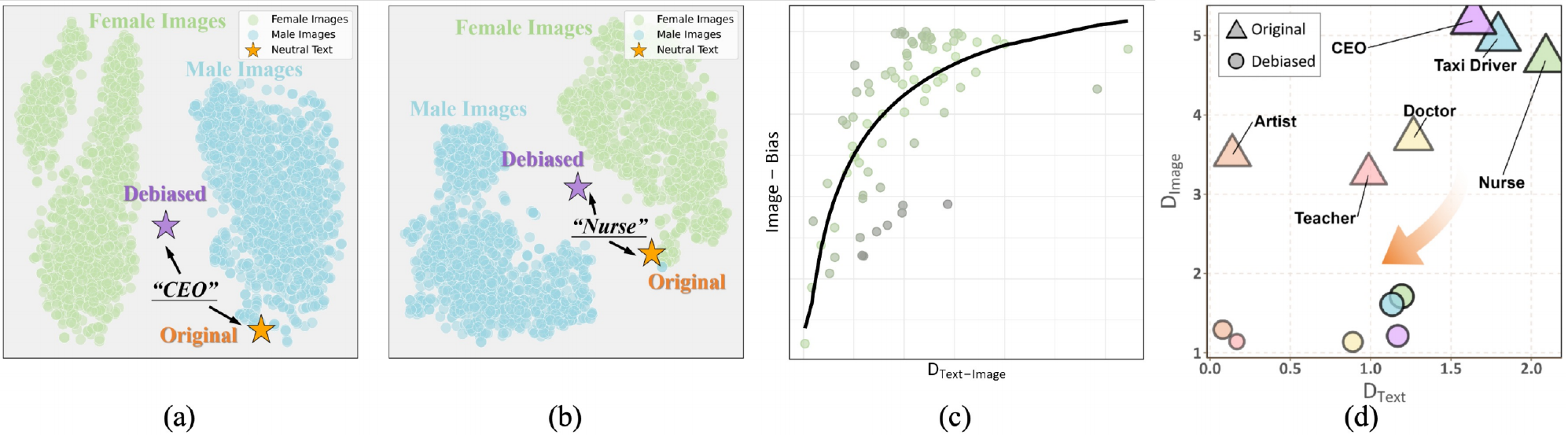}
   \caption{(a)-(b) T-SNE visualization of original and debiased `\texttt{CEO}'/`\texttt{Nurse}' text embeddings alongside male and female image embeddings. (c) Visualization of distance differences ($\text{D}_{\text{Text-Image}}$) and generated image bias across $80$ occupations. (d) Visualization of original/our debiased text embedding distance differences ($\text{D}_{\text{Text}}$) and image embedding distance differences ($\text{D}_{\text{Image}}$).}
   \label{fig: distance}
\end{figure*} 

In this section, we explore achieving a fair diffusion model through lightweight fine-tuning. We first identify the text encoder as a key structure contributing to bias (\Cref{subsection: Sources of Bias}) and propose the collaborative distance-constrained debiasing strategy to address it without auxiliary networks (\Cref{subsection: Mitigation of Attribute Bias}). We then analyze the diffusion process to determine the optimal timing for applying the debiased text encoder, mitigating its impact on performance (\Cref{subsection: Timing of Debiasing}). Following the setup in ~\cite{parihar2024balancing,shenfinetuning,chuang2023debiasing,gandikota2024unified,li2023fair}, this paper focuses on addressing attribute bias in Stable Diffusion~\cite{rombach2022high} and uses the example of gender bias in occupations to illustrate the discussion. A table of symbol definitions is provided in \Cref{sec: supp_Symbol Definitions}.

\subsection{A Closer Look at the Pre-trained Text Encoder}
\label{subsection: Sources of Bias}

\begin{figure*}
  \centering
  \includegraphics[width=\linewidth]{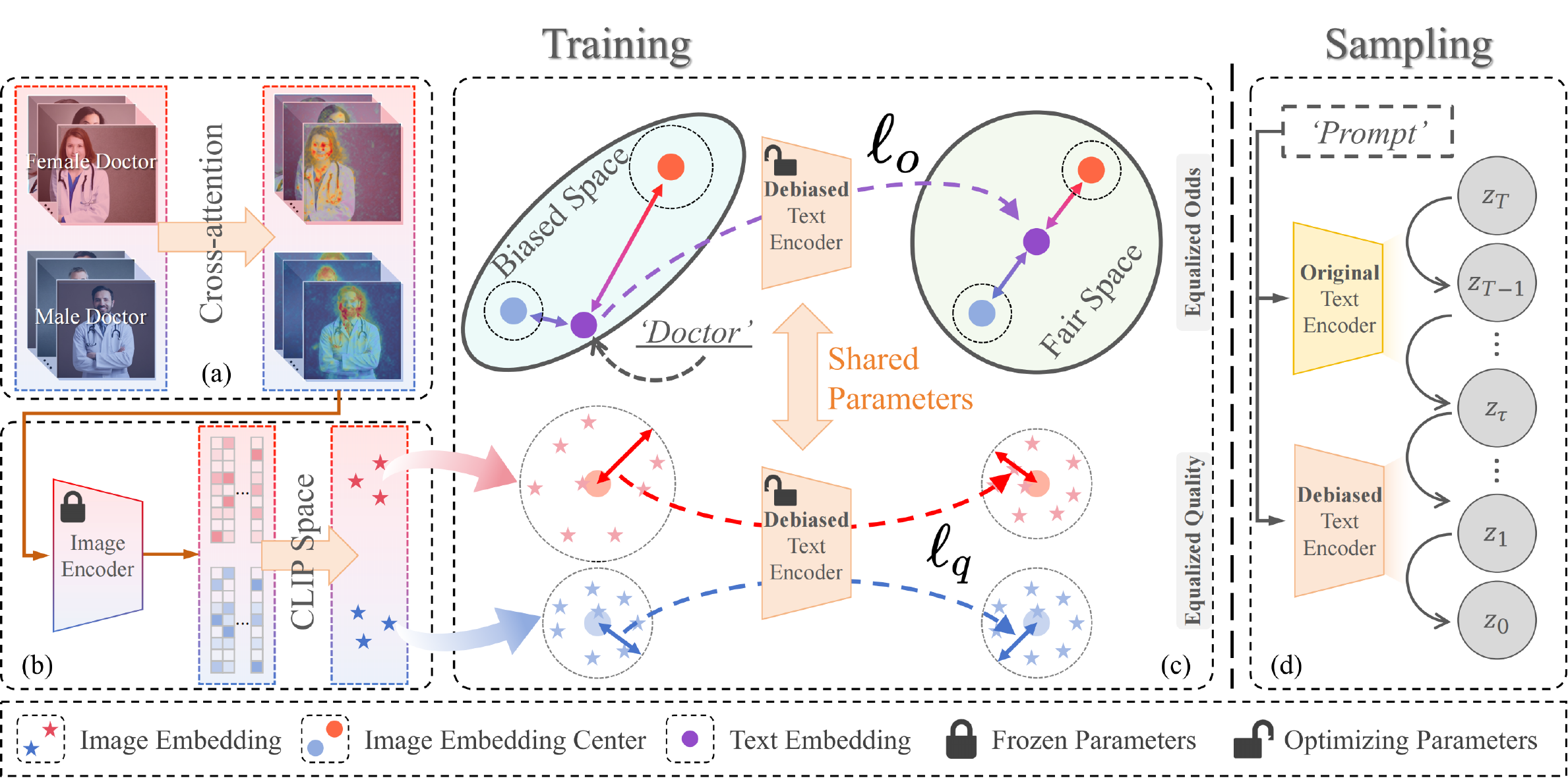}
  \caption{\textbf{An overview of our \texttt{LightFair}.} a) We first perform adaptive foreground extraction on images with different attributes. b) Then, the foreground images are encoded by an image encoder to obtain the centroid for each attribute. c) Lightweight fair fine-tuning is conducted using collaborative distance constraints. d) A two-stage text-guided sampling strategy ensures generation quality.}
  \label{fig: overview}
  \vspace{-10pt}
\end{figure*}

Stable Diffusion employs a pre-trained text encoder (\textit{e.g.}, CLIP) to encode textual inputs, which then guide the noise prediction network (\textit{e.g.}, U-Net). \textbf{The noise prediction network usually has more parameters than the text encoder (details are provided in \Cref{appsec: Parameter Counts of Different Components in Stable Diffusion}).} Thus, we aim to investigate whether fine-tuning the text encoder can correct biases, which is lightweight and underexplored.

To investigate the bias within the text encoder, we propose two progressive research questions. \textbf{(RQ 1)} How can we measure bias within the text embeddings? Existing methods for assessing bias in SD typically rely on performing attribute classification on generated images. However, the text encoder outputs embeddings, making it challenging to directly quantify the bias. \textbf{(RQ 2)} Does the bias in the text encoder affect the output of the subsequent noise prediction network?

To address \textbf{(RQ 1)}, we investigate the relationship between distance and bias within the text-image semantic space aligned by CLIP. First, we conduct a simple empirical experiment as an initial exploration. It is widely acknowledged that SD tends to generate male images for the prompt \texttt{CEO}' and female images for the prompt \texttt{Nurse}'~\cite{friedrich2023fair}. In the experiment, we find that the text embedding for \texttt{CEO}' is closer to the centroid of male images, while the embedding for \texttt{Nurse}' is closer to the centroid of female images, as shown in \Cref{fig: distance}(a) and (b). Next, we quantify the relationship between this embedding distance and the bias in generated images. Specifically, we calculate the distance difference between the text embeddings of $80$ occupations (details are provided in \Cref{appsec: Occupation List}) and the semantic centroids for \texttt{male}' ($\male$) and \texttt{female}' ($\female$):
\begin{equation}
\text{D}_{\text{Text-Image}}=\left|s\big(\textcolor{mypurple}{\text{emb}^{T}_{c}}(\cdot),\mathbb{E}[\textcolor{mygreen}{\text{emb}^{I}_{c}}(\male)]\big) -s\big(\textcolor{mypurple}{\text{emb}^{T}_{c}}(\cdot),\mathbb{E}[\textcolor{mygreen}{\text{emb}^{I}_{c}}(\female)]\big)\right|,
\label{equ: D_Text_Image}
\end{equation}
where $s(a, b)$ represents the cosine distance between $a$ and $b$. $\textcolor{mypurple}{\text{emb}^{T}_{c}}(\cdot)$ and $\textcolor{mygreen}{\text{emb}^{I}_{c}}(\cdot)$ are shorthand notations for $f^t(prompt(\cdot, c))$ and $f^i(M(prompt(\cdot, c)))$, respectively. $M(prompt(\cdot))$ denotes the images generated using $prompt(\cdot)$. $f^t(\cdot)$ and $f^i(\cdot)$ refer to the encoding operations performed by the CLIP text and image encoders. Additionally, we use these text embeddings to generate $500$ images for each occupation and measure the gender bias in the images (details are provided in \Cref{subsec: supp_Evaluation metrics}, Bias-Odds). The results, shown in \Cref{fig: distance}(c), indicate a clear trend: greater distance differences correspond to stronger gender bias in the generated images. \textbf{Therefore, distance is a good measure to reflect bias in text embeddings.}

To answer \textbf{(RQ 2)}, we investigate how the bias in text embeddings changes after passing through the noise prediction network. Following the conclusion from \textbf{(RQ 1)}, we use distance as a measure of bias. Specifically, we use six occupations as the main word $c$ for image generation. We then calculate the bias in the text embeddings from the text encoder ($\text{D}_{\text{Text}}$) and image embeddings of the output from the noise prediction network ($\text{D}_{\text{Image}}$), as follows:
\begin{equation}
\begin{aligned}
\text{D}_{\text{Text}}=\left|s\big(\textcolor{mypurple}{\text{emb}^{T}_{c}}(\cdot),\textcolor{mypurple}{\text{emb}^{T}_{c}}(\male)\big) -s\big(\textcolor{mypurple}{\text{emb}^{T}_{c}}(\cdot),\textcolor{mypurple}{\text{emb}^{T}_{c}}(\female)\big)\right|,
\label{equ: D_Text}
\end{aligned}
\end{equation}
\begin{equation}
\begin{aligned}
\text{D}_{\text{Image}}=\left|s\big(\mathbb{E}[\textcolor{mygreen}{\text{emb}^{I}_{c}}(\cdot)],\mathbb{E}[\textcolor{mygreen}{\text{emb}^{I}_{c}}(\male)]\big) -s\big(\mathbb{E}[\textcolor{mygreen}{\text{emb}^{I}_{c}}(\cdot)],\mathbb{E}[\textcolor{mygreen}{\text{emb}^{I}_{c}}(\female)]\big)\right|.
\label{equ: D_Image}
\end{aligned}
\end{equation}

We plot $\text{D}_{\text{Text}}$ and $\text{D}_{\text{Image}}$ in \Cref{fig: distance}(d). The results show that the text encoder introduces bias into the model, and the noise prediction network further amplifies this bias during image generation. Our debiased text encoder produces text embeddings with less bias, leading to images with less bias. Therefore, we can get the following insight:

\begin{mdframed}[hidealllines=true,backgroundcolor=mylightpurple,innerleftmargin=3pt,innerrightmargin=3pt,leftmargin=-3pt,rightmargin=-3pt]
\begin{insight1}
The text encoder is one of the key yet overlooked structures contributing to attribute bias in Stable Diffusion.
\end{insight1}
\end{mdframed}

\begin{remark1}
The noise prediction network is not entirely independent of the text encoder, as encoded textual inputs directly influence the denoising process. \textbf{It is acknowledged that when there is no bias in the text embeddings (\textit{e.g.}, for specified attributes), the noise prediction network exhibits minimal bias.} This connection suggests that biases in the text encoder can propagate to the noise prediction network and be further amplified during training, underscoring the importance of analyzing and mitigating biases in the text encoder. Meanwhile, since the text encoder is trained independently of the noise prediction network, fine-tuning the text encoder separately is a feasible solution.
\end{remark1}

\subsection{Collaborative Distance-constrained Debiasing Strategy}
\label{subsection: Mitigation of Attribute Bias}
We perform lightweight fine-tuning of the text encoder using a collaborative distance-constrained debiasing strategy to eliminate bias. A brief overview is provided in \Cref{fig: overview}(a)-(c).

\subsubsection{Debiasing Through Distance Constraints}
\label{subsubsection: Debiasing Through Distance Constraints}

To achieve \textbf{equalized odds}, we aim to generate images with equal probabilities for each attribute. However, it is challenging to obtain the probability distribution of generated images. To facilitate optimization, we theoretically explore the equivalence between Equalized Odds and Equalized Distance in \Cref{them: Generation Frequency Fairness Equal Distance Fairness}.

\begin{mdframed}[hidealllines=true,backgroundcolor=mylightpurple,innerleftmargin=3pt,innerrightmargin=3pt,leftmargin=-3pt,rightmargin=-3pt]
\begin{theorem}
Under \Cref{assumption: Stochastic Neighbor Embedding}, \ref{assumption: well-trained diffusion} and \ref{assumption: a and c independent}, for any attributes $a_i, a_j \in A$, achieving Equalized Odds $\mathbb{P}(a_i | P(\cdot, c)) = \mathbb{P}(a_j | P(\cdot, c))$ is equivalent to ensuring Equalized Distance:
\begin{equation}
\left\|f(a_i,c)-f^t\big(P(\cdot, c)\big)\right\|^2 = \left\|f(a_j,c)-f^t\big(P(\cdot, c)\big)\right\|^2,
\end{equation}
where $f(a_i, c)$ represents the encoding of the concepts $a_i$ and $c$, $f^t(\cdot)$ denotes the encoding performed by the CLIP text encoder and $P(\cdot)$ is shorthand for $prompt(\cdot)$.
\label{them: Generation Frequency Fairness Equal Distance Fairness}
\end{theorem}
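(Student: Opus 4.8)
The plan is to establish the equivalence by connecting three quantities: the generation frequency $\mathbb{P}(a_i \mid P(\cdot, c))$, the conditional probability that the generated image is assigned attribute $a_i$ by the attribute classifier, and the geometric distance in the CLIP embedding space between the neutral prompt embedding $f^t(P(\cdot,c))$ and the concept encoding $f(a_i,c)$. The key bridge is \Cref{assumption: Stochastic Neighbor Embedding}, which (as the name suggests, in the spirit of SNE/t-SNE) should posit that the probability of a sample landing near a given attribute centroid decays like a Gaussian (or Student-$t$) kernel in the embedding distance: roughly $\mathbb{P}(a_i \mid P(\cdot,c)) \propto \exp\!\big(-\|f(a_i,c) - f^t(P(\cdot,c))\|^2 / \tau\big)$ for some temperature $\tau > 0$. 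Granting this, I would proceed as follows.

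First I would use \Cref{assumption: well-trained diffusion} to argue that a well-trained diffusion model faithfully samples from the distribution induced by the text embedding it is conditioned on, so that the empirical attribute frequency of the generated images equals the softmax-type weight assigned to each attribute centroid under the SNE kernel. Then I would use \Cref{assumption: a and c independent} (independence of attribute $a$ and main concept $c$) to factor or simplify $f(a_i,c)$ — likely writing it as an additive or otherwise separable combination so that the normalizing constant over the attribute set $A$ is the same for $a_i$ and $a_j$ and hence cancels when comparing the two frequencies. With the common normalizer removed, $\mathbb{P}(a_i \mid P(\cdot,c)) = \mathbb{P}(a_j \mid P(\cdot,c))$ becomes $\exp(-\|f(a_i,c) - f^t(P(\cdot,c))\|^2/\tau) = \exp(-\|f(a_j,c) - f^t(P(\cdot,c))\|^2/\tau)$. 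Taking logarithms and multiplying by $-\tau$ (which is legitimate since $\tau>0$ and $\exp$ is a strictly monotone bijection, giving the ``if and only if'' in both directions) yields exactly
\begin{equation}
\left\|f(a_i,c)-f^t\big(P(\cdot, c)\big)\right\|^2 = \left\|f(a_j,c)-f^t\big(P(\cdot, c)\big)\right\|^2,
\end{equation}
which is the claimed Equalized Distance condition.

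The main obstacle I anticipate is making the first step — the passage from ``generation frequency'' to ``softmax weight over embedding distances'' — fully rigorous, since it conflates the attribute classifier $h(\cdot)$ applied in image space with the CLIP-space geometry of text/concept embeddings. This requires the well-trained-diffusion assumption to be strong enough to transport the text-embedding neighborhood structure through the denoising network and the image decoder without distortion of the relative attribute probabilities; the Insight and Remark 1 above (that a well-trained noise predictor exhibits minimal bias for specified attributes, i.e. it does not itself reweight attributes) are precisely what licenses this, so I would invoke them explicitly. A secondary subtlety is the exact form of the concept encoding $f(a_i,c)$ and whether the independence assumption truly makes the partition function attribute-symmetric; if $f(a_i,c)$ is not perfectly separable, one may need the equivalence only up to the approximation error already built into the SNE assumption, which is acceptable for the paper's purposes. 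The remaining algebra — monotonicity of $\log$, cancellation of constants — is routine and I would not belabor it.
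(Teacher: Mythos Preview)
Your core mechanism is exactly the paper's: apply the SNE assumption to write $\mathbb{P}(f(a_i,c)\mid f^t(P(\cdot,c)))$ as a softmax over Gaussian kernels in embedding distance, observe that the normalizer $\sum_{a_k\in A}\exp(-d_k^2/2\rho^2)$ is identical for $a_i$ and $a_j$, cancel it, and use strict monotonicity of $\exp$ to obtain the distance equality in both directions. That part is right and matches the paper.

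Where you drift is in the roles you assign to the two auxiliary assumptions. The normalizer cancels \emph{automatically} --- it is the same sum over $A$ regardless of which $a_i$ you are evaluating --- so you do not need the independence assumption for that step, nor do you need any separability of $f(a_i,c)$. In the paper, \Cref{assumption: well-trained diffusion} is invoked only to assert $\mathbb{P}\big(c\mid f^t(P(\cdot,c))\big)=1$ (the model always renders the specified concept), and \Cref{assumption: a and c independent} is then used to pass from $\mathbb{P}(a_i\mid f^t(P(\cdot,c)))$ to the joint $\mathbb{P}(a_i,c\mid f^t(P(\cdot,c)))$. The point of this move is to introduce $c$ alongside $a_i$ so that the event can be identified with the encoding $f(a_i,c)$, after which the SNE kernel applies directly. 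Your worry about ``transporting neighborhood structure through the denoising network'' and about partition-function symmetry is therefore misplaced: neither is where these assumptions do work in the actual proof.
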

\end{mdframed}

The proof is deferred to \Cref{appsec: Proof of Theorem 4.1}. Since the image $M(P(a, c))$ generated by the prompt $P(a, c)$ can serve as the encoding for the concepts $a$ and $c$, we approximate $f(a, c)$ by using the semantic center of multiple image embeddings $\mathbb{E}\left[\textcolor{mygreen}{\text{emb}^{I}_{c}}(a)\right]$, as shown in \Cref{fig: overview}(b). Ultimately, we can correct the bias by shifting the text embedding to a position where its distance from the embedding center of each attribute image is equal. The loss function can be expressed as:
\begin{equation}
\ell_{o}=\sqrt{\frac{1}{|A|}\sum_{i=1}^{|A|}\left[s\Big(\textcolor{mypurple}{\text{emb}^{T}_{c}}(\cdot),\mathbb{E}\left[\textcolor{mygreen}{\text{emb}^{I}_{c}}(a_i)\right]\Big)-\overline{s}\right]^2},
\label{l_o}
\end{equation}
where $\overline{s} = \frac{1}{|A|}\sum_{i=1}^{|A|}s\Big(\textcolor{mypurple}{\text{emb}^{T}_{c}}(\cdot),\mathbb{E}\left[\textcolor{mygreen}{\text{emb}^{I}_{c}}(a_i)\right]\Big)$, and $s(a,b)$ represents the cosine distance between $a$ and $b$.

To ensure \textbf{equalized quality}, we aim to generate images that share the same CLIP score for each attribute. We compute the CLIP score of a single image as $s\Big(\textcolor{mypurple}{\text{emb}^{T}_{c}}(a),\textcolor{mygreen}{\text{emb}^{I}_{c}}(a)\Big)$. To find the quality distribution, we calculate the CLIP score for each generated image. This computation can be simplified by using the average image embedding for each attribute. Specifically, we set a constraint so that the distance between the image embedding center of each attribute and its corresponding text embedding is equal.
\begin{equation}
\ell_{q}=\sqrt{\frac{1}{|A|}\sum_{i=1}^{|A|}\left[s\Big(\textcolor{mypurple}{\text{emb}^{T}_{c}}(a_i)\big),\mathbb{E}\left[\textcolor{mygreen}{\text{emb}^{I}_{c}}(a_i)\right]\Big)-\overline{s^{\prime}}\right]^{2}},
\label{l_q}
\end{equation}
where $\overline{s^{\prime}}=\frac{1}{|A|}\sum_{i=1}^{|A|}s\Big(\textcolor{mypurple}{\text{emb}^{T}_{c}}(a_i)\big),\mathbb{E}\left[\textcolor{mygreen}{\text{emb}^{I}_{c}}(a_i)\right]\Big)$.

We introduce an additional regularization term to constrain the text embeddings from deviating too far from the image embedding center.
\begin{equation}
\ell_{reg}=1-s\Big(\textcolor{mypurple}{\text{emb}^{T}_{c}}(\cdot),\mathbb{E}_{i\in[1,|A|]}\left[\mathbb{E}\left[\textcolor{mygreen}{\text{emb}^{I}_{c}}(a_i)\right]\right]\Big),
\label{l_reg}
\end{equation}
where $\mathbb{E}_{i\in[1,|A|]}\left[\mathbb{E}\left[\textcolor{mygreen}{\text{emb}^{I}_{c}}(a_i)\right]\right]$ represents the centroid of all attribute image embeddings.

During fine-tuning, the loss function is constructed by jointly using \Cref{l_o}, \Cref{l_q}, and \Cref{l_reg}:
\begin{equation}
\ell = \ell_{o} + \lambda_1 \ell_{q} + \lambda_2 \ell_{reg},
\end{equation}
where $\lambda_1$ and $\lambda_2$ are hyperparameters. The entire process does not require additional auxiliary networks or the computation of complex gradient chains, ensuring lightweight fine-tuning.

\begin{wrapfigure}{r}{0.5\columnwidth}
  \vspace{-15pt}
    \begin{center}
        \includegraphics[width=\linewidth]{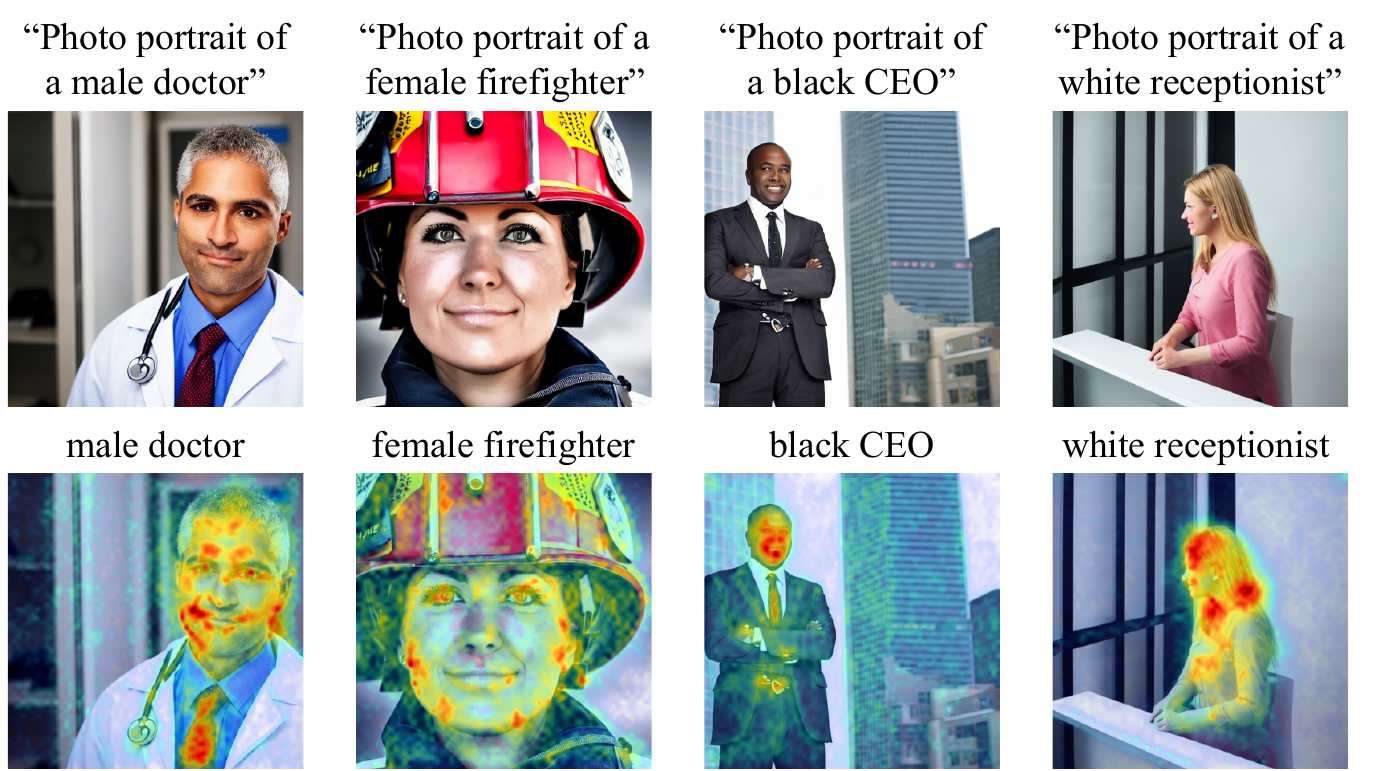}
    \end{center}
    \caption{The first row includes images generated by SD using designed templates. The second row shows the visualization of the generated images after highlighting the main content.}
   \label{fig: cross_attention}
   \vspace{-15pt}
\end{wrapfigure}

\subsubsection{Adaptive Foreground Extraction}
\label{subsubsection: Adaptive Foreground Extraction}

When using image embeddings to represent semantic centers, we notice that the background of the image may introduce distractions. As shown in \Cref{fig: cross_attention}, in the case of the prompt ``\texttt{Photo portrait of a white receptionist}'', the generated image includes elements like \textit{desk} and \textit{door}. To address this issue, we use text guidance to highlight the pixels corresponding to the main word in the image. Specifically, it is achieved using a cross-attention layer:
\begin{equation}
\textcolor{mygreen}{\text{emb}^{I}_{c}}'(a_i)=\text{Softmax}(\frac{\mathbf{Q}\mathbf{K}^T}{\sqrt{d}})\mathbf{V},
\end{equation}
where $\mathbf{Q}=\textcolor{mygreen}{\text{emb}^{I}_{c}}(a_i)$, $\mathbf{K}=\mathbf{V}=\textcolor{mypurple}{\text{emb}^{T}_{c}}(a_i)$ are the query, key, value matrices of the attention operation, $d$ is the embedding dimension of $\mathbf{K}$. The highlighted image replaces the original image as input to the image encoder, as shown in \Cref{fig: overview}(a). By using the highlighted image, the model can better focus on the target concept while reducing the influence of background information. As illustrated in \Cref{fig: cross_attention}, the phrase ``\texttt{white receptionist}'' directs the model’s attention to the person, distinguishing her from the surrounding environment. In \Cref{sebsec: Ablation Study}, we present additional experiments to demonstrate the effectiveness of this module.

\subsection{Two-Stage Text-Guided Sampling Strategy}
\label{subsection: Timing of Debiasing}
Although we aim to minimize the impact of debiasing on the model's generative performance by using multiple constraints, there is no free lunch. Fine-tuning inevitably affects the model's output quality. To mitigate this, we apply fine-tuned guidance only at critical times during the generation process rather than entirely replacing the original text encoder. This approach requires a detailed analysis of the diffusion model's generation process~\cite{li2024critical,raya2023spontaneous,benita2025designing}.

First, we identify frequency signal patterns in the diffusion denoising process:

\begin{mdframed}[hidealllines=true,backgroundcolor=mylightpurple,innerleftmargin=3pt,innerrightmargin=3pt,leftmargin=-3pt,rightmargin=-3pt]
\begin{proposition}
The recovery rate of low-frequency signals during the diffusion denoising process is higher than that of high-frequency signals.
\label{proposition_4_1}
\end{proposition}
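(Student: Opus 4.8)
The plan is to analyze the forward noising process in the frequency domain and show that, at any fixed noise level, high-frequency components of the clean signal are comparatively more corrupted (in a signal-to-noise sense) than low-frequency components, so that during the reverse process the low-frequency content is identifiable and ``recovered'' earlier. The key observation is that the forward marginal $q(\x_t\mid\x_0)=\mathcal{N}(\x_t;\sqrt{\bar\alpha_t}\x_0,(1-\bar\alpha_t)\mathbf{I})$ has \emph{isotropic} (white) additive noise, whereas natural images $\x_0$ have a power spectrum that decays rapidly with frequency (the well-documented $1/|\xi|^{\gamma}$ law for natural image statistics, $\gamma\approx 2$). I would take the discrete Fourier transform of both sides: since the DFT is linear and unitary, $\widehat{\x_t}(\xi)=\sqrt{\bar\alpha_t}\,\widehat{\x_0}(\xi)+\sqrt{1-\bar\alpha_t}\,\widehat{\eta}(\xi)$ with $\widehat{\eta}$ still standard complex Gaussian in each frequency bin $\xi$ (up to conjugate-symmetry bookkeeping). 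Hence the per-frequency signal-to-noise ratio is $\mathrm{SNR}_t(\xi)=\frac{\bar\alpha_t}{1-\bar\alpha_t}\,\mathbb{E}|\widehat{\x_0}(\xi)|^2$, which factorizes into a time-dependent scalar times the clean power spectrum $S(\xi):=\mathbb{E}|\widehat{\x_0}(\xi)|^2$.

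From here the argument proceeds in three steps. First, fix a low-frequency band $L$ and a high-frequency band $H$; because $S$ is decreasing in $|\xi|$, we have $\inf_{\xi\in L}S(\xi)\ge \sup_{\xi\in H}S(\xi)$, so $\mathrm{SNR}_t$ on $L$ dominates $\mathrm{SNR}_t$ on $H$ uniformly in $t$. Second, define the ``recovery time'' of a band as the largest $t$ at which its average posterior uncertainty (equivalently, the MMSE of estimating $\widehat{\x_0}(\xi)$ from $\widehat{\x_t}(\xi)$, which for a Gaussian prior of variance $S(\xi)$ is $\frac{S(\xi)(1-\bar\alpha_t)}{1-\bar\alpha_t+\bar\alpha_t S(\xi)}$ normalized by $S(\xi)$) first drops below a threshold $\delta$; monotonicity of this expression in $\mathrm{SNR}_t(\xi)$ then gives that the recovery time of $L$ is $\ge$ that of $H$, and strictly so whenever the spectra are strictly separated. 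Third, translate ``earlier recovery time'' into ``higher recovery rate'' by noting $\bar\alpha_t$ is monotone in $t$, so the reverse process visits the low-frequency-resolving noise levels strictly before the high-frequency ones; using Assumption~\ref{assumption: well-trained diffusion} (the denoiser matches the true posterior mean), the generated trajectory inherits this ordering, i.e. low-frequency structure of $\x_0$ is pinned down after fewer reverse steps. I would state the conclusion quantitatively as: for bands with $S_L > S_H$, the fraction of spectral energy recovered to within $\delta$ at step $t$ is a nondecreasing function of $t$ that reaches its target for $L$ at a larger $t$ (earlier in reverse time) than for $H$.

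The main obstacle is making ``recovery rate'' precise enough to be a theorem rather than a heuristic, and pinning down exactly which prior assumption on $S(\xi)$ is invoked. Two sub-issues: (i) the DFT of a single natural image is deterministic, not random, so the clean ``power spectrum'' $S(\xi)$ must be interpreted as an expectation over the data distribution $q(\x)$ (or a smoothed/binned periodogram), and the claim then holds in expectation or with high probability — I would add this as a hypothesis, ideally folding it into Assumption~\ref{assumption: Stochastic Neighbor Embedding} or stating a separate ``natural image spectrum'' assumption that $S$ is monotone nonincreasing in $|\xi|$. (ii) The latent-space caveat: Stable Diffusion denoises in the VAE latent, not pixel space, so strictly the spectral-decay hypothesis should be posited for the latent code (empirically still reasonable since the VAE is roughly spatially local); I would note this and either work in pixel space for the clean statement or assume the latent inherits an analogous decaying spectrum. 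Everything else — unitarity of the DFT, monotonicity of the scalar MMSE formula in $\mathrm{SNR}$, monotonicity of $\bar\alpha_t$ — is routine and I would not belabor it.
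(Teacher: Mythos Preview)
Your proposal is correct and shares the paper's core ingredients: take the Fourier transform of the forward marginal, use that the added Gaussian has a flat spectrum while natural images obey a $1/|\xi|^{\beta}$ power law ($\beta\approx 2$), and conclude that the per-frequency SNR is larger for low frequencies at every noise level. The paper's formalization is simpler than yours: it defines ``recovery rate'' directly as the per-step SNR increment $\Delta\mathrm{SNR}_i^{i+1}(f)=\mathrm{SNR}_i(f)-\mathrm{SNR}_{i+1}(f)$, computes it in closed form as $\frac{\bar\alpha_i-\bar\alpha_{i+1}}{(1-\bar\alpha_i)(1-\bar\alpha_{i+1})}\cdot\frac{\bar\alpha_T|F_0(f)|^2}{|F_\epsilon(f)|^2}\propto 1/f^\beta$, and reads off the monotonicity in $f$. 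Your MMSE/threshold-crossing definition of ``recovery time'' is a more principled operationalization and you are explicit about the hypotheses the paper leaves tacit (spectral decay as an expectation over $q(\x)$, the pixel-vs-latent caveat for Stable Diffusion); these are genuine improvements in rigor, but they do not change the argument's substance. Either formalization rests on the same two facts---isotropy of the noise and the natural-image spectral law---so the proofs are interchangeable.
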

\end{mdframed}

\begin{wrapfigure}{r}{0.5\columnwidth}
  \vspace{-15pt}
    \begin{center}
        \includegraphics[width=\linewidth]{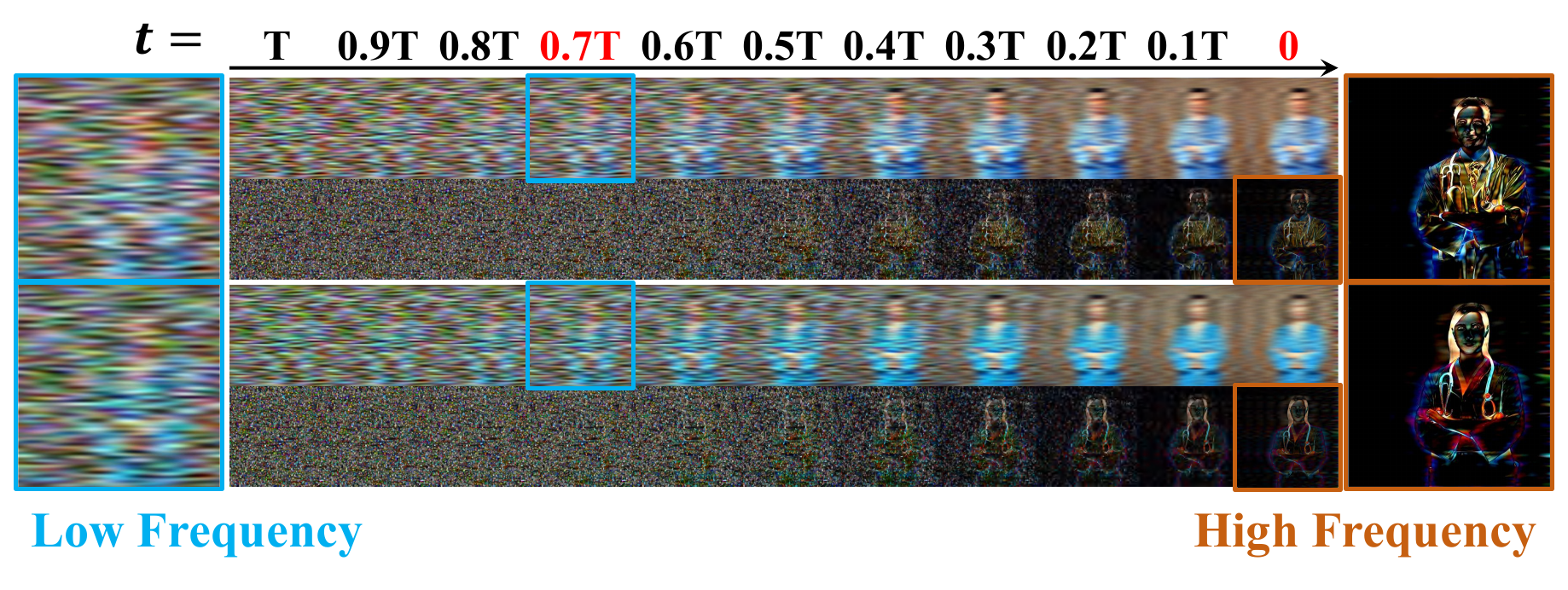}
    \end{center}
    \caption{Results of \textbf{low-pass and high-pass filtering} applied during the denoising process with text guidance for `\texttt{male doctor}' (top two rows) and `\texttt{female doctor}' (bottom two rows). Each pair shows low-pass filtered images on top and high-pass filtered images below. For clarity, some images are enlarged and highlighted on both sides. More images are provided in \Cref{sec: supp_Expanded Version of Filtering Results in the Denoising Process}.}
   \label{fig: filtering}
   \vspace{-15pt}
\end{wrapfigure}

The proof is deferred to \Cref{appsec: proof of Proposition 4.2}. The main word $c$ can be considered as low-frequency information because it describes macroscopic features, while the attribute $a$ represents high-frequency information because it focuses on detailed features. Consequently, \Cref{proposition_4_1} indicates that attribute information always emerges during the later stages of denoising.

As shown in \Cref{fig: filtering}, we visualize the process of progressive denoising from Gaussian noise ($t=T$) to a clear image of a `\texttt{doctor}' ($t=0$). It can be observed that low-frequency information emerges in the early stages of denoising, with the concept of `\texttt{doctor}' gradually taking shape, while high-frequency information remains obscured by noise. At $0.7T$, gender attributes are almost indistinguishable. Only in the later stages of denoising do gender-related attributes, such as hair and facial features, gradually appear. This confirms the correctness of \Cref{proposition_4_1}.

Based on this, \textbf{Insight.2} encapsulates our fine-grained exploration of the diffusion generation process.

\begin{mdframed}[hidealllines=true,backgroundcolor=mylightpurple,innerleftmargin=3pt,innerrightmargin=3pt,leftmargin=-3pt,rightmargin=-3pt]
\begin{insight2}
The diffusion model generates the main word concept in the early denoising stages and the attribute concepts in the later denoising stages.
\end{insight2}
\end{mdframed}

Therefore, as shown in \Cref{fig: overview}(d), we propose a two-stage text-guided sampling strategy. In the early stages of sampling, when attribute-related information is minimal, the output of the original text encoder continues to provide guidance. In the later stages, the fine-tuned text encoder's output directs the generation of images with fair attributes. Specifically, the noise prediction is expressed as:
\begin{equation}
\epsilon_{\theta}(P,\z_t, t)=
\begin{cases}
\epsilon_{\theta}(\textcolor{red!67}{f^t_{orig}}(P),\z_t,t)\ ,t\geq\tau \\
\epsilon_{\theta}(\textcolor{red!67}{f^t_{new}}(P),\z_t,t)\ ,t<\tau & 
\end{cases},
\end{equation}
where $\tau$ represents the optimal switching time for the text encoder. This strategy introduces almost no computational burden, making it lightweight as well. \textbf{Overall, \Cref{sec: supp_Algorithm of lightfiar} provides the pseudo-code for \texttt{LightFair}.}
\section{Experiments}
\label{sec: experiments}

\begin{table}[t!]
  \centering
  \renewcommand\arraystretch{1}
  \caption{Selected representative quantitative results on gender and race attributes. The champion and the runner-up are highlighted in \textbf{bold} and \underline{underline}. Complete results are provided in \Cref{tab: Expanded Quantitative results} and \ref{tab: new evaluation metrics}.}
  \resizebox{\linewidth}{!}{
    \begin{tabular}{c|cc|cccccc|cc|cccccc}
\cmidrule{1-17}
\multirow{3}[4]{*}{\textbf{Method}} & \multicolumn{8}{c|}{\textbf{Gender}} & \multicolumn{8}{c}{\textbf{Race}} \\
\cmidrule{2-17}       & \multicolumn{2}{c|}{\textbf{Fairness}} & \multicolumn{6}{c|}{\textbf{Quality}} & \multicolumn{2}{c|}{\textbf{Fairness}} & \multicolumn{6}{c}{\textbf{Quality}} \\
           & \textbf{Bias-O $\downarrow$} & \textbf{Bias-Q $\downarrow$} & \textbf{CLIP-T $\uparrow$} & \textbf{CLIP-I $\uparrow$} & \textbf{FID $\downarrow$} & \textbf{IS $\uparrow$} & \textbf{AS-R $\uparrow$} & \textbf{AS-A $\downarrow$} & \textbf{Bias-O $\downarrow$} & \textbf{Bias-Q $\downarrow$} & \textbf{CLIP-T $\uparrow$} & \textbf{CLIP-I $\uparrow$} & \textbf{FID $\downarrow$} & \textbf{IS $\uparrow$} & \textbf{AS-R $\uparrow$} & \textbf{AS-A $\downarrow$} \\
\cmidrule{1-17}         \multicolumn{17}{c}{\textbf{Stable Diffusion v1.5}} \\
\cmidrule{1-17}
SD~\cite{rombach2022high} & 0.73 \textsubscript{(±0.05)} & 1.90 \textsubscript{(±0.67)} & 29.31 \textsubscript{(±0.06)} & - & 275.13 \textsubscript{(±6.75)} & 1.26 \textsubscript{(±0.03)} & \cellcolor[rgb]{ .886,  .941,  .855}\underline{4.78} \textsubscript{(±0.08)} & 2.65 \textsubscript{(±0.04)} & 0.54 \textsubscript{(±0.02)} & 1.60 \textsubscript{(±0.67)} & 29.31 \textsubscript{(±0.06)} & - & 275.13 \textsubscript{(±6.75)} & 1.26 \textsubscript{(±0.03)} & \cellcolor[rgb]{ .886,  .941,  .855}\underline{4.78} \textsubscript{(±0.08)} & \cellcolor[rgb]{ .886,  .941,  .855}\underline{2.65} \textsubscript{(±0.04)} \\
FairD~\cite{friedrich2023fair} & 0.79 \textsubscript{(±0.04)} & 3.25 \textsubscript{(±1.15)} & 28.79 \textsubscript{(±0.11)} & 75.91 \textsubscript{(±0.56)} & \cellcolor[rgb]{ .886,  .941,  .855}\underline{269.62} \textsubscript{(±4.42)} & \cellcolor[rgb]{ .918,  .835,  1}\textbf{1.30} \textsubscript{(±0.03)} & 4.57 \textsubscript{(±0.09)} & 2.82 \textsubscript{(±0.05)} & 0.50 \textsubscript{(±0.02)} & 1.50 \textsubscript{(±0.38)} & 28.95 \textsubscript{(±0.10)} & 74.33 \textsubscript{(±0.68)} & \cellcolor[rgb]{ .886,  .941,  .855}\underline{262.72} \textsubscript{(±4.84)} & 1.28 \textsubscript{(±0.03)} & 4.55 \textsubscript{(±0.08)} & 2.83 \textsubscript{(±0.06)} \\
UCE~\cite{gandikota2024unified} & 0.78 \textsubscript{(±0.07)} & 1.79 \textsubscript{(±0.46)} & 28.91 \textsubscript{(±0.13)} & \cellcolor[rgb]{ .918,  .835,  1}\textbf{82.72} \textsubscript{(±0.81)} & 273.95 \textsubscript{(±5.53)} & 1.26 \textsubscript{(±0.03)} & 4.71 \textsubscript{(±0.09)} & \cellcolor[rgb]{ .886,  .941,  .855}\underline{2.64} \textsubscript{(±0.04)} & 0.44 \textsubscript{(±0.03)} & 1.40 \textsubscript{(±0.24)} & 29.13 \textsubscript{(±0.14)} & \cellcolor[rgb]{ .918,  .835,  1}\textbf{90.15} \textsubscript{(±0.70)} & 281.16 \textsubscript{(±5.18)} & 1.26 \textsubscript{(±0.02)} & 4.76 \textsubscript{(±0.08)} & 2.69 \textsubscript{(±0.05)} \\
FinetuneFD~\cite{shenfinetuning} & \cellcolor[rgb]{ .886,  .941,  .855}\underline{0.38} \textsubscript{(±0.07)} & 2.31 \textsubscript{(±0.35)} & \cellcolor[rgb]{ .886,  .941,  .855}\underline{29.34} \textsubscript{(±0.13)} & 76.17 \textsubscript{(±0.68)} & 278.21 \textsubscript{(±7.53)} & 1.24 \textsubscript{(±0.02)} & 4.38 \textsubscript{(±0.06)} & 2.86 \textsubscript{(±0.04)} & \cellcolor[rgb]{ .886,  .941,  .855}\underline{0.20} \textsubscript{(±0.03)} & 1.41 \textsubscript{(±0.23)} & 29.02 \textsubscript{(±0.15)} & 74.57 \textsubscript{(±0.53)} & 270.09 \textsubscript{(±5.99)} & 1.26 \textsubscript{(±0.02)} & 4.33 \textsubscript{(±0.06)} & 2.87 \textsubscript{(±0.05)} \\
FairMapping~\cite{li2023fair} & 0.46 \textsubscript{(±0.05)} & 2.16 \textsubscript{(±0.72)} & 29.30 \textsubscript{(±0.16)} & 76.00 \textsubscript{(±0.66)} & 278.81 \textsubscript{(±5.84)} & 1.26 \textsubscript{(±0.02)} & 4.34 \textsubscript{(±0.07)} & 2.90 \textsubscript{(±0.03)} & 0.34 \textsubscript{(±0.02)} & 1.75 \textsubscript{(±0.47)} & 29.29 \textsubscript{(±0.15)} & 76.54 \textsubscript{(±0.71)} & 280.95 \textsubscript{(±5.02)} & 1.26 \textsubscript{(±0.03)} & 4.53 \textsubscript{(±0.08)} & 2.80 \textsubscript{(±0.05)} \\
BalancingAct~\cite{parihar2024balancing} & 0.41 \textsubscript{(±0.05)} & \cellcolor[rgb]{ .886,  .941,  .855}\underline{1.70} \textsubscript{(±0.55)} & 29.30 \textsubscript{(±0.11)} & 77.37 \textsubscript{(±0.64)} & 272.08 \textsubscript{(±5.16)} & \cellcolor[rgb]{ .886,  .941,  .855}\underline{1.28} \textsubscript{(±0.02)} & 4.71 \textsubscript{(±0.06)} & 2.68 \textsubscript{(±0.04)} & 0.34 \textsubscript{(±0.02)} & \cellcolor[rgb]{ .886,  .941,  .855}\underline{1.13} \textsubscript{(±0.36)} & \cellcolor[rgb]{ .886,  .941,  .855}\underline{29.34} \textsubscript{(±0.11)} & 77.44 \textsubscript{(±0.72)} & 271.91 \textsubscript{(±5.35)} & \cellcolor[rgb]{ .886,  .941,  .855}\underline{1.29} \textsubscript{(±0.03)} & 4.72 \textsubscript{(±0.10)} & 2.66 \textsubscript{(±0.04)} \\
\texttt{LightFair} (Ours) & \cellcolor[rgb]{ .918,  .835,  1}\textbf{0.30} \textsubscript{(±0.08)} & \cellcolor[rgb]{ .918,  .835,  1}\textbf{0.99} \textsubscript{(±0.55)} & \cellcolor[rgb]{ .918,  .835,  1}\textbf{30.57} \textsubscript{(±0.16)} & \cellcolor[rgb]{ .886,  .941,  .855}\underline{80.09} \textsubscript{(±0.76)} & \cellcolor[rgb]{ .918,  .835,  1}\textbf{233.53} \textsubscript{(±5.50)} & \cellcolor[rgb]{ .918,  .835,  1}\textbf{1.30} \textsubscript{(±0.03)} & \cellcolor[rgb]{ .918,  .835,  1}\textbf{4.79} \textsubscript{(±0.08)} & \cellcolor[rgb]{ .918,  .835,  1}\textbf{2.60} \textsubscript{(±0.04)} & \cellcolor[rgb]{ .918,  .835,  1}\textbf{0.18} \textsubscript{(±0.04)} & \cellcolor[rgb]{ .918,  .835,  1}\textbf{1.06} \textsubscript{(±0.43)} & \cellcolor[rgb]{ .918,  .835,  1}\textbf{31.34} \textsubscript{(±0.20)} & \cellcolor[rgb]{ .886,  .941,  .855}\underline{86.31} \textsubscript{(±0.70)} & \cellcolor[rgb]{ .918,  .835,  1}\textbf{259.96} \textsubscript{(±7.75)} & \cellcolor[rgb]{ .918,  .835,  1}\textbf{1.33} \textsubscript{(±0.03)} & \cellcolor[rgb]{ .918,  .835,  1}\textbf{4.80} \textsubscript{(±0.10)} & \cellcolor[rgb]{ .918,  .835,  1}\textbf{2.55} \textsubscript{(±0.04)} \\
\cmidrule{1-17}         \multicolumn{17}{c}{\textbf{Stable Diffusion v2.1}} \\
\cmidrule{1-17}
SD~\cite{rombach2022high} & 0.85 \textsubscript{(±0.05)} & 1.84 \textsubscript{(±0.63)} & \cellcolor[rgb]{ .886,  .941,  .855}\underline{29.90} \textsubscript{(±0.15)} & - & 259.36 \textsubscript{(±4.81)} & 1.23 \textsubscript{(±0.03)} & \cellcolor[rgb]{ .886,  .941,  .855}\underline{5.12} \textsubscript{(±0.05)} & \cellcolor[rgb]{ .918,  .835,  1}\textbf{2.24} \textsubscript{(±0.03)} & 0.63 \textsubscript{(±0.01)} & 2.06 \textsubscript{(±0.35)} & \cellcolor[rgb]{ .886,  .941,  .855}\underline{29.90} \textsubscript{(±0.15)} & - & 259.36 \textsubscript{(±4.81)} & 1.23 \textsubscript{(±0.03)} & 5.12 \textsubscript{(±0.05)} & \cellcolor[rgb]{ .886,  .941,  .855}\underline{2.24} \textsubscript{(±0.03)} \\
debias VL~\cite{chuang2023debiasing} & \cellcolor[rgb]{ .886,  .941,  .855}\underline{0.43} \textsubscript{(±0.09)} & \cellcolor[rgb]{ .886,  .941,  .855}\underline{1.44} \textsubscript{(±0.48)} & 28.20 \textsubscript{(±0.22)} & 70.01 \textsubscript{(±0.96)} & \cellcolor[rgb]{ .886,  .941,  .855}\underline{245.11} \textsubscript{(±3.72)} & \cellcolor[rgb]{ .886,  .941,  .855}\underline{1.35} \textsubscript{(±0.03)} & 3.53 \textsubscript{(±0.11)} & 2.93 \textsubscript{(±0.06)} & \cellcolor[rgb]{ .886,  .941,  .855}\underline{0.49} \textsubscript{(±0.03)} & \cellcolor[rgb]{ .886,  .941,  .855}\underline{1.91} \textsubscript{(±0.92)} & 28.15 \textsubscript{(±0.26)} & 67.42 \textsubscript{(±0.96)} & \cellcolor[rgb]{ .886,  .941,  .855}\underline{242.78} \textsubscript{(±4.21)} & \cellcolor[rgb]{ .886,  .941,  .855}\underline{1.33} \textsubscript{(±0.03)} & 3.57 \textsubscript{(±0.11)} & 2.85 \textsubscript{(±0.06)} \\
UCE~\cite{gandikota2024unified} & 0.90 \textsubscript{(±0.04)} & 1.67 \textsubscript{(±0.71)} & 29.41 \textsubscript{(±0.13)} & \cellcolor[rgb]{ .918,  .835,  1}\textbf{87.94} \textsubscript{(±0.86)} & 268.52 \textsubscript{(±3.92)} & 1.22 \textsubscript{(±0.02)} & \cellcolor[rgb]{ .886,  .941,  .855}\underline{5.12} \textsubscript{(±0.05)} & 2.32 \textsubscript{(±0.03)} & 0.50 \textsubscript{(±0.03)} & 1.95 \textsubscript{(±0.37)} & 29.44 \textsubscript{(±0.12)} & \cellcolor[rgb]{ .918,  .835,  1}\textbf{80.46} \textsubscript{(±1.13)} & 250.57 \textsubscript{(±4.49)} & 1.23 \textsubscript{(±0.03)} & \cellcolor[rgb]{ .886,  .941,  .855}\underline{5.17} \textsubscript{(±0.08)} & 2.25 \textsubscript{(±0.03)} \\
\texttt{LightFair} (Ours) & \cellcolor[rgb]{ .918,  .835,  1}\textbf{0.33} \textsubscript{(±0.10)} & \cellcolor[rgb]{ .918,  .835,  1}\textbf{1.40} \textsubscript{(±0.28)} & \cellcolor[rgb]{ .918,  .835,  1}\textbf{30.82} \textsubscript{(±0.19)} & \cellcolor[rgb]{ .886,  .941,  .855}\underline{75.29} \textsubscript{(±0.99)} & \cellcolor[rgb]{ .918,  .835,  1}\textbf{231.46} \textsubscript{(±3.30)} & \cellcolor[rgb]{ .918,  .835,  1}\textbf{1.35} \textsubscript{(±0.02)} & \cellcolor[rgb]{ .918,  .835,  1}\textbf{5.14} \textsubscript{(±0.09)} & \cellcolor[rgb]{ .886,  .941,  .855}\underline{2.24} \textsubscript{(±0.06)} & \cellcolor[rgb]{ .918,  .835,  1}\textbf{0.40} \textsubscript{(±0.03)} & \cellcolor[rgb]{ .918,  .835,  1}\textbf{1.82} \textsubscript{(±0.44)} & \cellcolor[rgb]{ .918,  .835,  1}\textbf{30.26} \textsubscript{(±0.16)} & \cellcolor[rgb]{ .886,  .941,  .855}\underline{77.47} \textsubscript{(±1.05)} & \cellcolor[rgb]{ .918,  .835,  1}\textbf{230.59} \textsubscript{(±6.53)} & \cellcolor[rgb]{ .918,  .835,  1}\textbf{1.35} \textsubscript{(±0.01)} & \cellcolor[rgb]{ .918,  .835,  1}\textbf{5.29} \textsubscript{(±0.11)} & \cellcolor[rgb]{ .918,  .835,  1}\textbf{2.14} \textsubscript{(±0.06)} \\
\cmidrule{1-17}
\end{tabular}}
\label{tab: Quantitative results}
\end{table}

\subsection{Experimental Setups}
\label{subsec: Experimental Setups}

We apply our method to SD v1.5 and v2.1 to mitigate gender and racial biases. For gender, we consider `\texttt{Male}' and `\texttt{Female}' attributes. For racial, we include `\texttt{White}', `\texttt{Black}', and `\texttt{Asian}' attributes. We use the prompt template ``\texttt{Photo portrait of a/an \{occupation\}, a person}'', where the occupation is taken from \cite{friedrich2023fair}. We generate $100$ images per prompt, repeat evaluation $5$ times, and report the mean and variance across $2$ fairness and $6$ quality metrics. We compare our method against $16$ recent advances in fair T2I diffusion. Detailed introductions are deferred to \Cref{sec: supp_Additional Experiment Settings}.

\subsection{Overall Performance}
\label{subsec: Overall Performance}

\textbf{Quantitative Analysis.} \textit{Due to space limitations, \Cref{tab: Quantitative results} presents results for $7$ representative quantitative comparisons. The full experimental results, including comparisons with $16$ baseline methods and evaluations using $2$ additional metrics, are provided in \Cref{subsec: supp_Expanded Version of Quantitative Results}.} Based on \Cref{tab: Quantitative results}, we draw the following conclusions: First, the Stable Diffusion, whether v1.5 or v2.1, displays strong attribute biases. Specifically, both gender and race biases exceed $0.5$, with SD v2.1 exhibiting a particularly high gender bias of $0.85$. Second, current debiasing methods provide limited improvement and, in some cases, worsen the biases, as observed with FairD. This may occur because over-correction shifts the model’s bias from one attribute to another. Additionally, some methods, while reducing odds bias, negatively affect quality fairness. For example, FinetuneFD lowers Bias-O but increases Bias-Q. Our method focuses on the key structure contributing to attribute bias while preserving quality fairness in the generated content. It successfully debiases multiple versions of SD. For instance, for SD v1.5, it reduces gender/race biases by $0.43$/$0.36$, while ensuring that quality biases are reduced by $0.91$/$0.54$. We note that our method outperforms all competitors in terms of generation quality, except for the CLIP-I metric. Since debiasing alters certain image attributes, a decline in this metric is an expected trade-off. Nonetheless, our method still ranks as the runner-up, demonstrating its effectiveness. Finally, the time and space complexity analysis provided in \Cref{fig: Intro Time and Spatial Complexity} highlights the lightweight nature of our \texttt{LightFair}.

\textbf{Qualitative Analysis.} \Cref{fig: Qualitative} presents the qualitative results of our debiased SD. The original SD shows a tendency to generate male CEOs and white doctors, marginalizing other identities. In contrast, our debiased SD significantly improves the representation of minorities while preserving the original image layout and details. Additional qualitative results are provided in \Cref{subsec: supp_Expanded Version of Qualitative Results}.

\textbf{Generalization to diverse prompts.} In \Cref{tab: diverse prompts results}, we further explore the effectiveness of our method across a broader range of prompts. For non-templated prompts, we conduct experiments on $30$ occupation-related prompts from the LAION-Aesthetics V2 dataset~\cite{schuhmann2022laion} (see \Cref{subsec: supp_Prompts from LAION-Aesthetics V2}). For scenarios involving multiple people, we consider prompts such as ``\texttt{Photo portrait of two/three \{occupation\}, two/three people}''. The results show that our method is equally effective across diverse prompts, demonstrating its scalability. The qualitative results are provided in \Cref{subsec: supp_Qualitative Results on Diverse Prompts}.

\textbf{Generalization to diverse attributes.} In \Cref{subsec: supp_Results of Mitigating Gender Race Bias} and \Cref{subsec: supp_Results of Mitigating Age Bias}, we explore the results of debiasing on the cross-attribute Gender$\times$Race and the Age attribute. The results demonstrate that our method can generalize to other attributes and multi-attribute debiasing scenarios.

\textbf{Generalization to diverse target distributions}. In \Cref{subsec: supp_Results of Debiasing under Diverse Target Distributions}, we explore the effectiveness of debiasing under imbalanced target distributions. The results demonstrate that our method can adapt to diverse target distributions by tuning only a single hyperparameter.

\begin{table}[!t]
  \begin{minipage}[t]{0.54\linewidth}
    \centering
    \renewcommand\arraystretch{1}
    \caption{Expansion of diverse prompts in gender-debiased SD.}
    \resizebox{\linewidth}{!}{
      \begin{tabular}{c|c|ccc|ccc}
      \toprule
      \multirow{2}[1]{*}{\textbf{Prompt}} & \multirow{2}[1]{*}{\textbf{Method}} & \multicolumn{3}{c|}{\textbf{Stable Diffusion v1.5}} & \multicolumn{3}{c}{\textbf{Stable Diffusion v2.1}} \\
            &       & \textbf{Bias-O $\downarrow$} & \textbf{Bias-Q $\downarrow$} & \textbf{CLIP-T $\uparrow$} & \textbf{Bias-O $\downarrow$} & \textbf{Bias-Q $\downarrow$} & \textbf{CLIP-T $\uparrow$} \\
      \midrule
      \multirow{2}[0]{*}{Non-templated} & SD    & 0.61 \textsubscript{(±0.25)} & 1.32 \textsubscript{(±0.19)} & 32.06 \textsubscript{(±1.65)} & 0.46 \textsubscript{(±0.19)} & 1.43 \textsubscript{(±0.24)} & 32.02 \textsubscript{(±2.04)} \\
            & Ours  & \textbf{0.48} \textsubscript{(±0.27)} & \textbf{1.02} \textsubscript{(±0.15)} & \textbf{32.62} \textsubscript{(±2.02)} & \textbf{0.34} \textsubscript{(±0.23)} & \textbf{1.13} \textsubscript{(±0.15)} & \textbf{32.77} \textsubscript{(±1.99)} \\
      \midrule
      \multirow{2}[1]{*}{Two People} & SD    & 0.35 \textsubscript{(±0.04)} & 1.23 \textsubscript{(±0.23)} & 30.46 \textsubscript{(±0.14)} & 0.65 \textsubscript{(±0.03)} & 1.76 \textsubscript{(±0.22)} & 32.32 \textsubscript{(±0.17)} \\
            & Ours  & \textbf{0.13} \textsubscript{(±0.04)} & \textbf{0.89} \textsubscript{(±0.12)} & \textbf{30.90} \textsubscript{(±0.22)} & \textbf{0.54} \textsubscript{(±0.05)} & \textbf{1.11} \textsubscript{(±0.13)} & \textbf{32.50} \textsubscript{(±0.25)} \\
      \midrule
      \multirow{2}[2]{*}{Three People} & SD    & 0.46 \textsubscript{(±0.05)} & 1.77 \textsubscript{(±0.31)} & 31.17 \textsubscript{(±0.18)} & 0.70 \textsubscript{(±0.03)} & 2.01 \textsubscript{(±0.42)} & 32.99 \textsubscript{(±0.18)} \\
            & Ours  & \textbf{0.30} \textsubscript{(±0.04)} & \textbf{1.05} \textsubscript{(±0.20)} & \textbf{32.49} \textsubscript{(±0.04)} & \textbf{0.62} \textsubscript{(±0.04)} & \textbf{1.43} \textsubscript{(±0.21)} & \textbf{33.89} \textsubscript{(±0.25)} \\
      \bottomrule
      \end{tabular}
    }
    \label{tab: diverse prompts results}
  \end{minipage}%
  \hfill
  \begin{minipage}[t]{0.41\linewidth}
    \centering
    \renewcommand\arraystretch{1}
    \caption{Ablation study on the effectiveness of different modules.}
    \resizebox{\linewidth}{!}{
      \begin{tabular}{c|c|c|c|ccc}
      \toprule
      \textbf{$\ell_o$} & \textbf{$\ell_q$} & \textbf{$\ell_{reg}$} & \textbf{AFE} & \textbf{Bias-O $\downarrow$} & \textbf{Bias-Q $\downarrow$} & \textbf{CLIP-T $\uparrow$} \\
      \midrule
            &       &       &       & 0.70  \textsubscript{(±0.03)} & 1.15  \textsubscript{(±0.49)} & 30.34  \textsubscript{(±0.08)} \\
      \checkmark &       &       &       & 0.67  \textsubscript{(±0.08)} & 1.07  \textsubscript{(±0.46)} & 30.45  \textsubscript{(±0.11)} \\
      \checkmark & \checkmark &       &       & 0.56  \textsubscript{(±0.08)} & 0.93  \textsubscript{(±0.46)} & 30.51  \textsubscript{(±0.12)} \\
      \checkmark &       & \checkmark &       & 0.49  \textsubscript{(±0.06)} & 1.12  \textsubscript{(±0.73)} & 28.38  \textsubscript{(±0.11)} \\
      \checkmark & \checkmark & \checkmark &       & 0.45  \textsubscript{(±0.09)} & 0.88  \textsubscript{(±0.51)} & 30.29  \textsubscript{(±0.44)} \\
      \checkmark & \checkmark & \checkmark & \checkmark & \textbf{0.34}  \textsubscript{(±0.10)} & \textbf{0.81}  \textsubscript{(±0.65)} & \textbf{32.19}  \textsubscript{(±0.07)} \\
      \bottomrule
      \end{tabular}
    }
    \label{tab: Effectiveness of Different Modules}
  \end{minipage}
\end{table}

\subsection{Ablation Study}
\label{sebsec: Ablation Study}
We perform several ablation studies to test the effectiveness of different modules and hyperparameters. All experiments are conducted in gender-debiased Stable Diffusion v1.5.

\textbf{The Effectiveness of Different Modules.} \Cref{tab: Effectiveness of Different Modules} presents our step-by-step ablation study on the three loss functions and Adaptive Foreground Extraction (AFE) mechanism for foreground extraction. Compared to the baseline, $\ell_o$ reduces Bias-O, while $\ell_q$ reduces Bias-Q. However, the improvements remain limited due to overfitting. The regularization loss $\ell_{reg}$ prevents the model from deviating excessively from the original semantics, resulting in a reduction of Bias-O and Bias-Q by $0.25$ and $0.27$, respectively. Additionally, AFE enhances semantic information extraction from the foreground, further reducing Bias-O and Bias-Q while maintaining the quality of the generated images. \Cref{fig: cross_attention} illustrates qualitative results achieved with AFE. \Cref{subsec: supp_Ablation Study Results on the Impact of Adaptive Foreground Extraction} provides additional ablation experiments evaluating AFE under complex background conditions.

\textbf{Ablation Study on Hyper-Parameters.} \Cref{fig: lambda_1} ablates the hyperparameter $\lambda_1$, which sets the weight of $\ell_q$. The sequence $\lambda_1=5\to 2\to 1\to0.5$ forms a smooth downward curve. But $\lambda_1=10$ and $0.1$ deviate from the main pattern due to over- and under-regularization. The optimal value is $\lambda_1 = 1$. A small $\lambda_1$ reduces constraints on equalized quality, increasing Bias-Q. In contrast, a large $\lambda_1$ causes model overfitting, worsening Bias-O. \Cref{fig: lambda_2} ablates the hyperparameter $\lambda_2$, which adjusts the weight of $\ell_{reg}$. The sequence $\lambda_2=1\to 0.5\to 0.2\to0.1$ creates a consistent slope, while $\lambda_2=0.05$ and $0.01$ fall outside the stable range. The best value is $\lambda_2 = 0.1$. If $\lambda_2$ is too small, excessive parameter changes lower the generation quality. If $\lambda_2$ is too large, the model has difficulty converging, reducing the effectiveness of debiasing. For additional ablation studies on hyperparameters, see \Cref{subsec: supp_Ablation Study Results on the Impact of Batch Size and Training Epochs}.

\textbf{Different $\tau$ Values During Inference.} \Cref{fig: ablation_tau} illustrates the effect of different $\tau$ values during sampling. Since the early denoising stages primarily capture non-attribute information, choosing $\tau \in (\frac{3}{4}T, T]$ ensures a lower Bias-O. In contrast, during the later stages, attribute features have already formed and are difficult to reverse, resulting in Bias-O values comparable to those observed without the debias model. However, intervening too early can degrade the quality of generated images, as evidenced by irrelevant semantic artifacts (\textit{e.g.}, extraneous objects in the bottom-right corner of the image in \Cref{fig: ablation_tau}(a)). Based on experimental results, we recommend setting $\tau = \frac{3}{4}T$.

\begin{figure}[h!]
    \centering
    \begin{subfigure}[b]{\textwidth}
        \includegraphics[width=\textwidth]{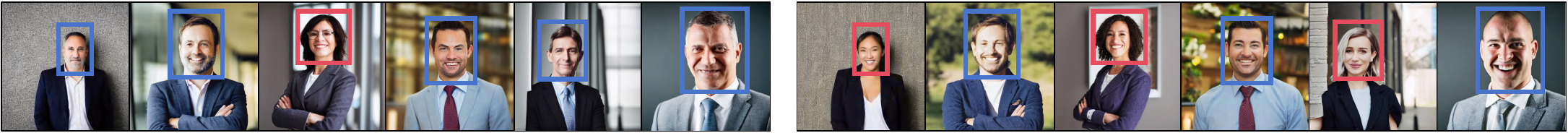}
        \caption{Prompt: ``Photo portrait of a \textbf{CEO}, a person''. Left: original SD v1.5. Right: our \textbf{gender}-debiased \textbf{SD v1.5}.}
        \label{fig: Qualitative_1}
    \end{subfigure}
    \hfill
    \begin{subfigure}[b]{\textwidth}
        \includegraphics[width=\textwidth]{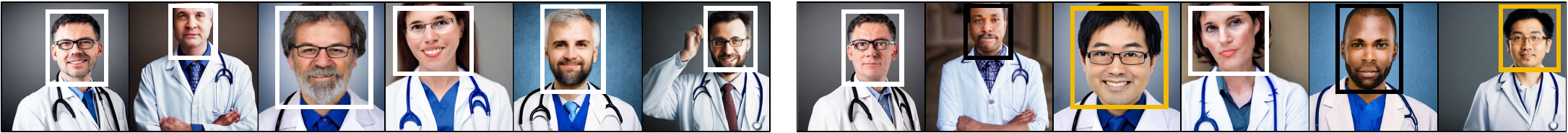}
        \caption{Prompt: ``Photo portrait of a \textbf{doctor}, a person''. Left: original SD v2.1. Right: our \textbf{race}-debiased \textbf{SD v2.1}.}
        \label{fig: Qualitative_2}
    \end{subfigure}
    
    \caption{Qualitative results. Images generated by the original SD (left) and our debiased SD (right). For the same prompt, the images in corresponding positions are generated using the same random noise. Bounding boxes denote detected faces (Gender: \textcolor[HTML]{4874CB}{Male}, \textcolor[HTML]{E54C5E}{Female}; Race: \textcolor[HTML]{D9D9D9}{White}, \textcolor[HTML]{F2BA02}{Asian}, \textcolor[HTML]{000000}{Black}). More images are provided in \Cref{fig: Expanded_Qualitative_1}, \Cref{fig: Expanded_Qualitative_2}, \Cref{fig: competitors_gender} and \Cref{fig: competitors_race}.}
    \label{fig: Qualitative}
\end{figure}

\begin{figure}[t]
  \centering
  \begin{minipage}[b]{0.49\textwidth}
    \centering
    \begin{subfigure}{0.49\linewidth}
      \includegraphics[width=\linewidth]{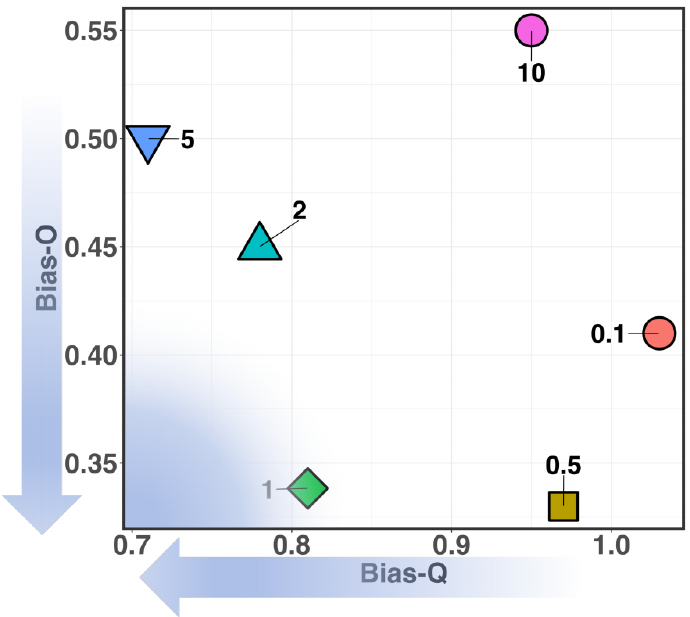}
      \caption{$\lambda_1$}
      \label{fig: lambda_1}
    \end{subfigure}
    \hfill
    \begin{subfigure}{0.49\linewidth}
      \includegraphics[width=\linewidth]{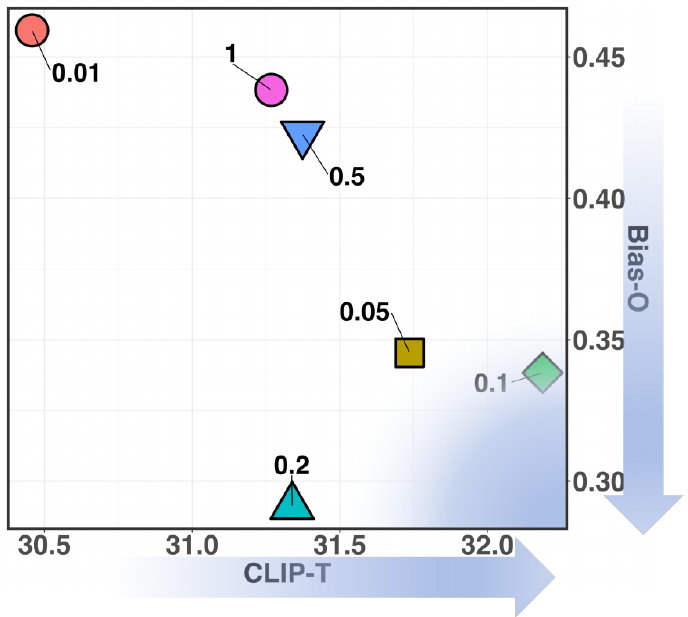}
      \caption{$\lambda_2$}
      \label{fig: lambda_2}
    \end{subfigure}
    \caption{Ablation Study on Hyper-Parameters.}
    \label{fig:ablation_lambda}
  \end{minipage}%
  \hfill
  \begin{minipage}[b]{0.49\textwidth}
    \centering
    \includegraphics[width=0.9\linewidth]{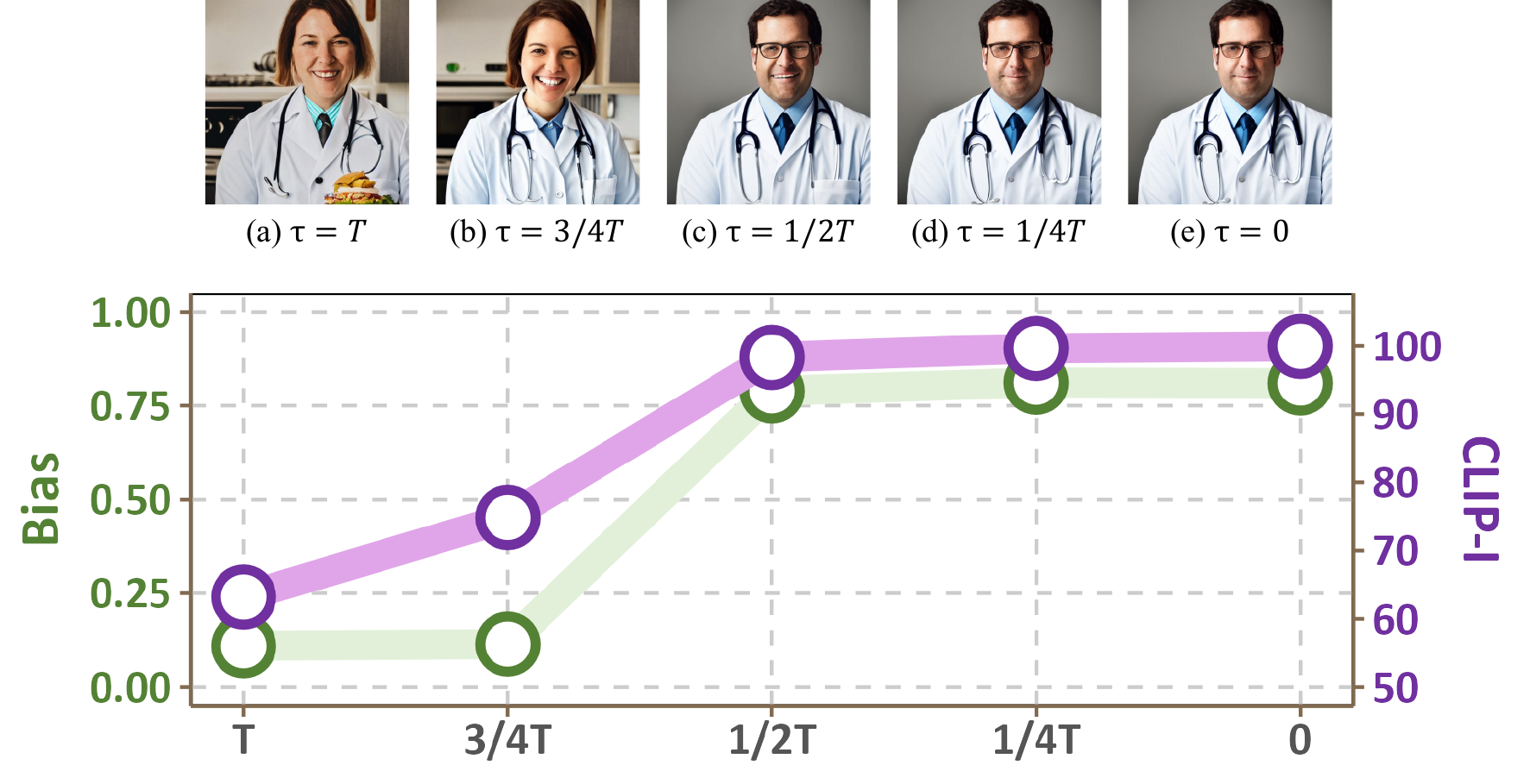}
    \caption{Visualization and performance of image generation with different $\tau$ values.}
    \label{fig: ablation_tau}
  \end{minipage}
\end{figure}

\subsection{Further Exploration}
\label{subsec: Further Exploration}
In \Cref{subsec: supp_Result of Collaborating with Other Debiasing Methods}, we show that \texttt{LightFair} can serve as a plug-in alongside other debiasing methods. \Cref{subsec: supp_Results of SD Models Based on the DiT Architecture} further shows that \texttt{LightFair} is equally applicable to SD models built on the DiT architecture. \Cref{subsec: supp_Results of User Studies} presents user studies indicating that our method delivers a superior user experience. \Cref{subsec: supp_Evaluation on Prompts with Attribute} verifies that our debiasing preserves the model's semantic understanding of original attributes. Moreover, \Cref{subsec: supp_Evaluaton on General Prompts} confirms it does not affect generation on general prompts.
\section{Conclusion}
\label{sec: conclusion}
This paper explores a novel lightweight approach, named \texttt{LightFair}, to achieve fairness in T2I DMs. First, we reveal the text encoder's adverse effects on fairness. Then, we propose a collaborative distance-constrained debiasing strategy that achieves equalized odds and equalized quality without relying on auxiliary networks. Next, we introduce a two-stage text-guided sampling strategy. It applies the debiased text encoder only during later sampling stages, which preserves the original model’s fidelity. Finally, extensive experiments confirm the effectiveness of our \texttt{LightFair}.

\section*{Acknowledgments}
\label{sec: Acknowledgments}
This work was supported in part by National Natural Science Foundation of China: 62525212, 62236008, 62441232, U21B2038, U23B2051, 62502496, 62206264 and 92370102, in part by Youth Innovation Promotion Association CAS, in part by the Strategic Priority Research Program of the Chinese Academy of Sciences, Grant No. XDB0680201, in part by the China Postdoctoral Science Foundation (CPSF) under Grant No.2025M771492, and in part by the Postdoctoral Fellowship Program of CPSF under Grant No. GZB20240729.

\newpage

{\small
\balance
\bibliographystyle{plain}
\bibliography{neurips_2025}
}

\newpage
\appendix
\onecolumn

\definecolor{blue}{RGB}{0,20,115}
\textcolor{white}{dasdsa}
\section*{\textcolor{blue}{\Large{Contents}}}
\setcounter{tocdepth}{2}  
\titlecontents{section}[0em]{\color{blue}\bfseries}{\thecontentslabel. }{}{\hfill\contentspage}

\titlecontents{subsection}[1.5em]{\color{blue}}{\thecontentslabel. }{}{\titlerule*[0.75em]{.}\contentspage} 

\startcontents[sections]
\printcontents[sections]{l}{1}{}

\newpage

\section{Symbol Definitions}
\label{sec: supp_Symbol Definitions}

In this section, \Cref{tab:symbols} includes a summary of key notations and descriptions in this work.

\begin{table}[htbp]
    \centering
    \renewcommand{\arraystretch}{1.0}
    \caption{A summary of key notations and descriptions in this work.}
    \resizebox{0.6\linewidth}{!}{
    \begin{tabular}{ll}
        \toprule \textbf{Notations}                                  & \textbf{Descriptions}                                                \\
        \midrule $g^{e}$                                             & Image encoder of the latent diffusion model.                         \\
        $g^{d}$                                                      & Image decoder of the latent diffusion model.                         \\
        $\x_{0}, \cdots, \x_{T}$                                     & Samples of the diffusion model at $t=0, \cdots, T$.                  \\
        $\z_{0}, \cdots, \z_{T}$                                     & Latent space samples of the diffusion model at $t=0, \cdots, T$.     \\
        $\alpha_{t}$                                                 & Hyperparameter controlling the noise level, $\alpha_{t} \in (0, 1)$. \\
        $\overline{\alpha_t}$                                        & $\overline{\alpha_t}= \prod_{i=1}^{t} \alpha_{i}$.                   \\
        $\beta_{t}$                                                  & $\beta_{t} = 1 - \alpha_{t}$.                                        \\
        $f^{t}$                                                      & Text encoder of diffusion model / Text encoder of CLIP.              \\
        $f^{i}$                                                      & Image encoder of CLIP.                                               \\
        $prompt(\cdot)$ / $P(\cdot)$                                 & Prompt used for text-to-image generation.                            \\
        $\epsilon_{\theta}(f^{t}(P), \z_{t}, t)$                     & Noise prediction network of the text-to-image diffusion model.       \\
        $A$                                                          & Attribute set.                                                       \\
        $C$                                                          & Main word set.                                                       \\
        $a$                                                          & Attribute word from the set $A$, $a \in A$.                          \\
        $c$                                                          & Main word from the set $C$, $c \in C$.                               \\
        $M(prompt(\cdot))$                                           & Images generated using $prompt(\cdot)$.                              \\
        $\textcolor{mypurple}{\text{emb}^{T}_{c}}(\cdot)$            & Text embedding, shorthand for $f^{t}(prompt(\cdot, c))$.             \\
        $\textcolor{mygreen}{\text{emb}^{I}_{c}}(\cdot)$             & Image embedding, shorthand for $f^{i}(M(prompt(\cdot, c)))$.         \\
        $\mathbb{E}[\textcolor{mygreen}{\text{emb}^{I}_{c}}(\cdot)]$ & Centroid of the image embeddings.                                    \\
        $\tau$                                                       & The optimal starting point for fine-tuning the text encoder.         \\
        \bottomrule
    \end{tabular}%
    } \label{tab:symbols}
\end{table}

\section{Parameter Counts of Different Components in Stable Diffusion}
\label{appsec: Parameter Counts of Different Components in Stable Diffusion}
\Cref{tab:Parameter Counts} presents the parameter counts of different components in Stable Diffusion, including the CLIP text encoder and U-Net. The results indicate that U-Net has more parameters than the CLIP text encoder. Therefore, fine-tuning the text encoder alone is a lightweight approach.

\begin{table}[htbp]
  \centering
  \renewcommand\arraystretch{1.0}
  \caption{Parameter Counts of Different Components in Stable Diffusion.}
  \resizebox{0.6\linewidth}{!}{
    \begin{tabular}{c|cc}
    \toprule
    \textbf{Method} & {\textbf{CLIP Text Encoder}} & {\textbf{U-Net}} \\
    \midrule
    Stable Diffusion v1.5 & 123.060480 M & 859.520964 M \\
    Stable Diffusion v2.1 & 340.387840 M & 865.910724 M \\
    \bottomrule
    \end{tabular}%
}
\label{tab:Parameter Counts}
\end{table}

\section{Occupation List}
\label{appsec: Occupation List}
We obtain the following $80$ occupations from \cite{friedrich2023fair}, which are used for plotting \Cref{fig: distance}(c).

[\textit{`aerospace engineer', `author', `baker', `bartender', `butcher', `carpenter', `ceo', `childcare worker', `claims appraiser', `cleaner', `coach', `compliance officer', `computer programmer', `computer support specialist', `computer systems analyst', `construction worker', `cook', `correctional officer', `dentist', `designer', `detective', `director', `dispatcher', `doctor', `drywall installer', `electrical engineer', `electrician', `engineer', `event planner', `facilities manager', `file clerk', `financial manager', `firefighter', `head cook', `health technician', `hostess', `industrial engineer', `inventory clerk', `it specialist', `janitor', `lawyer', `logistician', `machinery mechanic', `machinist', `manicurist', `massage therapist', `mechanical engineer', `medical records specialist', `mover', `musician', `network administrator', `nurse', `occupational therapist', `office clerk', `painter', `pilot', `plane mechanic', `plumber', `police officer', `postal worker', `printing press operator', `producer', `programmer', `radiologic technician', `real estate broker', `repair worker', `roofer', `sales manager', `salesperson', `school bus driver', `security guard', `social assistant', `software developer', `supervisor', `teacher', `teaching assistant', `waiter', `web developer', `wholesale buyer', `writer'}]

\section{Proof of Theorem 4.1}
\label{appsec: Proof of Theorem 4.1}

\begin{assumption}[Stochastic Neighbor Embedding~\cite{hinton2002stochastic}]
Let $i$ represent an object and $j$ a potential neighbor. The probability $p_{ij}$ that object $i$ selects $j$ as its neighbor is defined as:
\begin{equation}
p_{ij} = \frac{\exp(-d_{ij}^2)}{\sum_{k \neq i} \exp(-d_{ik}^2)}, \tag{1}
\end{equation}
where the dissimilarities $d_{ij}^2$ are calculated using the scaled squared Euclidean distance between two high-dimensional points $\mathbf{x}_i$ and $\mathbf{x}_j$:
\begin{equation}
d_{ij}^2 = \frac{\|\mathbf{x}_i - \mathbf{x}_j\|^2}{2\sigma_i^2}, \tag{2}
\end{equation}
and $\sigma_i$ represents the variance parameter associated with object $i$.
\label{assumption: Stochastic Neighbor Embedding}
\end{assumption}

\begin{assumption}
The diffusion model is assumed to be well-trained, such that it can correctly generate the content specified by the prompt:
\begin{equation}
\mathbb{P}\big(x \mid prompt(x)\big) = 1.
\end{equation}
\label{assumption: well-trained diffusion}
\end{assumption}

\begin{assumption}
The attribute $a$ and the concept $c$ are assumed to be statistically independent, that is:
\begin{equation}
\mathbb{P}(a, c) = \mathbb{P}(a)\mathbb{P}(c).
\end{equation}
This implies that the attribute $a$ does not provide additional information about the concept $c$, and vice versa.
\label{assumption: a and c independent}
\end{assumption}

\begin{rthm2}
Under \Cref{assumption: Stochastic Neighbor Embedding}, \ref{assumption: well-trained diffusion} and \ref{assumption: a and c independent}, for any attributes $a_i, a_j \in A$, achieving Equalized Odds $\mathbb{P}(a_i | P(\cdot, c)) = \mathbb{P}(a_j | P(\cdot, c))$ is equivalent to ensuring Equalized Distance:
\begin{equation*}
\left\|f(a_i,c)-f^t\big(P(\cdot, c)\big)\right\|^2 = \left\|f(a_j,c)-f^t\big(P(\cdot, c)\big)\right\|^2,
\end{equation*}
where $f(a_i, c)$ represents the encoding of the concepts $a_i$ and $c$, $f^t(\cdot)$ denotes the encoding performed by the CLIP text encoder and $P(\cdot)$ is shorthand for $prompt(\cdot)$.
\end{rthm2}

\begin{proof}
First, since the input of the diffusion denoising process is the encoding of the prompt by the text encoder, \textbf{Equalized Odds} can be reformulated as:
\begin{equation}
\mathbb{P}\Big(a_i \mid f^t\big(P(\cdot, c)\big)\Big) = \mathbb{P}\Big(a_j \mid f^t\big(P(\cdot, c)\big)\Big), \quad \forall a_i, a_j \in A,
\end{equation}
where $f^t(\cdot)$ denotes encoding by the CLIP text encoder, and $P(\cdot)$ is shorthand for $prompt(\cdot)$.

According to \Cref{assumption: well-trained diffusion}, we have:
\begin{equation}
\mathbb{P}\Big(c \mid f^t\big(P(\cdot, c)\big)\Big) = 1.
\end{equation}

Thus, by \Cref{assumption: a and c independent}, for any $a_i, a_j \in A$,
\begin{equation}
\mathbb{P}\Big(a_i,c \mid f^t\big(P(\cdot, c)\big)\Big) = \mathbb{P}\Big(a_j,c \mid f^t\big(P(\cdot, c)\big)\Big).
\label{equ: P_ac}
\end{equation}

Let $f(a, c)$ represent the effective encoding of the concepts $a$ and $c$, \Cref{equ: P_ac} can be approximated as:
\begin{equation}
\mathbb{P}\Big(f(a_i,c) \mid f^t\big(P(\cdot, c)\big)\Big) = \mathbb{P}\Big(f(a_j,c) \mid f^t\big(P(\cdot, c)\big)\Big).
\label{equ: p_equ}
\end{equation}

According to \Cref{assumption: Stochastic Neighbor Embedding} and \cite{deng2024content}, the conditional probability $\mathbb{P}\Big(f(a_i,c) \mid f^t\big(P(\cdot, c)\big)\Big)$ can be represented as the similarity between $f(a_i,c)$ and $f^t\big(P(\cdot, c)\big)$, and can be modeled using a Gaussian distribution. We thus measuring $\mathbb{P}\Big(f(a_i,c) \mid f^t\big(P(\cdot, c)\big)\Big)$ by calculating:
\begin{equation}
\mathbb{P}\Big(f(a_i,c) \mid f^t\big(P(\cdot, c)\big)\Big) = 
\frac{\exp\left(-\frac{\left\|f(a_i,c)-f^t\big(P(\cdot, c)\big)\right\|^2}{2\rho^2}\right)}
{\sum_{a_k \in A} \exp\left(-\frac{\left\|f(a_k,c)-f^t\big(P(\cdot, c)\big)\right\|^2}{2\rho^2}\right)},
\label{equ: p_to_dist}
\end{equation}
where $\rho$ is a constant dependent only on $f^t\big(P(\cdot, c)\big)$, controlling the falloff of $\mathbb{P}$ with respect to distance. Combining \Cref{equ: p_equ} and \Cref{equ: p_to_dist}, we obtain:
\begin{equation}
\frac{\exp\left(-\frac{\left\|f(a_i,c)-f^t\big(P(\cdot, c)\big)\right\|^2}{2\rho^2}\right)}
{\sum_{a_k \in A} \exp\left(-\frac{\left\|f(a_k,c)-f^t\big(P(\cdot, c)\big)\right\|^2}{2\rho^2}\right)} = \frac{\exp\left(-\frac{\left\|f(a_j,c)-f^t\big(P(\cdot, c)\big)\right\|^2}{2\rho^2}\right)}
{\sum_{a_k \in A} \exp\left(-\frac{\left\|f(a_k,c)-f^t\big(P(\cdot, c)\big)\right\|^2}{2\rho^2}\right)}
\end{equation}
\begin{equation}
\exp\left(-\frac{\left\|f(a_i,c)-f^t\big(P(\cdot, c)\big)\right\|^2}{2\rho^2}\right) = \exp\left(-\frac{\left\|f(a_j,c)-f^t\big(P(\cdot, c)\big)\right\|^2}{2\rho^2}\right)
\end{equation}
\begin{equation}
\left\|f(a_i,c)-f^t\big(P(\cdot, c)\big)\right\|^2 = \left\|f(a_j,c)-f^t\big(P(\cdot, c)\big)\right\|^2
\end{equation}

This completed the proof.
\end{proof}

To establish \Cref{them: Generation Frequency Fairness Equal Distance Fairness}, we rely on an independence assumption (\Cref{assumption: a and c independent}). This is a mild requirement that can be validated on both \textbf{empirical} and \textbf{theoretical} grounds.

\begin{itemize}
    \item \textbf{Empirically}, the assumption is weak and specific to the training process. Although attributes and concepts are seldom independent in the real world, the training images are generated by Stable Diffusion. By controlling the prompts, we can readily enforce independence between attributes and concepts. For example, we generate equal numbers of images for different attributes to compute semantic centroids. In such a controlled setting, the condition $\mathbb{P}(a, c) = \mathbb{P}(a)\mathbb{P}(c)$ clearly holds.
    \item \textbf{Theoretically}, we further relax \Cref{assumption: a and c independent} to a softer condition: $1 - \epsilon \le \frac{\mathbb{P}(a \mid c)}{\mathbb{P}(a)} \le 1 + \epsilon$ (\Cref{assumption: a and c relax independent}), and derive \Cref{them: Generation Frequency Fairness Equal Distance Fairness 2}. It shows that, under this relaxed assumption, the induced probability error from the distance constraint is on the order of $O(\epsilon)$, where $\epsilon$ is a small constant. In our training data, $\epsilon$ is always less than 0.01.
\end{itemize}

\begin{assumption}
For any attribute $a$ and concept $c$, there exists $\epsilon \in [0, 1)$ such that
\begin{equation}
1 - \epsilon \leq \frac{P(a \mid c)}{P(a)} \leq 1 + \epsilon, \quad \forall a \in A,\, c \in C.
\end{equation}
When $\epsilon = 0$, this reduces to the original \Cref{assumption: a and c independent}.
\label{assumption: a and c relax independent}
\end{assumption}

\begin{theorem}
Under \Cref{assumption: Stochastic Neighbor Embedding}, \ref{assumption: well-trained diffusion} and \ref{assumption: a and c relax independent}, let $d_i = \left\| f(a_i, c) - f^t\big(P(\cdot, c)\big) \right\|,$ and let $\rho$ be the bandwidth of the Gaussian kernel. If Equalized Odds holds, \textit{i.e.},
\begin{equation}
\mathbb{P}(a_i \mid P(\cdot, c)) = \mathbb{P}(a_j \mid P(\cdot, c)) \quad \text{for any } a_i, a_j \in A,
\end{equation}
then the corresponding embedding distances satisfy
\begin{equation}
\left| d_i^2 - d_j^2 \right| \le 2\rho^2 \cdot \left| \ln \frac{1+\epsilon}{1-\epsilon} \right| \approx 4\rho^2 \epsilon.
\end{equation}
In particular, as $\epsilon \to 0$, the bound vanishes. Equalized Odds then implies exact equality in embedding distances, recovering the original \Cref{them: Generation Frequency Fairness Equal Distance Fairness}.
\label{them: Generation Frequency Fairness Equal Distance Fairness 2}
\end{theorem}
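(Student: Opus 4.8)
The plan is to re-run the proof of \Cref{them: Generation Frequency Fairness Equal Distance Fairness} essentially verbatim up to the single place where exact statistical independence of $a$ and $c$ was invoked, and to track the multiplicative error that the relaxed bound in \Cref{assumption: a and c relax independent} introduces there. Write $\pi = f^t\big(P(\cdot, c)\big)$ for the prompt embedding. By \Cref{assumption: well-trained diffusion} we still have $\mathbb{P}(c \mid \pi) = 1$, so $\mathbb{P}(a_i, c \mid \pi) = \mathbb{P}(a_i \mid c, \pi)$; and since $\pi$ encodes the concept $c$, \Cref{assumption: a and c relax independent} (applied with $c$ in the role of the conditioning concept) yields a correction factor $\gamma_i := \mathbb{P}(a_i \mid c, \pi)/\mathbb{P}(a_i \mid \pi) \in [1-\epsilon,\, 1+\epsilon]$, i.e. $\mathbb{P}(a_i, c \mid \pi) = \gamma_i\, \mathbb{P}(a_i \mid \pi)$. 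When $\epsilon = 0$ every $\gamma_i$ equals $1$ and the argument collapses to that of \Cref{them: Generation Frequency Fairness Equal Distance Fairness}.

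Next I would combine this with the two remaining ingredients from the original proof. The Gaussian/SNE modeling of \Cref{assumption: Stochastic Neighbor Embedding}, together with the approximation $\mathbb{P}(a_i, c \mid \pi) \approx \mathbb{P}\big(f(a_i,c) \mid \pi\big)$, gives
\begin{equation}
\mathbb{P}(a_i, c \mid \pi) \;=\; \frac{\exp\!\big(-d_i^2/(2\rho^2)\big)}{\sum_{a_k \in A} \exp\!\big(-d_k^2/(2\rho^2)\big)} .
\end{equation}
Imposing Equalized Odds, $\mathbb{P}(a_i \mid \pi) = \mathbb{P}(a_j \mid \pi)$, and forming the ratio of the $a_i$ and $a_j$ identities, the common denominator and the common factor $\mathbb{P}(a_i\mid\pi)$ both cancel, leaving
\begin{equation}
\exp\!\left(\frac{d_j^2 - d_i^2}{2\rho^2}\right) \;=\; \frac{\gamma_i}{\gamma_j} \;\in\; \left[\frac{1-\epsilon}{1+\epsilon},\; \frac{1+\epsilon}{1-\epsilon}\right].
\end{equation}
Taking logarithms gives $\big|d_i^2 - d_j^2\big| \le 2\rho^2 \ln\!\frac{1+\epsilon}{1-\epsilon}$, and the elementary expansion $\ln\frac{1+\epsilon}{1-\epsilon} = 2\epsilon + O(\epsilon^3)$ produces the advertised $\approx 4\rho^2\epsilon$; letting $\epsilon \to 0$ forces $d_i^2 = d_j^2$ and recovers \Cref{them: Generation Frequency Fairness Equal Distance Fairness}.

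The step I expect to be the main obstacle --- more a matter of care than of difficulty --- is pinning down the correction factor $\gamma_i$ and justifying that it inherits the $[1-\epsilon, 1+\epsilon]$ window from \Cref{assumption: a and c relax independent}: one must argue that conditioning on the prompt embedding $\pi$ is, for the purpose of the attribute marginal, equivalent to conditioning on the concept $c$ (so that $\mathbb{P}(a_i \mid c, \pi)$ is genuinely governed by the ratio $\mathbb{P}(a_i \mid c)/\mathbb{P}(a_i)$, possibly after normalizing attribute priors to uniform as in the balanced training setup), and that the softmax normalizer together with the factor $\mathbb{P}(a_i \mid \pi)$ cancel exactly in the ratio rather than only approximately. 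Everything after that --- forming the ratio, taking logarithms, and the Taylor estimate --- is routine.
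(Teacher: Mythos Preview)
Your proposal is correct and follows essentially the same route as the paper's proof: use \Cref{assumption: well-trained diffusion} to write $\mathbb{P}(a_i,c\mid\pi)$ in terms of $\mathbb{P}(a_i\mid\pi)$ up to a multiplicative factor bounded via \Cref{assumption: a and c relax independent}, form the ratio for $a_i$ versus $a_j$ so that the softmax normalizer cancels, substitute the SNE Gaussian model, and take logarithms. The paper asserts the ratio bound $\frac{1-\epsilon}{1+\epsilon}\le \mathbb{P}(a_i,c\mid\pi)/\mathbb{P}(a_j,c\mid\pi)\le \frac{1+\epsilon}{1-\epsilon}$ directly without naming your $\gamma_i$, so your bookkeeping is in fact slightly more explicit than the original; the concern you flag about identifying conditioning on $\pi$ with conditioning on $c$ is exactly the same soft spot the paper glosses over.
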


\begin{proof}
First, since the input of the diffusion denoising process is the encoding of the prompt by the text encoder, \textbf{Equalized Odds} can be reformulated as:
\begin{equation}
\mathbb{P}\Big(a_i \mid f^t\big(P(\cdot, c)\big)\Big) = \mathbb{P}\Big(a_j \mid f^t\big(P(\cdot, c)\big)\Big), \quad \forall a_i, a_j \in A,
\end{equation}
where $f^t(\cdot)$ denotes encoding by the CLIP text encoder, and $P(\cdot)$ is shorthand for $prompt(\cdot)$.

According to \Cref{assumption: well-trained diffusion}, we have:
\begin{equation}
\mathbb{P}\Big(c \mid f^t\big(P(\cdot, c)\big)\Big) = 1.
\end{equation}

Under the \textbf{weak‑independence} \Cref{assumption: a and c relax independent}, for any $a_i, a_j \in A$,
\begin{equation}
\frac{1-\epsilon}{1+\epsilon} \le \frac{\mathbb{P}(a_i,c \mid f^{t}(P(\cdot,c)))}    {\mathbb{P}(a_j,c \mid f^{t}(P(\cdot,c)))} \le \frac{1+\epsilon}{1-\epsilon}.
\label{equ: P_ac_weak_independence}
\end{equation}

Let $f(a,c)$ represent the joint encoding of the attribute $a$ and concept $c$.

Because $f(a,c)$ captures both $a$ and $c$, \Cref{equ: P_ac_weak_independence} can be rewritten as
\begin{equation}
\frac{1-\epsilon}{1+\epsilon} \le \frac{\mathbb{P}\bigl(f(a_i,c)\mid f^{t}(P(\cdot,c))\bigr)}{\mathbb{P}\bigl(f(a_j,c)\mid f^{t}(P(\cdot,c))\bigr)} \le \frac{1+\epsilon}{1-\epsilon}.
\label{equ: P_ac_weak_independence_2}
\end{equation}

According to \Cref{assumption: Stochastic Neighbor Embedding} and \cite{deng2024content}, the conditional probability $\mathbb{P}\Big(f(a_i,c) \mid f^t\big(P(\cdot, c)\big)\Big)$ can be represented as the similarity between $f(a_i,c)$ and $f^t\big(P(\cdot, c)\big)$, and can be modeled using a Gaussian distribution. We thus measuring $\mathbb{P}\Big(f(a_i,c) \mid f^t\big(P(\cdot, c)\big)\Big)$ by calculating:
\begin{equation}
\mathbb{P}\Big(f(a_i,c) \mid f^t\big(P(\cdot, c)\big)\Big) = 
\frac{\exp\left(-\frac{\left\|f(a_i,c)-f^t\big(P(\cdot, c)\big)\right\|^2}{2\rho^2}\right)}
{\sum_{a_k \in A} \exp\left(-\frac{\left\|f(a_k,c)-f^t\big(P(\cdot, c)\big)\right\|^2}{2\rho^2}\right)},
\label{equ: p_to_dist_2}
\end{equation}
where $\rho$ is a constant dependent only on $f^t\big(P(\cdot, c)\big)$, controlling the falloff of $\mathbb{P}$ with respect to distance. Combining \Cref{equ: P_ac_weak_independence_2} and \Cref{equ: p_to_dist_2}, we obtain:
\begin{equation}
\frac{1-\epsilon}{1+\epsilon} \le \exp\Bigl(-\tfrac{\lVert f(a_i,c)-f^{t}(P(\cdot,c))\rVert^{2}              -\lVert f(a_j,c)-f^{t}(P(\cdot,c))\rVert^{2}}             {2\rho^{2}}       \Bigr) \le \frac{1+\epsilon}{1-\epsilon}.
\label{equ: P_ac_weak_independence_3}
\end{equation}

Taking natural logarithms and absolute values on \Cref{equ: P_ac_weak_independence_3} yields
\begin{equation}
\bigl|\lVert f(a_i,c)-f^{t}(P(\cdot,c))\rVert^{2}      -\lVert f(a_j,c)-f^{t}(P(\cdot,c))\rVert^{2}\bigr| \le 2\rho^{2}\,      \Bigl|\ln\frac{1+\epsilon}{1-\epsilon}\Bigr|.
\end{equation}

Recalling the definition $d_i=\lVert f(a_i,c)-f^{t}(P(\cdot,c))\rVert$ and using $\ln\frac{1+\epsilon}{1-\epsilon}=2\epsilon+O(\epsilon^{3})$, we have
\begin{equation}
\left|d_i^{2}-d_j^{2}\right| \le 2\rho^{2}\,\left|\ln\frac{1+\epsilon}{1-\epsilon}\right| \approx 4\rho^{2}\epsilon.
\end{equation}

As $\epsilon\to 0$, the logarithmic term vanishes, so Equalized Odds enforces $d_i^{2}=d_j^{2}$, recovering the exact equality of embedding distances established under the stronger independence assumption.

This completed the proof.
\end{proof}

\section{Proof of Proposition 4.2}
\label{appsec: proof of Proposition 4.2}

\begin{definition}[Fourier Transform]
The Fourier Transform of a function $f(x)$, denoted as $\mathcal{F}\{f(x)\}$, is defined as:
\begin{equation}
\mathcal{F}[f(x)](\omega) = F_x(\omega) = \int_{-\infty}^\infty f(x)e^{-i \omega x} \, dx,
\end{equation}
where $\omega$ is the angular frequency variable in the Fourier domain.
\label{definition: Fourier Transform}
\end{definition}

\begin{lemma}[Linearity Property of the Fourier Transform]
If the Fourier transforms of signals $f_1(x)$ and $f_2(x)$ are $F_{1x}(\omega)$ and $F_{2x}(\omega)$, respectively, \textit{i.e.},
\begin{equation*}
\mathcal{F}[f_1(x)](\omega) = F_{1x}(\omega),
\end{equation*}
\begin{equation*}
\mathcal{F}[f_2(x)](\omega) = F_{2x}(\omega),
\end{equation*}
then for any constants $a_1$ and $a_2$, the Fourier Transform satisfies:
\begin{equation}
\mathcal{F}[a_1 f_1(t) + a_2 f_2(t)](\omega) = a_1 F_{1x}(\omega) + a_2 F_{2x}(\omega).
\end{equation}
\label{lemma: Linearity Property of the Fourier Transform}
\end{lemma}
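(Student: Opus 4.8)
The plan is to prove the lemma directly from the integral definition in \Cref{definition: Fourier Transform}, invoking nothing beyond the linearity of integration. First I would expand the left-hand side by substituting $a_1 f_1(x) + a_2 f_2(x)$ into the defining integral:
\begin{equation*}
\mathcal{F}[a_1 f_1(x) + a_2 f_2(x)](\omega) = \int_{-\infty}^{\infty} \bigl( a_1 f_1(x) + a_2 f_2(x) \bigr) e^{-i\omega x}\, dx .
\end{equation*}
Next I would distribute $e^{-i\omega x}$ across the sum in the integrand and split the single integral into two integrals, factoring the constants $a_1$ and $a_2$ out of each:
\begin{equation*}
\int_{-\infty}^{\infty} \bigl( a_1 f_1(x) + a_2 f_2(x) \bigr) e^{-i\omega x}\, dx = a_1 \int_{-\infty}^{\infty} f_1(x) e^{-i\omega x}\, dx + a_2 \int_{-\infty}^{\infty} f_2(x) e^{-i\omega x}\, dx .
\end{equation*}
Finally, recognizing each of the two remaining integrals as a Fourier transform by \Cref{definition: Fourier Transform} and applying the hypotheses $\mathcal{F}[f_1(x)](\omega) = F_{1x}(\omega)$ and $\mathcal{F}[f_2(x)](\omega) = F_{2x}(\omega)$, the right-hand side collapses to $a_1 F_{1x}(\omega) + a_2 F_{2x}(\omega)$, which is exactly the claimed identity.

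The only step that needs any care is pulling the finite sum outside the improper integral; this is legitimate whenever $f_1$ and $f_2$ are absolutely integrable, so that $F_{1x}(\omega)$ and $F_{2x}(\omega)$ exist as finite quantities, and it can equally be justified by first working with the truncated integrals over $[-R,R]$, where linearity is elementary, and then passing to the limit $R \to \infty$. I do not anticipate any real obstacle here: the whole argument is a one-line consequence of the linearity of the integral. The lemma is stated only to be reused later, when the proof of \Cref{proposition_4_1} needs to decompose the denoising update into its low- and high-frequency components and track each band independently under the Fourier transform.
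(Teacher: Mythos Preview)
Your proof is correct and is the standard argument from the definition via linearity of the integral. The paper itself does not actually prove this lemma: it is stated as a well-known property and then invoked directly in the proof of \Cref{proposition_4_1}, so there is no alternative approach to compare against.
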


\begin{rthm1}
The recovery rate of low-frequency signals during the diffusion denoising process is higher than that of high-frequency signals.
\end{rthm1}

\begin{proof}
For the forward noising process of the diffusion model Denoising Diffusion Probabilistic Model (DDPM), we have
\begin{equation}
\x_{t}=\sqrt{\overline{\alpha_{t}}}\x_0+\sqrt{1-\overline{\alpha_{t}}}\epsilon \ \ \ \text{with}\ \ \ \epsilon \sim \mathcal{N}(\epsilon;0,I),
\label{equ: DDPM}
\end{equation}
where, $\overline{\alpha_t} = \prod_{i=1}^t \alpha_i$, $\alpha_t \in (0,1)$ represents the noise attenuation factor at time $t$ during the diffusion process. $\x_0$ and $\x_t$ denote the initial noise-free sample and the noisy sample at time $t$, respectively. $\epsilon$ represents standard Gaussian noise.

According to \cref{definition: Fourier Transform}, applying the Fourier Transform to \Cref{equ: DDPM} yields:

\begin{equation}
\mathcal{F}\left[\x_{t}\right](\omega)=F_{t}(\omega)=\mathcal{F}\left[\sqrt{\overline{\alpha_{t}}} \x_{0}+\sqrt{1-\overline{\alpha_{t}}} \epsilon\right](\omega).
\end{equation}
Next, due to the linearity property of the Fourier Transform in \cref{lemma: Linearity Property of the Fourier Transform}, we have:
\begin{equation*}
\mathcal{F}\left[\x_{t}\right](\omega)=\sqrt{\overline{\alpha_{t}}} \mathcal{F}\left[\x_{0}\right](\omega)+\sqrt{1-\overline{\alpha_{t}}} \mathcal{F}\left[\epsilon \right](\omega),
\end{equation*}
\begin{equation}
F_t(\omega)=\sqrt{\overline{\alpha_{t}}}F_0(\omega)+\sqrt{1-\overline{\alpha_{t}}}F_\epsilon(\omega).
\end{equation}

Substituting $f = 2\pi\omega$, we obtain:
\begin{equation}
F_t(f)=\sqrt{\overline{\alpha_{t}}}F_0(f)+\sqrt{1-\overline{\alpha_{t}}}F_\epsilon(f).
\end{equation}

The DDPM denoising process can be viewed as an error-free transmission of image signals through a channel. The original noise-free image signal is the sum of image signals at different frequencies, expressed as $\x_0 = \sum_{f=0}^{+\infty} F_{\x_0}(f)$. The channel input is a combination of the attenuated image signal ($\sqrt{\overline{\alpha_{T}}}F_{\x_0}(f)$) and Gaussian noise ($\sqrt{1-\overline{\alpha_{T}}}F_\epsilon(f)$). Due to the sufficiently large Gaussian noise, the image signal is masked, and the input can be considered as Gaussian noise, which corresponds to the random noise at timestep $\x_T$ in the DDPM reverse process, \textit{i.e.}:

\vskip 20pt
\begin{equation}
    \vspace{\baselineskip}
            \label{eq: Fourier_x_T}
                F_T(f)= \tikzmarknode{image}{\highlight{red}{$\sqrt{\overline{\alpha_{T}}}F_0(f)$}} + \tikzmarknode{noise}{\highlight{blue}{$\sqrt{1-\overline{\alpha_{T}}}F_\epsilon(f)$}}.
\end{equation}
\begin{tikzpicture}[overlay,remember picture,>=stealth,nodes={align=left,inner ysep=1pt},<-]
    \path (image.north) ++ (0,2em) node[anchor=south east,color=red!67] (scalep){\textbf{Attenuated Image Signal}};
    \draw [color=red!87](image.north) |- ([xshift=-0.3ex,color=red]scalep.south west);
    \path (noise.south) ++ (0,-1.5em) node[anchor=north west,color=blue!67] (mean){\textbf{Gaussian Noise}};
    \draw [color=blue!57](noise.south) |- ([xshift=-0.3ex,color=blue]mean.south east);
\end{tikzpicture}

For a diffusion model to successfully reconstruct an image, it must ensure that the attenuated image signal is completely transmitted. Simultaneously, during the transmission process, the DDPM weakens the original Gaussian noise through noise prediction. This process can be described as:

\begin{equation*}
    \tikzmarknode{image}{\highlight{red}{$\text{Image Signal}$}} + \tikzmarknode{noise}{\highlight{blue}{$\text{High Gaussian Noise}$}} \xrightarrow[\text{(DDPM)}]{\text{Channel Transmission}} \tikzmarknode{image}{\highlight{red}{$\text{Image Signal}$}} + \tikzmarknode{noise}{\highlight{blue}{$\text{Low Gaussian Noise}$}}.
\end{equation*}

Assuming the DDPM model is fully trained at any given time $t$, it can completely remove the noise. During this process, the signal-to-noise ratio at time $t$ is:
\begin{align}
\text{SNR}_t(f)&=\frac{\mathbb{E}\left[\left|F_T(f)-E_T(f)\right|^2\right]}{\mathbb{E}\left[\left|E_t(f)\right|^2\right]}\\
&=\frac{\mathbb{E}\left[\left|F_T(f)-\sqrt{1-\overline{\alpha_{T}}}F_\epsilon(f)\right|^2\right]}{\mathbb{E}\left[\left|\sqrt{1-\overline{\alpha_{t}}}F_\epsilon(f)\right|^2\right]}\\
&=\frac{\overline{\alpha_{T}}\left|F_0(f)\right|^2}{(1-\overline{\alpha_{t}})\left|F_\epsilon(f)\right|^2}.
\end{align}

The variation efficiency of the signal-to-noise ratio:
\begin{align}
\Delta\text{SNR}_i^{i+1}(f)&=\text{SNR}_{i}(f) - \text{SNR}_{i+1}(f)\\
&=\frac{\overline{\alpha_{T}}\left|F_0(f)\right|^2}{(1-\overline{\alpha_{i}})\left|F_\epsilon(f)\right|^2} - \frac{\overline{\alpha_{T}}\left|F_0(f)\right|^2}{(1-\overline{\alpha_{i+1}})\left|F_\epsilon(f)\right|^2}\\
&=\frac{\overline{\alpha_{i}}-\overline{\alpha_{i+1}}}{(1-\overline{\alpha_{i}})(1-\overline{\alpha_{i+1}})} \cdot \frac{\overline{\alpha_{T}}\left|F_0(f)\right|^2}{\left|F_\epsilon(f)\right|^2}
\label{eq: Delta_SNR}
\end{align}

Previous studies~\cite{burton1987color,field1987relations,field1994goal,tolhurst1992amplitude,torralba2003statistics} have observed that the average power spectrum of natural images follows the form $1/f^\beta$ with $\beta \sim 2$. Therefore, we have:
\begin{equation}
\left|F_0(f)\right|^2 \propto \frac{1}{f^\beta}, \quad \beta \sim 2.
\label{eq: F_0}
\end{equation}

Since $\epsilon \sim \mathcal{N}(\epsilon;0,I)$ is Gaussian white noise, the power is constant across different frequencies $f$. It can be expressed as:

\begin{equation}
\left|F_\epsilon(f)\right|^2 = C.
\label{eq: F_epsilon}
\end{equation}

Substituting \Cref{eq: F_0} and \Cref{eq: F_epsilon} into \Cref{eq: Delta_SNR} yields:
\begin{equation}
\Delta\text{SNR}_i^{i+1}(f) \propto 
\underbrace{\frac{\overline{\alpha_{i}}-\overline{\alpha_{i+1}}}{(1-\overline{\alpha_{i}})(1-\overline{\alpha_{i+1}})}}_{(1)} 
\cdot 
\underbrace{\frac{\overline{\alpha_{T}}}{C}}_{(2)} 
\cdot 
\underbrace{\frac{1}{f^\beta}}_{(3)}
\end{equation}

Since $\overline{\alpha_t} = \prod_{i=1}^t \alpha_i $ and $\alpha_t \in (0,1]$, part (1) is always positive for any $i$. Part (2) is a constant, and part (3) is a positive term inversely proportional to $f$. Therefore:
\begin{itemize}
    \item For any $i \in [0, T-1]$, $\Delta\text{SNR}_i^{i+1}(f) > 0$.
    \item For $f_1 > f_2$, we have $\Delta\text{SNR}_i^{i+1}(f_1) < \Delta\text{SNR}_i^{i+1}(f_2)$.
\end{itemize}

So, the recovery rate of low-frequency signals during the diffusion denoising process is higher than that of high-frequency signals.

This completed the proof.

\end{proof}

\newpage
\section{Expanded Version of Filtering Results in the Denoising Process}
\label{sec: supp_Expanded Version of Filtering Results in the Denoising Process}
Here, we provide an expanded version of the filtering results in the denoising process, as shown in \Cref{fig: filtering_supp}, \Cref{fig: filtering2} and \Cref{fig: filtering3}.

\begin{figure}[h!]
  \centering
   \includegraphics[width=0.7\linewidth]{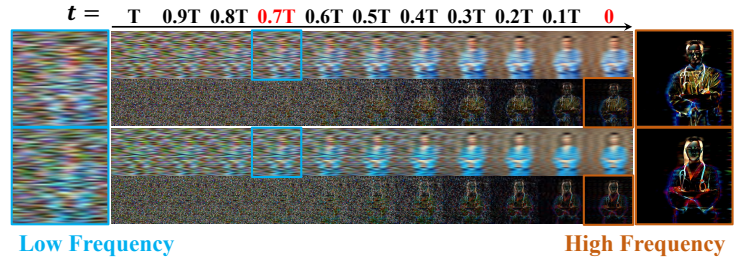}
   \caption{\textbf{An enlarged version of \Cref{fig: filtering}.} Results of \textbf{low-pass and high-pass filtering} applied during the denoising process with text guidance for `\texttt{male doctor}' (top two rows) and `\texttt{female doctor}' (bottom two rows). Each pair shows low-pass filtered images on top and high-pass filtered images below. For clarity, some images are enlarged and highlighted on both sides.}
   \label{fig: filtering_supp}
\end{figure} 

\begin{figure}[h!]
  \centering
   \includegraphics[width=0.7\linewidth]{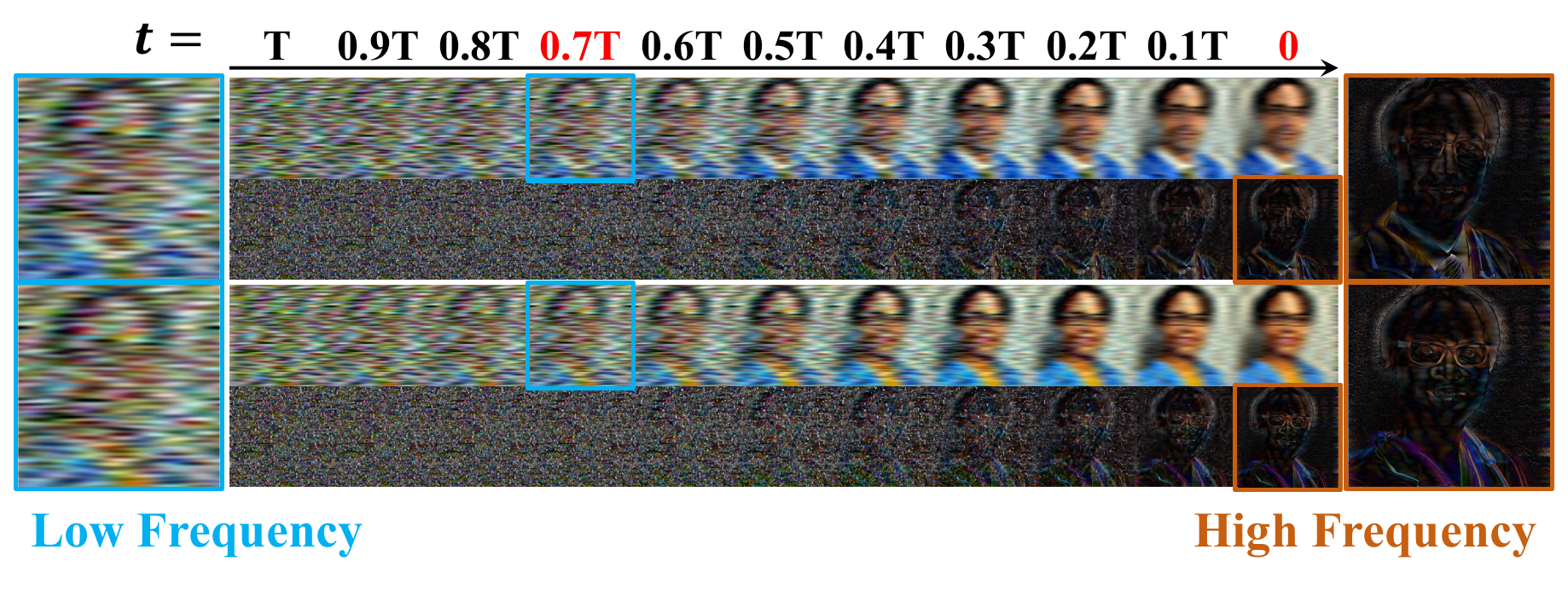}
   \caption{Expanded results of \textbf{low-pass and high-pass filtering} applied during the denoising process with text guidance for `\texttt{male doctor}' (top two rows) and `\texttt{female doctor}' (bottom two rows). Each pair shows low-pass filtered images on top and high-pass filtered images below. For clarity, some images are enlarged and highlighted on both sides.}
   \label{fig: filtering2}
\end{figure} 

\begin{figure}[h!]
  \centering
   \includegraphics[width=0.7\linewidth]{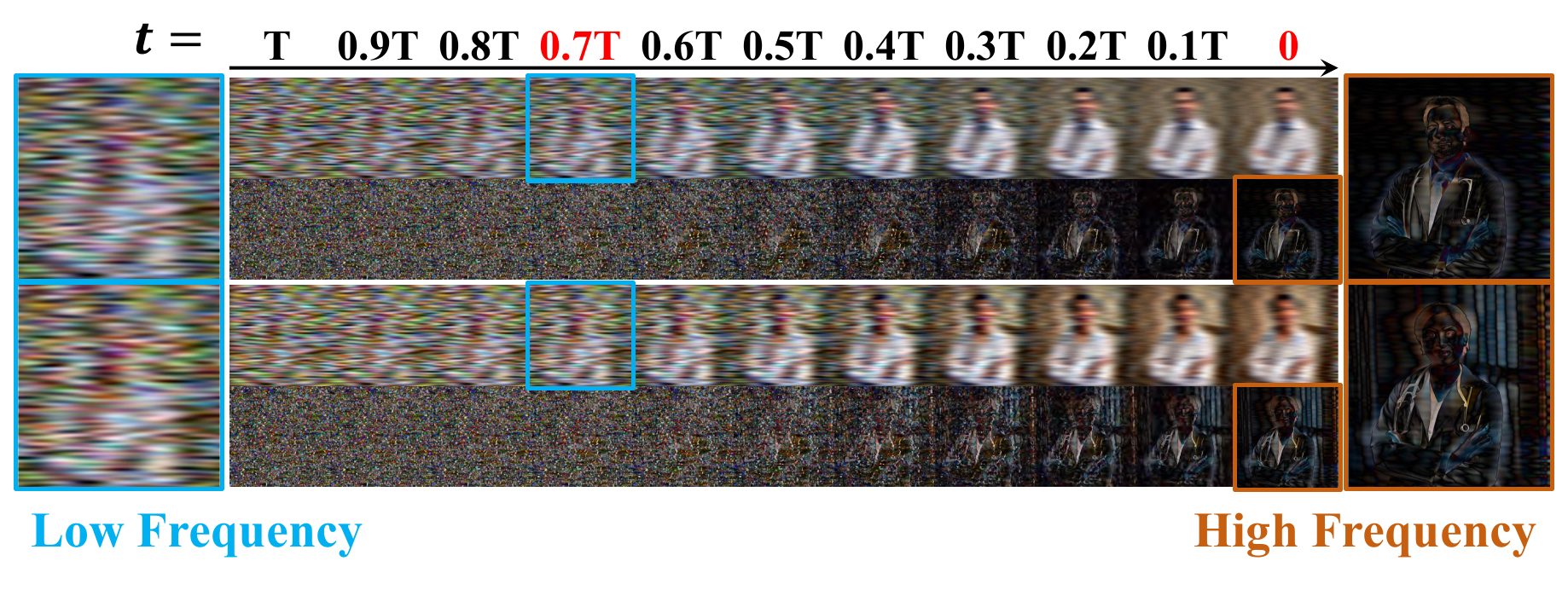}
   \caption{Expanded results of \textbf{low-pass and high-pass filtering} applied during the denoising process with text guidance for `\texttt{male doctor}' (top two rows) and `\texttt{female doctor}' (bottom two rows). Each pair shows low-pass filtered images on top and high-pass filtered images below. For clarity, some images are enlarged and highlighted on both sides.}
   \label{fig: filtering3}
\end{figure} 

\newpage
\section{Algorithm of \texttt{LightFair}}
\label{sec: supp_Algorithm of lightfiar}

\begin{algorithm}[h!]
   \caption{Training}
   \label{alg: Training}
   \begin{algorithmic}[1]
   \REQUIRE CLIP text encoder $f^{text}_{orig}$; CLIP image encoder $f^{image}$; U-Net $\epsilon_{\theta}$; main word $c$; attribute set $A=\{a_1,\dots,a_{|A|}\}$; training epochs $E_{total}$; batch size $N_b$
   \ENSURE Fine-tuned text encoder $f^{text}_{new}$
   \STATE $f^{text}_{new} \gets f^{text}_{orig}$;
   \FOR{$epoch = 1$ \textbf{to} $E_{total}$}
      \STATE \textcolor{orange}{$\rhd$ Generate Training Images}
      \WHILE{\texttt{no\_grad()}}
      \STATE $SD \gets (f^{text}_{new}, \epsilon_{\theta})$; \hfill \texttt{\# Build SD using text encoder and U-Net}
      \FOR{$i = 1$ \textbf{to} $|A|$}
         \STATE $image_i = \left[ SD(P(a_i, c)) \right]_{N_b}$; \hfill \texttt{\# Generate images with attributes}
      \ENDFOR
      \ENDWHILE
      \STATE \textcolor{orange}{$\rhd$ Extracting Foreground}
      \FOR{$i = 1$ \textbf{to} $|A|$}
         \STATE $image_i' = CA(image_{i},P(a_i, c))$; \hfill \texttt{\# Extracting foreground using cross-attention}
      \ENDFOR
      \STATE \textcolor{orange}{$\rhd$ Calculating Loss Function and Optimization}
      \STATE $images \gets [ image_{1}',\dots,image_{|A|}' ]$;
      \STATE $texts \gets [ P(\cdot, c),P(a_1, c),\dots,P(a_{|A|}, c) ]$;
      \WHILE{\texttt{no\_grad()}}
      \STATE $image\_emb = f^{image}(images).norm$;
      \ENDWHILE
      \STATE $image\_emb\_centroid \gets [ \mathbb{E}[image\_emb], \mathbb{E}[image\_emb_{1}],\dots,\mathbb{E}[image\_emb_{|A|}] ]$;
      \STATE $text\_emb = f^{text}_{new}(texts).norm$;
      \STATE $s = image\_emb\_centroid \cdot text\_emb^T$; \hfill \texttt{\# Calculate the similarity matrix}
      \STATE $\ell_o \gets s[1:|A|,0]$;
      \STATE $\ell_q \gets s[k,k], k\in[1,|A|]$;
      \STATE $\ell_{reg} \gets s[0,0]$;
      \STATE Calculate $\ell = \ell_{o} + \lambda_1 \ell_{q} + \lambda_2 \ell_{reg}$ with \Cref{l_o}, \Cref{l_q}, and \Cref{l_reg};
      \STATE Backpropagation updates $f^{text}_{new}$ parameters;
   \ENDFOR
   \STATE \textbf{return} $f^{text}_{new}$.
   \end{algorithmic}
\end{algorithm}

\begin{algorithm}[h!]
   \caption{Sampling}
   \label{alg: Sampling}
   \begin{algorithmic}[1]
   \REQUIRE CLIP original text encoder $f^{text}_{orig}$; CLIP fine-tuned text encoder $f^{text}_{new}$; U-Net $\epsilon_{\theta}$; prompt $P$; Stable Diffusion image decoder $g^d$; hyperparameters $\alpha_t$, $\sigma_t$ and $\tau$
   \ENSURE Clean Image $\x_0$
   \STATE $\z_T \sim \mathcal{N}(0,I)$;
   \FOR{$t=T$ \textbf{to} $1$}
   \STATE $\epsilon_t \sim \mathcal{N}(0,I)$ if $t>1$, else $\epsilon_t=0$;
   \IF{$T \geq \tau$}
   \STATE $\z_{t-1} = \frac{1}{\sqrt{\alpha_t}}(\z_t-\frac{1-\alpha_t}{\sqrt{1-\bar{\alpha}_t}}\epsilon_{\theta}(f^{text}_{orig}(P),\z_t,t)) + \sigma_t\epsilon_t$;
   \ELSE
   \STATE $\z_{t-1} = \frac{1}{\sqrt{\alpha_t}}(\z_t-\frac{1-\alpha_t}{\sqrt{1-\bar{\alpha}_t}}\epsilon_{\theta}(f^{text}_{new}(P),\z_t,t)) + \sigma_t\epsilon_t$;
   \ENDIF
   \ENDFOR
   \STATE $\x_0 \gets g^d(\z_0)$;
   \STATE \textbf{return} $\x_0$.
   \end{algorithmic}
\end{algorithm}

\newpage
\section{Additional Experimental Settings}
\label{sec: supp_Additional Experiment Settings}
In this section, we make a supplementation to \Cref{subsec: Experimental Setups}.

\subsection{Evaluation metrics}
\label{subsec: supp_Evaluation metrics}
We use the gender and race classifiers from \cite{schroff2015facenet} for evaluation. For each prompt, $100$ images are generated. To reduce experimental randomness, the evaluation is repeated five times, and the mean and variance of the reported metrics are calculated. Here we give a more detailed summary of the evaluation metrics mentioned in the experiments.

To evaluate the \textbf{Fairness} of the model, we use the following two metrics:   
\begin{itemize}
    \item \textbf{Bias-Odds} (Bias-O) quantifies the degree of bias in the frequency of different attributes in generated images when unspecified attributes are involved. Specifically, for a given prompt $prompt(\cdot, c)$, Bias-O is calculated as $\text{Bias-O}(prompt(\cdot, c))=\frac{1}{|A|(|A|-1)/2}\sum_{a_i,a_j\in|A|:i<j}|\mathsf{freq}(a_i)-\mathsf{freq}(a_j)|$, where $\mathsf{freq}(a_i)$ represents the frequency of attribute $a_i$ appearing in the generated images, and $|A|$ denotes the number of elements in the attribute set.
    \item \textbf{Bias-Quality} (Bias-Q) quantifies the degree of bias in the generation quality of different attributes in the generated images when unspecified attributes are involved. Specifically, for a given prompt $prompt(\cdot, c)$, Bias-Q is calculated as $\text{Bias-Q}(prompt(\cdot, c))=\frac{1}{|A|(|A|-1)/2}\sum_{a_i,a_j\in|A|:i<j}|\mathsf{qual}(a_i)-\mathsf{qual}(a_j)|$, where $\mathsf{qual}(a_i)$ represents the generation quality of images containing attribute $a_i$, calculated using the CLIP Score (The calculation is the same as that in the following CLIP-T.).
\end{itemize}

To evaluate the \textbf{Quality} of the model, we use the following six metrics: 
\begin{itemize}
    \item \textbf{CLIP-T} (CLIP Score of Text)~\cite{hessel2021clipscore} measures the semantic alignment between generated images and their corresponding textual descriptions. By calculating the similarity between image and text embeddings, it evaluates their semantic relevance. In our experiments, the \textit{clip-vit-large-patch14} \footnote{\url{https://huggingface.co/openai/clip-vit-large-patch14}} model is used to compute this metric. A higher CLIP-T reflects better alignment between images and descriptions.
    \item \textbf{CLIP-I} (CLIP Score of Image)~\cite{hessel2021clipscore} measures the similarity between the generated image and the original Stable Diffusion (SD) output for the same prompt and noise. By calculating the similarity between image embeddings, it evaluates the consistency of the generated image relative to the original generation. Similar to CLIP-T, the \textit{clip-vit-large-patch14} model is used to compute this metric. A higher CLIP-I indicates a smaller impact of fine-tuning on the model's performance.
    \item \textbf{FID} (Fréchet Inception Distance)~\cite{heusel2017gans} assesses the quality of generated images by comparing the distribution of generated images with that of real images in the feature space. In our experiments, the FairFace dataset~\cite{karkkainen2021fairface} is used as a reference. A lower FID score indicates that the generated images are closer to the real images, reflecting higher realism and visual quality.
    \item \textbf{IS} (Inception Score)~\cite{salimans2016improved} evaluates the quality and diversity of generated images by analyzing the prediction distribution of a classifier, such as the Inception network, on the generated images. A higher IS value indicates better image quality and greater diversity in content.
    \item \textbf{AS-R} (Aesthetic Score - Rating)~\cite{xu2023imagereward} evaluates the overall aesthetic quality of a given image. The metric is computed using a regression head trained on human-rated aesthetic datasets, and it takes the pooled visual features from a CLIP-based vision encoder as input. A higher AS-R score indicates better visual appeal and artistic quality.
    \item \textbf{AS-A} (Aesthetic Score - Artifacts)~\cite{wang2023diffusiondb} quantifies the presence of visual artifacts or distortions in an image. It is computed using a separate regression model trained. A lower AS-A score reflects fewer artifacts and higher perceptual quality.
\end{itemize}

\subsection{Competitors}
\label{subsec: supp_Competitors}
Here we give a more detailed summary of the competitors mentioned in the experiments.

\begin{itemize}
    \item \textbf{SD} (Stable Diffusion)~\cite{rombach2022high} is an efficient latent space diffusion model capable of generating high-quality images. By mapping the image generation process to a lower-dimensional latent space, it significantly reduces computational costs. In this paper, we consider versions v1.5 \footnote{\url{https://huggingface.co/stable-diffusion-v1-5/stable-diffusion-v1-5}} and v2.1 \footnote{\url{https://huggingface.co/stabilityai/stable-diffusion-2-1}}.
    \item \textbf{FairD} (Fair Diffusion)~\cite{friedrich2023fair} reduces bias or unfairness in generated images by randomly incorporating additional text prompts to adjust the generation process. The original paper conducts experiments based on SDv1.5.
    \item \textbf{UCE} (Unified Concept Editing)~\cite{gandikota2024unified} provides a universal framework for concept editing, enabling the effective removal, modification, or replacement of specific concepts in generated images by updating the cross-attention layers. The original paper conducts experiments based on SDv1.5 and SDv2.1.
    \item \textbf{FinetuneFD} (Finetune Fair Diffusion)~\cite{shenfinetuning} employs biased fine-tuning combined with a distributional alignment loss to reduce bias in generated images. This method conducts experiments based on SDv1.5 in the original paper.
    \item \textbf{FairMapping} (Fair Mapping)~\cite{li2023fair} introduces a linear network to map textual conditioning embeddings into a debiased space, enabling demographically fair image generation. Additionally, an auxiliary detector is used to determine whether to activate the linear network based on the input prompt. The original paper conducts experiments using SDv1.5. We reproduced using the hyperparameters reported in the original paper due to the absence of official code.
    \item \textbf{BalancingAct} (Balancing Act)~\cite{parihar2024balancing} introduces an auxiliary network called the Attribute Distribution Predictor, which maps UNet latent features to attribute distributions and guides the generation process toward a prescribed demographic distribution. The original paper conducts experiments using SDv1.5.
    \item \textbf{Debias VL} (Debiasing Vision-Language Model)~\cite{chuang2023debiasing} eliminates bias in vision-language foundation models by projecting out biased directions in text embeddings. The original paper conducts experiments using SDv2.1.
    \item \textbf{TI} (Textual Inversion)~\cite{galimage} enables personalized image generation by learning new pseudo-words to represent specific visual concepts using some example images. TI can mitigate bias by replacing biased concepts with embeddings learned from unbiased datasets. We reconduct experiments using SDv1.5.
    \item \textbf{AITTI} (Adaptive Inclusive Token for Text-to-Image)~\cite{hou2024aitti} introduces an adaptive mapping network that learns concept-specific inclusive tokens to mitigate stereotypical biases in T2I generation. The original paper conducts experiments using SDv1.5.
    \item \textbf{TIME} (Text-to-Image Model Editing)~\cite{orgad2023editing} edits implicit assumptions in pre-trained diffusion models by aligning under-specified prompts with user-desired alternatives through modifying cross-attention projection matrices. We reconduct experiments using SDv1.5.
    \item \textbf{MIST} (Mitigating Intersectional Bias with Disentangled Cross-Attention Editing)~\cite{yesiltepe2024mist} isolates and adjusts biased attribute concepts while preserving unrelated content by editing the cross-attention layers in a disentangled manner. The original paper conducts experiments using SDv1.5.
    \item \textbf{FairSM} (Fair Sampling with Switching Mechanism)~\cite{choi2024fair} obfuscates attribute-specific information while preserving semantic content by switching the conditioning of sensitive attributes at a learned transition point during the denoising process. The original paper conducts experiments using SDv1.5.
    \item \textbf{SANER} (Societal Attribute Neutralizer)~\cite{hirota2024saner} removes attribute information from CLIP text features, retaining only attribute-neutral descriptions. The original paper focuses solely on debiasing CLIP, while we transfer the debiased model to SDv1.5.
    \item \textbf{DEAR} (Debiasing with Additive Residuals)~\cite{seth2023dear} learns additive residual image representations to offset the original representations, ensuring fair output representations. The original paper focuses solely on debiasing CLIP, while we transfer the debiased model to SDv1.5.
    \item \textbf{EntiGen} (Ethical Natural Language Interventions in Text-to-Image Generation)~\cite{bansal2022well} encourages models to generate images representing diverse social groups across gender, skin color, and culture by appending natural language ethical interventions to prompts. The original paper only modifies the input to the CLIP text encoder, while we apply it to SDv1.5 and SDv2.1.
    \item \textbf{ITI-GEN} (Inclusive Text-to-Image Generation)~\cite{zhang2023iti} learning a set of prompt embeddings to generate images that can effectively represent all desired attribute categories. The original paper conducts experiments using SDv1.5.
\end{itemize}

\subsection{Implementation Details}
\label{subsec: supp_Implementation Details}

We perform all experiments on an NVIDIA 4090 GPU. We generate images for six occupations (doctor, CEO, taxi driver, nurse, artist, and teacher) using the prompt template unless otherwise specified. In the main experiments, we fine-tune LoRA~\cite{hulora} with a rank of $50$ applied to the text encoder. Following the default setting for Stable Diffusion v1.5 and v2.1, we fix the classifier-free guidance (CFG) scale at $7.5$ for all experiments and visualizations. We use the \textit{Adam with Weight Decay (AdamW)}~\cite{loshchilov2018fixing} optimizer with a weight decay of $0.01$. The initial learning rate is set within the range of $9 \times 10^{-6}$ to $7 \times 10^{-5}$, depending on the version of Stable Diffusion and the specific attribute category. The batch size is fixed at $50$ across all scenarios, and the total number of epochs is set to $160$.


\section{Additional Experimental Results}
\label{sec: supp_Additional Experiment Results}

\subsection{Expanded Version of Quantitative Results}
\label{subsec: supp_Expanded Version of Quantitative Results}
Here, we present an expanded version of the quantitative results. First, we provide comparisons against a broader range of baseline methods. Then, we report results across additional evaluation metrics.

\textbf{Comparison with More Baselines.}
\Cref{tab: Expanded Quantitative results} shows a comprehensive comparison with $16$ baseline methods, including the original SD, $11$ debiasing methods designed for diffusion models, and $4$ debiasing methods tailored for CLIP. Our \texttt{LightFair} continues to achieve SOTA performance. Notably, since our approach targets the CLIP text encoder within the diffusion model, we include comparisons with CLIP debiasing methods. Existing CLIP debiasing approaches can be broadly categorized into three groups:

\begin{enumerate}[leftmargin=*]
    \item \textbf{Joint fine-tuning of the image and text encoders} (\textit{e.g.}, \cite{dehdashtianfairerclip,luo2024fairclip}): These methods are mainly designed for classification tasks. However, due to mismatched optimization objectives, the fine-tuned text embeddings often become incompatible with the U-Net used in diffusion models. Consequently, replacing the original text encoder with a jointly fine-tuned one often results in generation failures, producing \underline{noisy and semantically meaningless outputs}.
    \item \textbf{Fine-tuning only the image encoder} (\textit{e.g.}, DEAR in \Cref{tab: Expanded Quantitative results}): Since SD primarily relies on the CLIP text encoder for guiding generation, methods that modify only the image encoder \underline{do not} address the core bias issues in generative tasks.
    \item \textbf{Fine-tuning only the text encoder} (\textit{e.g.}, SANER, EntiGen, and ITI-GEN in \Cref{tab: Expanded Quantitative results}): These methods directly perform debiasing on the CLIP text encoder and apply it to SD-based image generation. They effectively demonstrate the importance of addressing bias at the level of text encoding for achieving fairness in generative models. Our \texttt{LightFair} belongs to this category. What distinguishes \texttt{LightFair} is its theoretically grounded loss functions targeting equalized odds and equalized quality, which contribute to its superior performance.
\end{enumerate}

\textbf{Results on Additional Evaluation Metrics.} In addition to the 2 fairness metrics and 6 quality evaluation metrics used in our paper, several other evaluation metrics have been proposed in related work. For example, \cite{parihar2024balancing} employs Fairness Discrepancy (FD) to assess fairness, and \cite{shenfinetuning} uses DINO features to evaluate image quality. We incorporate both of these additional metrics in our evaluation, with the results presented in \Cref{tab: new evaluation metrics}.

The results show that FD exhibits a similar trend to Bias-O, while the DINO-based quality scores align closely with those of CLIP-I. \textbf{Notably, our \texttt{LightFair} consistently ranks first or second across all ten evaluation metrics, underscoring its overall effectiveness in balancing fairness and generation quality.}

\begin{table}[h!]
  \centering
  \renewcommand\arraystretch{1.3}
  \caption{Complete quantitative results on gender and race attributes. The champion and the runner-up are highlighted in \textbf{bold} and \underline{underline}, respectively. Methods marked with $^*$ are reproduced using the model architectures and hyperparameters reported in the original papers due to the absence of official code. For clarity, methods added beyond those in \Cref{tab: Quantitative results} are highlighted in \textcolor{red}{red}.}
  \vspace{0.5em}
  \resizebox{\linewidth}{!}{
    \begin{tabular}{c|cc|cccccc|cc|cccccc}
\cmidrule{1-17}
\multirow{3}[4]{*}{\textbf{Method}} & \multicolumn{8}{c|}{\textbf{Gender}} & \multicolumn{8}{c}{\textbf{Race}} \\
\cmidrule{2-17}       & \multicolumn{2}{c|}{\textbf{Fairness}} & \multicolumn{6}{c|}{\textbf{Quality}} & \multicolumn{2}{c|}{\textbf{Fairness}} & \multicolumn{6}{c}{\textbf{Quality}} \\
           & \textbf{Bias-O $\downarrow$} & \textbf{Bias-Q $\downarrow$} & \textbf{CLIP-T $\uparrow$} & \textbf{CLIP-I $\uparrow$} & \textbf{FID $\downarrow$} & \textbf{IS $\uparrow$} & \textbf{AS-R $\uparrow$} & \textbf{AS-A $\downarrow$} & \textbf{Bias-O $\downarrow$} & \textbf{Bias-Q $\downarrow$} & \textbf{CLIP-T $\uparrow$} & \textbf{CLIP-I $\uparrow$} & \textbf{FID $\downarrow$} & \textbf{IS $\uparrow$} & \textbf{AS-R $\uparrow$} & \textbf{AS-A $\downarrow$} \\
\cmidrule{1-17}         \multicolumn{17}{c}{\textbf{Stable Diffusion v1.5}} \\
\cmidrule{1-17}
SD~\cite{rombach2022high} & 0.73 \textsubscript{(±0.05)} & 1.90 \textsubscript{(±0.67)} & 29.31 \textsubscript{(±0.06)} & - & 275.13 \textsubscript{(±6.75)} & 1.26 \textsubscript{(±0.03)} & \cellcolor[rgb]{ .886,  .941,  .855}\underline{4.78} \textsubscript{(±0.08)} & 2.65 \textsubscript{(±0.04)} & 0.54 \textsubscript{(±0.02)} & 1.60 \textsubscript{(±0.67)} & 29.31 \textsubscript{(±0.06)} & - & 275.13 \textsubscript{(±6.75)} & 1.26 \textsubscript{(±0.03)} & \cellcolor[rgb]{ .886,  .941,  .855}\underline{4.78} \textsubscript{(±0.08)} & 2.65 \textsubscript{(±0.04)} \\
FairD~\cite{friedrich2023fair} & 0.79 \textsubscript{(±0.04)} & 3.25 \textsubscript{(±1.15)} & 28.79 \textsubscript{(±0.11)} & 75.91 \textsubscript{(±0.56)} & 269.62 \textsubscript{(±4.42)} & \cellcolor[rgb]{ .918,  .835,  1}\textbf{1.30} \textsubscript{(±0.03)} & 4.57 \textsubscript{(±0.09)} & 2.82 \textsubscript{(±0.05)} & 0.50 \textsubscript{(±0.02)} & 1.50 \textsubscript{(±0.38)} & 28.95 \textsubscript{(±0.10)} & 74.33 \textsubscript{(±0.68)} & \cellcolor[rgb]{ .886,  .941,  .855}\underline{262.72} \textsubscript{(±4.84)} & 1.28 \textsubscript{(±0.03)} & 4.55 \textsubscript{(±0.08)} & 2.83 \textsubscript{(±0.06)} \\
UCE~\cite{gandikota2024unified} & 0.78 \textsubscript{(±0.07)} & 1.79 \textsubscript{(±0.46)} & 28.91 \textsubscript{(±0.13)} & \cellcolor[rgb]{ .918,  .835,  1}\textbf{82.72} \textsubscript{(±0.81)} & 273.95 \textsubscript{(±5.53)} & 1.26 \textsubscript{(±0.03)} & 4.71 \textsubscript{(±0.09)} & \cellcolor[rgb]{ .886,  .941,  .855}\underline{2.64} \textsubscript{(±0.04)} & 0.44 \textsubscript{(±0.03)} & 1.40 \textsubscript{(±0.24)} & 29.13 \textsubscript{(±0.14)} & \cellcolor[rgb]{ .918,  .835,  1}\textbf{90.15} \textsubscript{(±0.70)} & 281.16 \textsubscript{(±5.18)} & 1.26 \textsubscript{(±0.02)} & 4.76 \textsubscript{(±0.08)} & 2.69 \textsubscript{(±0.05)} \\
FinetuneFD~\cite{shenfinetuning} & \cellcolor[rgb]{ .886,  .941,  .855}\underline{0.38} \textsubscript{(±0.07)} & 2.31 \textsubscript{(±0.35)} & \cellcolor[rgb]{ .886,  .941,  .855}\underline{29.34} \textsubscript{(±0.13)} & 76.17 \textsubscript{(±0.68)} & 278.21 \textsubscript{(±7.53)} & 1.24 \textsubscript{(±0.02)} & 4.38 \textsubscript{(±0.06)} & 2.86 \textsubscript{(±0.04)} & \cellcolor[rgb]{ .886,  .941,  .855}\underline{0.20} \textsubscript{(±0.03)} & 1.41 \textsubscript{(±0.23)} & 29.02 \textsubscript{(±0.15)} & 74.57 \textsubscript{(±0.53)} & 270.09 \textsubscript{(±5.99)} & 1.26 \textsubscript{(±0.02)} & 4.33 \textsubscript{(±0.06)} & 2.87 \textsubscript{(±0.05)} \\
FairMapping$^*$~\cite{li2023fair} & 0.46 \textsubscript{(±0.05)} & 2.16 \textsubscript{(±0.72)} & 29.30 \textsubscript{(±0.16)} & 76.00 \textsubscript{(±0.66)} & 278.81 \textsubscript{(±5.84)} & 1.26 \textsubscript{(±0.02)} & 4.34 \textsubscript{(±0.07)} & 2.90 \textsubscript{(±0.03)} & 0.34 \textsubscript{(±0.02)} & 1.75 \textsubscript{(±0.47)} & 29.29 \textsubscript{(±0.15)} & 76.54 \textsubscript{(±0.71)} & 280.95 \textsubscript{(±5.02)} & 1.26 \textsubscript{(±0.03)} & 4.53 \textsubscript{(±0.08)} & 2.80 \textsubscript{(±0.05)} \\
BalancingAct~\cite{parihar2024balancing} & 0.41 \textsubscript{(±0.05)} & 1.70 \textsubscript{(±0.55)} & 29.30 \textsubscript{(±0.11)} & 77.37 \textsubscript{(±0.64)} & 272.08 \textsubscript{(±5.16)} & 1.28 \textsubscript{(±0.02)} & 4.71 \textsubscript{(±0.06)} & 2.68 \textsubscript{(±0.04)} & 0.34 \textsubscript{(±0.02)} & \cellcolor[rgb]{ .886,  .941,  .855}\underline{1.13} \textsubscript{(±0.36)} & \cellcolor[rgb]{ .886,  .941,  .855}\underline{29.34} \textsubscript{(±0.11)} & 77.44 \textsubscript{(±0.72)} & 271.91 \textsubscript{(±5.35)} & 1.29 \textsubscript{(±0.03)} & 4.72 \textsubscript{(±0.10)} & 2.66 \textsubscript{(±0.04)} \\
\textcolor{red}{TI}~\cite{galimage} & 0.56 \textsubscript{(±0.06)} & 1.88 \textsubscript{(±0.37)} & 28.76 \textsubscript{(±0.10)} & 75.43 \textsubscript{(±0.54)} & 278.92 \textsubscript{(±6.22)} & 1.27 \textsubscript{(±0.02)} & 4.45 \textsubscript{(±0.07)} & 2.74 \textsubscript{(±0.03)} & 0.47 \textsubscript{(±0.03)} & 1.45 \textsubscript{(±0.27)} & 28.67 \textsubscript{(±0.17)} & 67.96 \textsubscript{(±0.84)} & 275.20 \textsubscript{(±5.07)} & 1.25 \textsubscript{(±0.03)} & 4.43 \textsubscript{(±0.04)} & 2.81 \textsubscript{(±0.04)} \\
\textcolor{red}{AITTI}$^*$~\cite{hou2024aitti} & 0.41 \textsubscript{(±0.06)} & 1.34 \textsubscript{(±0.44)} & 29.03 \textsubscript{(±0.09)} & 77.25 \textsubscript{(±0.44)} & 267.23 \textsubscript{(±5.14)} & \cellcolor[rgb]{ .886,  .941,  .855}\underline{1.29} \textsubscript{(±0.02)} & 4.61 \textsubscript{(±0.08)} & 2.69 \textsubscript{(±0.04)} & 0.25 \textsubscript{(±0.04)} & 1.20 \textsubscript{(±0.31)} & 29.03 \textsubscript{(±0.11)} & 85.43 \textsubscript{(±0.47)} & 271.13 \textsubscript{(±5.24)} & 1.29 \textsubscript{(±0.01)} & \cellcolor[rgb]{ .886,  .941,  .855}\underline{4.78} \textsubscript{(±0.08)} & 2.73 \textsubscript{(±0.03)} \\
 \textcolor{red}{TIME}~\cite{orgad2023editing} & 0.65 \textsubscript{(±0.04)} & 1.76 \textsubscript{(±0.35)} & 28.45 \textsubscript{(±0.12)} & 73.71 \textsubscript{(±0.69)} & 279.17 \textsubscript{(±4.43)} & 1.25 \textsubscript{(±0.02)} & 4.45 \textsubscript{(±0.07)} & 2.86 \textsubscript{(±0.04)} & 0.39 \textsubscript{(±0.04)} & 1.51 \textsubscript{(±0.34)} & 27.97 \textsubscript{(±0.15)} & 76.53 \textsubscript{(±0.68)} & 275.72 \textsubscript{(±6.84)} & 1.26 \textsubscript{(±0.02)} & 4.57 \textsubscript{(±0.04)} & 2.75 \textsubscript{(±0.02)} \\
\textcolor{red}{MIST}$^*$~\cite{yesiltepe2024mist} & 0.39 \textsubscript{(±0.05)} & 1.35 \textsubscript{(±0.43)} & 29.10 \textsubscript{(±0.13)} & 76.67 \textsubscript{(±0.35)} & 254.33 \textsubscript{(±4.76)} & 1.27 \textsubscript{(±0.02)} & 4.69 \textsubscript{(±0.05)} & \cellcolor[rgb]{ .886,  .941,  .855}\underline{2.64} \textsubscript{(±0.04)} & 0.26 \textsubscript{(±0.03)} & 1.19 \textsubscript{(±0.24)} & 29.08 \textsubscript{(±0.09)} & 83.25 \textsubscript{(±0.75)} & 265.83 \textsubscript{(±6.78)} & 1.28 \textsubscript{(±0.02)} & 4.74 \textsubscript{(±0.07)} & 2.57 \textsubscript{(±0.05)} \\
\textcolor{red}{FairSM}~\cite{choi2024fair} & 0.65 \textsubscript{(±0.04)} & 1.83 \textsubscript{(±0.21)} & 27.98 \textsubscript{(±0.11)} & 74.23 \textsubscript{(±0.59)} & 265.60 \textsubscript{(±5.04)} & 1.26 \textsubscript{(±0.03)} & 4.39 \textsubscript{(±0.07)} & 2.83 \textsubscript{(±0.03)} & 0.42 \textsubscript{(±0.03)} & 1.61 \textsubscript{(±0.37)} & 28.68 \textsubscript{(±0.06)} & 72.83 \textsubscript{(±0.60)} & 271.58 \textsubscript{(±5.83)} & 1.27 \textsubscript{(±0.03)} & 4.58 \textsubscript{(±0.07)} & 2.84 \textsubscript{(±0.03)} \\
\textcolor{red}{SANER}$^*$~\cite{hirota2024saner} & 0.52 \textsubscript{(±0.02)} & 1.65 \textsubscript{(±0.34)} & 28.13 \textsubscript{(±0.08)} & 75.28 \textsubscript{(±0.77)} & 275.34 \textsubscript{(±5.40)} & 1.25 \textsubscript{(±0.04)} & 4.53 \textsubscript{(±0.09)} & 2.76 \textsubscript{(±0.04)} & 0.45 \textsubscript{(±0.03)} & 1.41 \textsubscript{(±0.33)} & 28.50 \textsubscript{(±0.13)} & 73.64 \textsubscript{(±0.51)} & 273.21 \textsubscript{(±6.42)} & 1.24 \textsubscript{(±0.02)} & 4.38 \textsubscript{(±0.09)} & 2.67 \textsubscript{(±0.05)} \\
 \textcolor{red}{DEAR}~\cite{seth2023dear} & 0.73 \textsubscript{(±0.05)} & 1.90 \textsubscript{(±0.67)} & 29.31 \textsubscript{(±0.06)} & -     & 275.13 \textsubscript{(±6.75)} & 1.26 \textsubscript{(±0.03)} &  \cellcolor[rgb]{ .886,  .941,  .855}\underline{4.78} \textsubscript{(±0.08)} & 2.65 \textsubscript{(±0.04)} & 0.54 \textsubscript{(±0.02)} & 1.60 \textsubscript{(±0.67)} & 29.31 \textsubscript{(±0.06)} & -     & 275.13 \textsubscript{(±6.75)} & 1.26 \textsubscript{(±0.03)} &  \cellcolor[rgb]{ .886,  .941,  .855}\underline{4.78} \textsubscript{(±0.08)} & 2.65 \textsubscript{(±0.04)} \\
\textcolor{red}{EntiGen}~\cite{bansal2022well} & 0.46 \textsubscript{(±0.05)} & 2.63 \textsubscript{(±0.88)} & 28.57 \textsubscript{(±0.14)} & 71.89 \textsubscript{(±0.68)} & 263.76 \textsubscript{(±5.51)} & 1.29 \textsubscript{(±0.04)} & 4.53 \textsubscript{(±0.09)} & 2.78 \textsubscript{(±0.06)} & 0.37 \textsubscript{(±0.04)} & 2.88 \textsubscript{(±0.63)} & 27.97 \textsubscript{(±0.14)} & 69.56 \textsubscript{(±0.74)} & 265.94 \textsubscript{(±4.25)} & \cellcolor[rgb]{ .886,  .941,  .855}\underline{1.31} \textsubscript{(±0.04)} & 4.77 \textsubscript{(±0.07)} & \cellcolor[rgb]{ .886,  .941,  .855}\underline{2.57} \textsubscript{(±0.04)} \\
\textcolor{red}{ITI-GEN}~\cite{zhang2023iti} & 0.39 \textsubscript{(±0.06)} & \cellcolor[rgb]{ .886,  .941,  .855}\underline{1.27} \textsubscript{(±0.82)} & 28.36 \textsubscript{(±0.12)} & 68.82 \textsubscript{(±0.59)} & \cellcolor[rgb]{ .886,  .941,  .855}\underline{246.55} \textsubscript{(±5.97)} & \cellcolor[rgb]{ .886,  .941,  .855}\underline{1.29} \textsubscript{(±0.02)} & 4.36 \textsubscript{(±0.11)} & 2.86 \textsubscript{(±0.07)} & 0.31 \textsubscript{(±0.04)} & 1.62 \textsubscript{(±0.37)} & 28.13 \textsubscript{(±0.16)} & 66.97 \textsubscript{(±0.57)} & 269.84 \textsubscript{(±6.61)} & \cellcolor[rgb]{ .918,  .835,  1}\textbf{1.33} \textsubscript{(±0.03)} & 4.14 \textsubscript{(±0.10)} & 2.85 \textsubscript{(±0.05)} \\
\texttt{LightFair} (Ours) & \cellcolor[rgb]{ .918,  .835,  1}\textbf{0.30} \textsubscript{(±0.08)} & \cellcolor[rgb]{ .918,  .835,  1}\textbf{0.99} \textsubscript{(±0.55)} & \cellcolor[rgb]{ .918,  .835,  1}\textbf{30.57} \textsubscript{(±0.16)} & \cellcolor[rgb]{ .886,  .941,  .855}\underline{80.09} \textsubscript{(±0.76)} & \cellcolor[rgb]{ .918,  .835,  1}\textbf{233.53} \textsubscript{(±5.50)} & \cellcolor[rgb]{ .918,  .835,  1}\textbf{1.30} \textsubscript{(±0.03)} & \cellcolor[rgb]{ .918,  .835,  1}\textbf{4.79} \textsubscript{(±0.08)} & \cellcolor[rgb]{ .918,  .835,  1}\textbf{2.60} \textsubscript{(±0.04)} & \cellcolor[rgb]{ .918,  .835,  1}\textbf{0.18} \textsubscript{(±0.04)} & \cellcolor[rgb]{ .918,  .835,  1}\textbf{1.06} \textsubscript{(±0.43)} & \cellcolor[rgb]{ .918,  .835,  1}\textbf{31.34} \textsubscript{(±0.20)} & \cellcolor[rgb]{ .886,  .941,  .855}\underline{86.31} \textsubscript{(±0.70)} & \cellcolor[rgb]{ .918,  .835,  1}\textbf{259.96} \textsubscript{(±7.75)} & \cellcolor[rgb]{ .918,  .835,  1}\textbf{1.33} \textsubscript{(±0.03)} & \cellcolor[rgb]{ .918,  .835,  1}\textbf{4.80} \textsubscript{(±0.10)} & \cellcolor[rgb]{ .918,  .835,  1}\textbf{2.55} \textsubscript{(±0.04)} \\
\cmidrule{1-17}         \multicolumn{17}{c}{\textbf{Stable Diffusion v2.1}} \\
\cmidrule{1-17}
SD~\cite{rombach2022high} & 0.85 \textsubscript{(±0.05)} & 1.84 \textsubscript{(±0.63)} & \cellcolor[rgb]{ .886,  .941,  .855}\underline{29.90} \textsubscript{(±0.15)} & - & 259.36 \textsubscript{(±4.81)} & 1.23 \textsubscript{(±0.03)} & \cellcolor[rgb]{ .886,  .941,  .855}\underline{5.12} \textsubscript{(±0.05)} & \cellcolor[rgb]{ .918,  .835,  1}\textbf{2.24} \textsubscript{(±0.03)} & 0.63 \textsubscript{(±0.01)} & 2.06 \textsubscript{(±0.35)} & \cellcolor[rgb]{ .886,  .941,  .855}\underline{29.90} \textsubscript{(±0.15)} & - & 259.36 \textsubscript{(±4.81)} & 1.23 \textsubscript{(±0.03)} & 5.12 \textsubscript{(±0.05)} & 2.24 \textsubscript{(±0.03)} \\
debias VL~\cite{chuang2023debiasing} & 0.43 \textsubscript{(±0.09)} & \cellcolor[rgb]{ .886,  .941,  .855}\underline{1.44} \textsubscript{(±0.48)} & 28.20 \textsubscript{(±0.22)} & 70.01 \textsubscript{(±0.96)} & \cellcolor[rgb]{ .886,  .941,  .855}\underline{245.11} \textsubscript{(±3.72)} & \cellcolor[rgb]{ .886,  .941,  .855}\underline{1.35} \textsubscript{(±0.03)} & 3.53 \textsubscript{(±0.11)} & 2.93 \textsubscript{(±0.06)} & \cellcolor[rgb]{ .886,  .941,  .855}\underline{0.49} \textsubscript{(±0.03)} & \cellcolor[rgb]{ .886,  .941,  .855}\underline{1.91} \textsubscript{(±0.92)} & 28.15 \textsubscript{(±0.26)} & 67.42 \textsubscript{(±0.96)} & \cellcolor[rgb]{ .886,  .941,  .855}\underline{242.78} \textsubscript{(±4.21)} & \cellcolor[rgb]{ .886,  .941,  .855}\underline{1.33} \textsubscript{(±0.03)} & 3.57 \textsubscript{(±0.11)} & 2.85 \textsubscript{(±0.06)} \\
UCE~\cite{gandikota2024unified} & 0.90 \textsubscript{(±0.04)} & 1.67 \textsubscript{(±0.71)} & 29.41 \textsubscript{(±0.13)} & \cellcolor[rgb]{ .918,  .835,  1}\textbf{87.94} \textsubscript{(±0.86)} & 268.52 \textsubscript{(±3.92)} & 1.22 \textsubscript{(±0.02)} & \cellcolor[rgb]{ .886,  .941,  .855}\underline{5.12} \textsubscript{(±0.05)} & 2.32 \textsubscript{(±0.03)} & 0.50 \textsubscript{(±0.03)} & 1.95 \textsubscript{(±0.37)} & 29.44 \textsubscript{(±0.12)} & \cellcolor[rgb]{ .918,  .835,  1}\textbf{80.46} \textsubscript{(±1.13)} & 250.57 \textsubscript{(±4.49)} & 1.23 \textsubscript{(±0.03)} & 5.17 \textsubscript{(±0.08)} & 2.25 \textsubscript{(±0.03)} \\
\textcolor{red}{EntiGen}~\cite{bansal2022well} & \cellcolor[rgb]{ .886,  .941,  .855}\underline{0.42} \textsubscript{(±0.03)} & 2.10 \textsubscript{(±0.38)} & 29.25 \textsubscript{(±0.16)} & 69.22 \textsubscript{(±1.12)} & 255.01 \textsubscript{(±3.60)} & 1.24 \textsubscript{(±0.02)} & 4.91 \textsubscript{(±0.08)} & 2.42 \textsubscript{(±0.04)} & 0.55 \textsubscript{(±0.03)} & 3.07 \textsubscript{(±0.39)} & 28.12 \textsubscript{(±0.12)} & 65.34 \textsubscript{(±1.02)} & 253.53 \textsubscript{(±3.83)} & 1.23 \textsubscript{(±0.03)} & \cellcolor[rgb]{ .886,  .941,  .855}\underline{5.28} \textsubscript{(±0.07)} & \cellcolor[rgb]{ .886,  .941,  .855}\underline{2.21} \textsubscript{(±0.05)} \\
\texttt{LightFair} (Ours) & \cellcolor[rgb]{ .918,  .835,  1}\textbf{0.33} \textsubscript{(±0.10)} & \cellcolor[rgb]{ .918,  .835,  1}\textbf{1.40} \textsubscript{(±0.28)} & \cellcolor[rgb]{ .918,  .835,  1}\textbf{30.82} \textsubscript{(±0.19)} & \cellcolor[rgb]{ .886,  .941,  .855}\underline{75.29} \textsubscript{(±0.99)} & \cellcolor[rgb]{ .918,  .835,  1}\textbf{231.46} \textsubscript{(±3.30)} & \cellcolor[rgb]{ .918,  .835,  1}\textbf{1.35} \textsubscript{(±0.02)} & \cellcolor[rgb]{ .918,  .835,  1}\textbf{5.14} \textsubscript{(±0.09)} & \cellcolor[rgb]{ .886,  .941,  .855}\underline{2.24} \textsubscript{(±0.06)} & \cellcolor[rgb]{ .918,  .835,  1}\textbf{0.40} \textsubscript{(±0.03)} & \cellcolor[rgb]{ .918,  .835,  1}\textbf{1.82} \textsubscript{(±0.44)} & \cellcolor[rgb]{ .918,  .835,  1}\textbf{30.26} \textsubscript{(±0.16)} & \cellcolor[rgb]{ .886,  .941,  .855}\underline{77.47} \textsubscript{(±1.05)} & \cellcolor[rgb]{ .918,  .835,  1}\textbf{230.59} \textsubscript{(±6.53)} & \cellcolor[rgb]{ .918,  .835,  1}\textbf{1.35} \textsubscript{(±0.01)} & \cellcolor[rgb]{ .918,  .835,  1}\textbf{5.29} \textsubscript{(±0.11)} & \cellcolor[rgb]{ .918,  .835,  1}\textbf{2.14} \textsubscript{(±0.06)} \\
\cmidrule{1-17}
\end{tabular}}
\label{tab: Expanded Quantitative results}
\end{table}

\begin{table}[h!]
  \centering
  \renewcommand\arraystretch{1.3}
  \caption{Results on two additional evaluation metrics (FD and DINO). The champion and the runner-up are highlighted in \textbf{bold} and \underline{underline}, respectively.}
  \vspace{0.5em}
  \resizebox{\linewidth}{!}{
    \begin{tabular}{c|cc|cc|cc|cc}
    \toprule
    \multirow{2}[4]{*}{\textbf{Method}} & \multicolumn{4}{c|}{\textbf{Gender}} & \multicolumn{4}{c}{\textbf{Race}} \\
\cmidrule{2-9}          & \cellcolor[rgb]{ .886,  .941,  .855} \textbf{Bias-O $\downarrow$} & \cellcolor[rgb]{ .886,  .941,  .855} \textbf{\textcolor{red}{FD $\downarrow$}} & \textbf{CLIP-I $\uparrow$} & \textbf{\textcolor{red}{DINO $\uparrow$}} & \cellcolor[rgb]{ .886,  .941,  .855} \textbf{Bias-O $\downarrow$} & \cellcolor[rgb]{ .886,  .941,  .855} \textbf{\textcolor{red}{FD $\downarrow$}} & \textbf{CLIP-I $\uparrow$} & \textbf{\textcolor{red}{DINO $\uparrow$}} \\
    \midrule
    \multicolumn{9}{c}{\textbf{Stable Diffusion v1.5}} \\
    \midrule
    SD    & \cellcolor[rgb]{ .886,  .941,  .855} 0.73 \textsubscript{(±0.05)} & \cellcolor[rgb]{ .886,  .941,  .855} 0.45 \textsubscript{(±0.03)} & -     & -     & \cellcolor[rgb]{ .886,  .941,  .855} 0.54 \textsubscript{(±0.02)} & \cellcolor[rgb]{ .886,  .941,  .855} 0.17 \textsubscript{(±0.01)} & -     & - \\
    FairD & \cellcolor[rgb]{ .886,  .941,  .855} 0.79 \textsubscript{(±0.04)} & \cellcolor[rgb]{ .886,  .941,  .855} 0.45 \textsubscript{(±0.02)} & 75.91 \textsubscript{(±0.56)} & 0.53 \textsubscript{(±0.02)} & \cellcolor[rgb]{ .886,  .941,  .855} 0.50 \textsubscript{(±0.02)} & \cellcolor[rgb]{ .886,  .941,  .855} 0.15 \textsubscript{(±0.01)} & 74.33 \textsubscript{(±0.68)} & 0.53 \textsubscript{(±0.02)} \\
    UCE   & \cellcolor[rgb]{ .886,  .941,  .855} 0.78 \textsubscript{(±0.07)} & \cellcolor[rgb]{ .886,  .941,  .855} 0.48 \textsubscript{(±0.04)} & \textbf{82.72} \textsubscript{(±0.81)} & \textbf{0.70} \textsubscript{(±0.02)} & \cellcolor[rgb]{ .886,  .941,  .855} 0.44 \textsubscript{(±0.03)} & \cellcolor[rgb]{ .886,  .941,  .855} 0.13 \textsubscript{(±0.01)} & \textbf{90.15} \textsubscript{(±0.70)} & \textbf{0.83} \textsubscript{(±0.02)} \\
    FinetuneFD & \cellcolor[rgb]{ .886,  .941,  .855} \underline{0.38} \textsubscript{(±0.07)} & \cellcolor[rgb]{ .886,  .941,  .855} 0.22 \textsubscript{(±0.04)} & 76.17 \textsubscript{(±0.68)} & 0.57 \textsubscript{(±0.01)} & \cellcolor[rgb]{ .886,  .941,  .855} \underline{0.20} \textsubscript{(±0.03)} & \cellcolor[rgb]{ .886,  .941,  .855} \underline{0.07} \textsubscript{(±0.01)} & 74.57 \textsubscript{(±0.53)} & 0.54 \textsubscript{(±0.01)} \\
    FairMapping & \cellcolor[rgb]{ .886,  .941,  .855} 0.46 \textsubscript{(±0.05)} & \cellcolor[rgb]{ .886,  .941,  .855} 0.30 \textsubscript{(±0.03)} & 76.00 \textsubscript{(±0.66)} & 0.53 \textsubscript{(±0.02)} & \cellcolor[rgb]{ .886,  .941,  .855} 0.34 \textsubscript{(±0.02)} & \cellcolor[rgb]{ .886,  .941,  .855} 0.10 \textsubscript{(±0.01)} & 76.54 \textsubscript{(±0.71)} & 0.54 \textsubscript{(±0.02)} \\
    BalancingAct & \cellcolor[rgb]{ .886,  .941,  .855} 0.41 \textsubscript{(±0.05)} & \cellcolor[rgb]{ .886,  .941,  .855} 0.24 \textsubscript{(±0.03)} & 77.37 \textsubscript{(±0.64)} & 0.55 \textsubscript{(±0.02)} & \cellcolor[rgb]{ .886,  .941,  .855} 0.34 \textsubscript{(±0.02)} & \cellcolor[rgb]{ .886,  .941,  .855} \underline{0.07} \textsubscript{(±0.01)} & 77.44 \textsubscript{(±0.72)} & 0.55 \textsubscript{(±0.02)} \\
    TI    & \cellcolor[rgb]{ .886,  .941,  .855} 0.56 \textsubscript{(±0.06)} & \cellcolor[rgb]{ .886,  .941,  .855} 0.33 \textsubscript{(±0.04)} & 75.43 \textsubscript{(±0.54)} & 0.54 \textsubscript{(±0.02)} & \cellcolor[rgb]{ .886,  .941,  .855} 0.47 \textsubscript{(±0.03)} & \cellcolor[rgb]{ .886,  .941,  .855} 0.14 \textsubscript{(±0.01)} & 67.96 \textsubscript{(±0.84)} & 0.43 \textsubscript{(±0.04)} \\
    AITTI & \cellcolor[rgb]{ .886,  .941,  .855} 0.41 \textsubscript{(±0.06)} & \cellcolor[rgb]{ .886,  .941,  .855} 0.25 \textsubscript{(±0.02)} & 77.25 \textsubscript{(±0.44)} & 0.56 \textsubscript{(±0.02)} & \cellcolor[rgb]{ .886,  .941,  .855} 0.25 \textsubscript{(±0.04)} & \cellcolor[rgb]{ .886,  .941,  .855} 0.08 \textsubscript{(±0.02)} & 85.43 \textsubscript{(±0.47)} & 0.79 \textsubscript{(±0.01)} \\
    TIME  & \cellcolor[rgb]{ .886,  .941,  .855} 0.65 \textsubscript{(±0.04)} & \cellcolor[rgb]{ .886,  .941,  .855} 0.40 \textsubscript{(±0.02)} & 73.71 \textsubscript{(±0.69)} & 0.52 \textsubscript{(±0.02)} & \cellcolor[rgb]{ .886,  .941,  .855} 0.39 \textsubscript{(±0.04)} & \cellcolor[rgb]{ .886,  .941,  .855} 0.12 \textsubscript{(±0.01)} & 76.53 \textsubscript{(±0.68)} & 0.54 \textsubscript{(±0.02)} \\
    MIST  & \cellcolor[rgb]{ .886,  .941,  .855} 0.39 \textsubscript{(±0.05)} & \cellcolor[rgb]{ .886,  .941,  .855} 0.22 \textsubscript{(±0.03)} & 76.67 \textsubscript{(±0.35)} & 0.55 \textsubscript{(±0.01)} & \cellcolor[rgb]{ .886,  .941,  .855} 0.26 \textsubscript{(±0.03)} & \cellcolor[rgb]{ .886,  .941,  .855} 0.08 \textsubscript{(±0.01)} & 83.25 \textsubscript{(±0.75)} & 0.76 \textsubscript{(±0.02)} \\
    FairSM & \cellcolor[rgb]{ .886,  .941,  .855} 0.65 \textsubscript{(±0.04)} & \cellcolor[rgb]{ .886,  .941,  .855} 0.43 \textsubscript{(±0.02)} & 74.23 \textsubscript{(±0.59)} & 0.52 \textsubscript{(±0.02)} & \cellcolor[rgb]{ .886,  .941,  .855} 0.42 \textsubscript{(±0.03)} & \cellcolor[rgb]{ .886,  .941,  .855} 0.13 \textsubscript{(±0.01)} & 72.83 \textsubscript{(±0.60)} & 0.51 \textsubscript{(±0.03)} \\
    SANER & \cellcolor[rgb]{ .886,  .941,  .855} 0.52 \textsubscript{(±0.02)} & \cellcolor[rgb]{ .886,  .941,  .855} 0.35 \textsubscript{(±0.02)} & 75.28 \textsubscript{(±0.77)} & 0.52 \textsubscript{(±0.02)} & \cellcolor[rgb]{ .886,  .941,  .855} 0.45 \textsubscript{(±0.03)} & \cellcolor[rgb]{ .886,  .941,  .855} 0.13 \textsubscript{(±0.02)} & 73.64 \textsubscript{(±0.51)} & 0.50 \textsubscript{(±0.02)} \\
    DEAR  & \cellcolor[rgb]{ .886,  .941,  .855} 0.73 \textsubscript{(±0.05)} & \cellcolor[rgb]{ .886,  .941,  .855} 0.45 \textsubscript{(±0.03)} & -     & -     & \cellcolor[rgb]{ .886,  .941,  .855} 0.54 \textsubscript{(±0.02)} & \cellcolor[rgb]{ .886,  .941,  .855} 0.17 \textsubscript{(±0.01)} & -     & - \\
    EntiGen & \cellcolor[rgb]{ .886,  .941,  .855} 0.46 \textsubscript{(±0.05)} & \cellcolor[rgb]{ .886,  .941,  .855} 0.27 \textsubscript{(±0.03)} & 71.89 \textsubscript{(±0.68)} & 0.50 \textsubscript{(±0.01)} & \cellcolor[rgb]{ .886,  .941,  .855} 0.37 \textsubscript{(±0.04)} & \cellcolor[rgb]{ .886,  .941,  .855} 0.09 \textsubscript{(±0.01)} & 69.56 \textsubscript{(±0.74)} & 0.47 \textsubscript{(±0.01)} \\
    ITI-GEN & \cellcolor[rgb]{ .886,  .941,  .855} 0.39 \textsubscript{(±0.06)} & \cellcolor[rgb]{ .886,  .941,  .855} \underline{0.18} \textsubscript{(±0.04)} & 68.82 \textsubscript{(±0.59)} & 0.45 \textsubscript{(±0.02)} & \cellcolor[rgb]{ .886,  .941,  .855} 0.31 \textsubscript{(±0.04)} & \cellcolor[rgb]{ .886,  .941,  .855} 0.10 \textsubscript{(±0.01)} & 66.97 \textsubscript{(±0.57)} & 0.42 \textsubscript{(±0.01)} \\
    \texttt{LightFair} (Ours) & \cellcolor[rgb]{ .886,  .941,  .855} \textbf{0.30} \textsubscript{(±0.08)} & \cellcolor[rgb]{ .886,  .941,  .855} \textbf{0.17} \textsubscript{(±0.04)} & \underline{80.09} \textsubscript{(±0.76)} & \underline{0.63} \textsubscript{(±0.02)} & \cellcolor[rgb]{ .886,  .941,  .855} \textbf{0.18} \textsubscript{(±0.04)} & \cellcolor[rgb]{ .886,  .941,  .855} \textbf{0.06} \textsubscript{(±0.01)} & \underline{86.31} \textsubscript{(±0.70)} & \underline{0.81} \textsubscript{(±0.02)} \\
    \midrule
    \multicolumn{9}{c}{\textbf{Stable Diffusion v2.1}} \\
    \midrule
    SD    & \cellcolor[rgb]{ .886,  .941,  .855} 0.85 \textsubscript{(±0.05)} & \cellcolor[rgb]{ .886,  .941,  .855} 0.54 \textsubscript{(±0.03)} & -     & -     & \cellcolor[rgb]{ .886,  .941,  .855} 0.63 \textsubscript{(±0.01)} & \cellcolor[rgb]{ .886,  .941,  .855} 0.21 \textsubscript{(±0.01)} & -     & - \\
    debias VL & \cellcolor[rgb]{ .886,  .941,  .855} 0.43 \textsubscript{(±0.09)} & \cellcolor[rgb]{ .886,  .941,  .855} 0.28 \textsubscript{(±0.05)} & 70.01 \textsubscript{(±0.96)} & 0.49 \textsubscript{(±0.02)} & \cellcolor[rgb]{ .886,  .941,  .855} \underline{0.49} \textsubscript{(±0.03)} & \cellcolor[rgb]{ .886,  .941,  .855} \underline{0.14} \textsubscript{(±0.01)} & 67.42 \textsubscript{(±0.96)} & 0.46 \textsubscript{(±0.02)} \\
    UCE   & \cellcolor[rgb]{ .886,  .941,  .855} 0.90 \textsubscript{(±0.04)} & \cellcolor[rgb]{ .886,  .941,  .855} 0.59 \textsubscript{(±0.02)} & \textbf{87.94} \textsubscript{(±0.86)} & \textbf{0.71} \textsubscript{(±0.02)} & \cellcolor[rgb]{ .886,  .941,  .855} 0.50 \textsubscript{(±0.03)} & \cellcolor[rgb]{ .886,  .941,  .855} 0.16 \textsubscript{(±0.01)} & \textbf{80.46} \textsubscript{(±1.13)} & \textbf{0.64} \textsubscript{(±0.02)} \\
    EntiGen & \cellcolor[rgb]{ .886,  .941,  .855} \underline{0.42} \textsubscript{(±0.03)} & \cellcolor[rgb]{ .886,  .941,  .855} \underline{0.25} \textsubscript{(±0.02)} & 69.22 \textsubscript{(±1.12)} & 0.49 \textsubscript{(±0.02)} & \cellcolor[rgb]{ .886,  .941,  .855} 0.55 \textsubscript{(±0.03)} & \cellcolor[rgb]{ .886,  .941,  .855} \underline{0.14} \textsubscript{(±0.01)} & 65.34 \textsubscript{(±1.02)} & 0.45 \textsubscript{(±0.02)} \\
    \texttt{LightFair} (Ours) & \cellcolor[rgb]{ .886,  .941,  .855} \textbf{0.33} \textsubscript{(±0.10)} & \cellcolor[rgb]{ .886,  .941,  .855} \textbf{0.21} \textsubscript{(±0.05)} & \underline{75.29} \textsubscript{(±0.99)} & \underline{0.63} \textsubscript{(±0.02)} & \cellcolor[rgb]{ .886,  .941,  .855} \textbf{0.40} \textsubscript{(±0.03)} & \cellcolor[rgb]{ .886,  .941,  .855} \textbf{0.11} \textsubscript{(±0.01)} & \underline{77.47} \textsubscript{(±1.05)} & \underline{0.53} \textsubscript{(±0.03)} \\
    \bottomrule
    \end{tabular}}
\label{tab: new evaluation metrics}
\end{table}%

\subsection{Expanded Version of Qualitative Results}
\label{subsec: supp_Expanded Version of Qualitative Results}
Here, we present an expanded version of qualitative results. \Cref{fig: Expanded_Qualitative_1} showcases the outcomes of our gender-debiased SD v1.5 and v2.1, while \Cref{fig: Expanded_Qualitative_2} highlights the results of our race-debiased SD v1.5 and v2.1. \Cref{fig: competitors_gender} and \Cref{fig: competitors_race} present the visual comparison between our debiased SD and those of other competitors.

\begin{figure}[h!]
    \centering
    \begin{subfigure}[b]{\textwidth}
        \includegraphics[width=\textwidth]{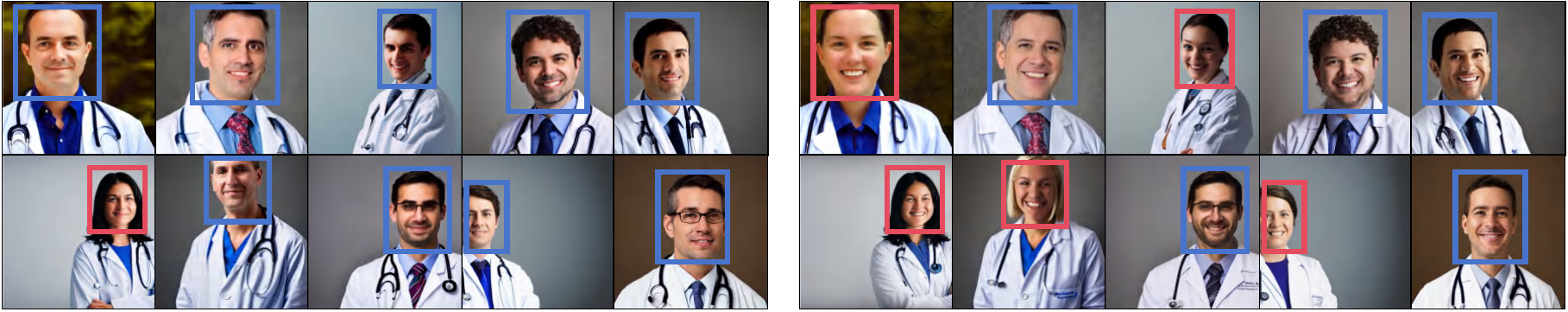}
        \caption{Prompt: ``Photo portrait of a \textbf{doctor}, a person''. Left: original SD v1.5. Right: our \textbf{gender}-debiased \textbf{SD v1.5}.}
        \label{fig: Expanded_Qualitative_1_1}
    \end{subfigure}
    \hfill
    \begin{subfigure}[b]{\textwidth}
        \includegraphics[width=\textwidth]{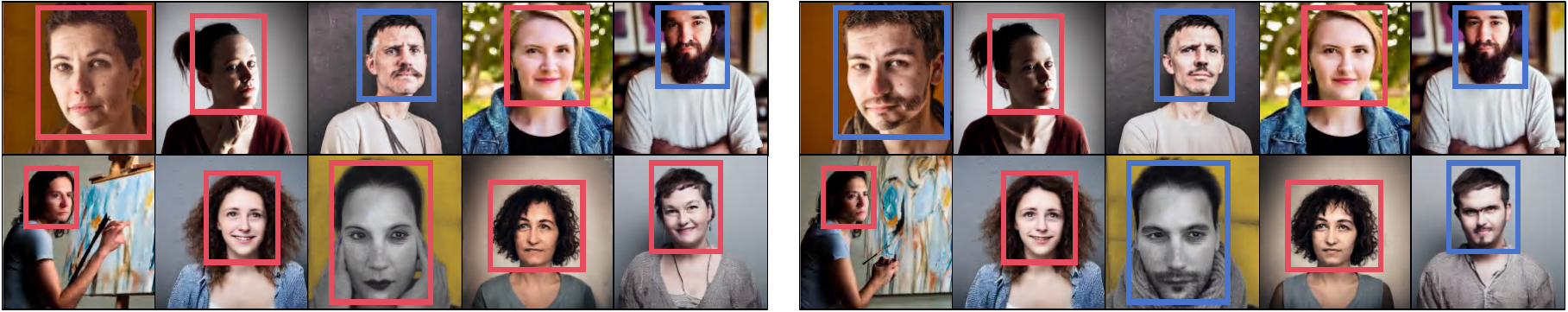}
        \caption{Prompt: ``Photo portrait of an \textbf{artist}, a person''. Left: original SD v1.5. Right: our \textbf{gender}-debiased \textbf{SD v1.5}.}
        \label{fig: Expanded_Qualitative_1_2}
    \end{subfigure}
    \hfill
    \begin{subfigure}[b]{\textwidth}
        \includegraphics[width=\textwidth]{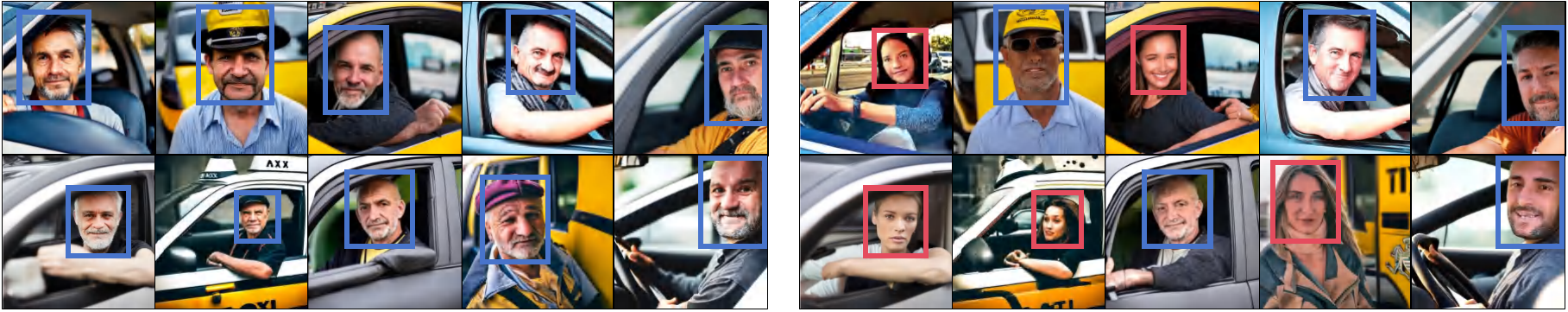}
        \caption{Prompt: ``Photo portrait of a \textbf{taxi driver}, a person''. Left: original SD v2.1. Right: our \textbf{gender}-debiased \textbf{SD v2.1}.}
        \label{fig: Expanded_Qualitative_1_3}
    \end{subfigure}
    \hfill
    \begin{subfigure}[b]{\textwidth}
        \includegraphics[width=\textwidth]{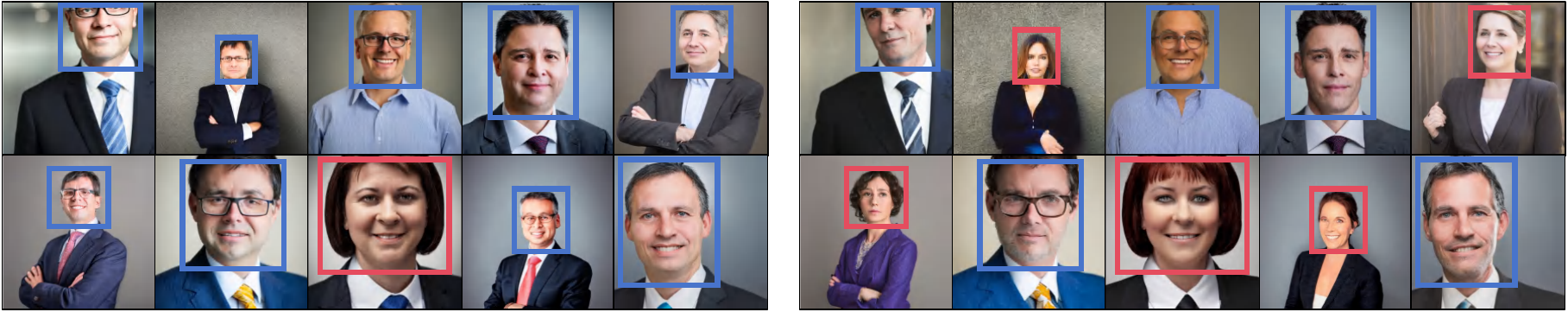}
        \caption{Prompt: ``Photo portrait of a \textbf{CEO}, a person''. Left: original SD v2.1. Right: our \textbf{gender}-debiased \textbf{SD v2.1}.}
        \label{fig: Expanded_Qualitative_1_4}
    \end{subfigure}
    \caption{Expanded version of qualitative results. Images generated by the original SD (left) and our debiased SD (right). For the same prompt, the images in corresponding positions are generated using the same random noise. Bounding boxes denote detected faces (Gender: \textcolor[HTML]{4874CB}{Male}, \textcolor[HTML]{E54C5E}{Female}; Race: \textcolor[HTML]{D9D9D9}{White}, \textcolor[HTML]{F2BA02}{Asian}, \textcolor[HTML]{000000}{Black}).}
    \label{fig: Expanded_Qualitative_1}
\end{figure}

\newpage
\begin{figure}[h!]
    \centering
    \begin{subfigure}[b]{\textwidth}
        \includegraphics[width=\textwidth]{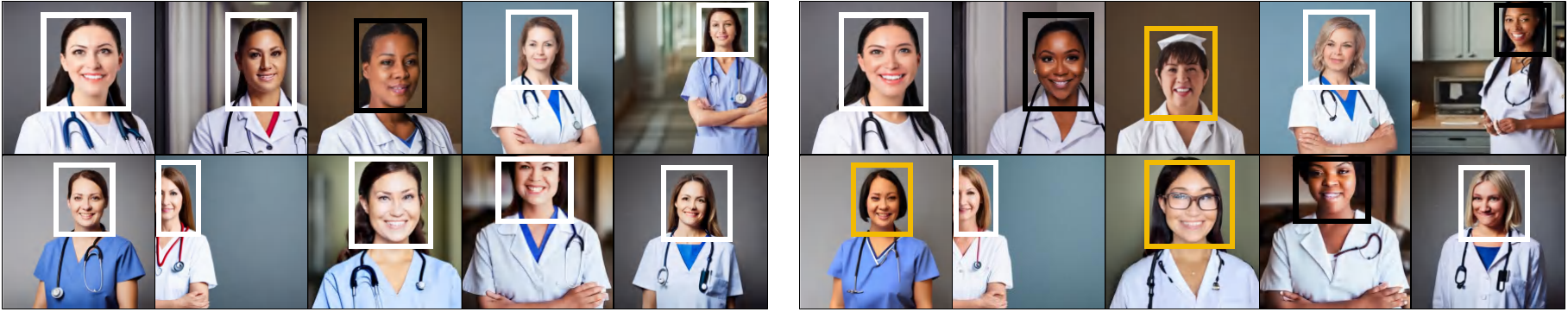}
        \caption{Prompt: ``Photo portrait of a \textbf{nurse}, a person''. Left: original SD v1.5. Right: our \textbf{race}-debiased \textbf{SD v1.5}.}
        \label{fig: Expanded_Qualitative_2_1}
    \end{subfigure}
    \hfill
    \begin{subfigure}[b]{\textwidth}
        \includegraphics[width=\textwidth]{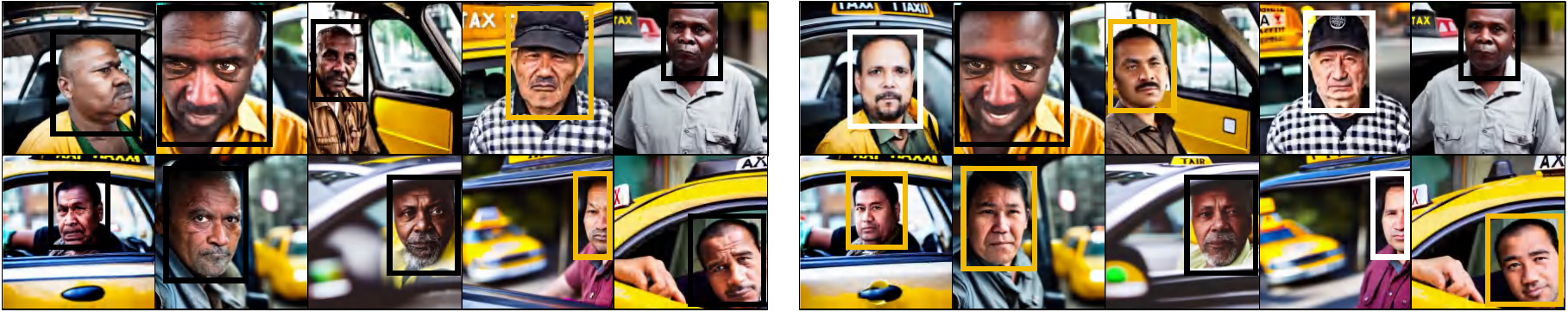}
        \caption{Prompt: ``Photo portrait of a \textbf{taxi driver}, a person''. Left: original SD v1.5. Right: our \textbf{race}-debiased \textbf{SD v1.5}.}
        \label{fig: Expanded_Qualitative_2_2}
    \end{subfigure}
    \hfill
    \begin{subfigure}[b]{\textwidth}
        \includegraphics[width=\textwidth]{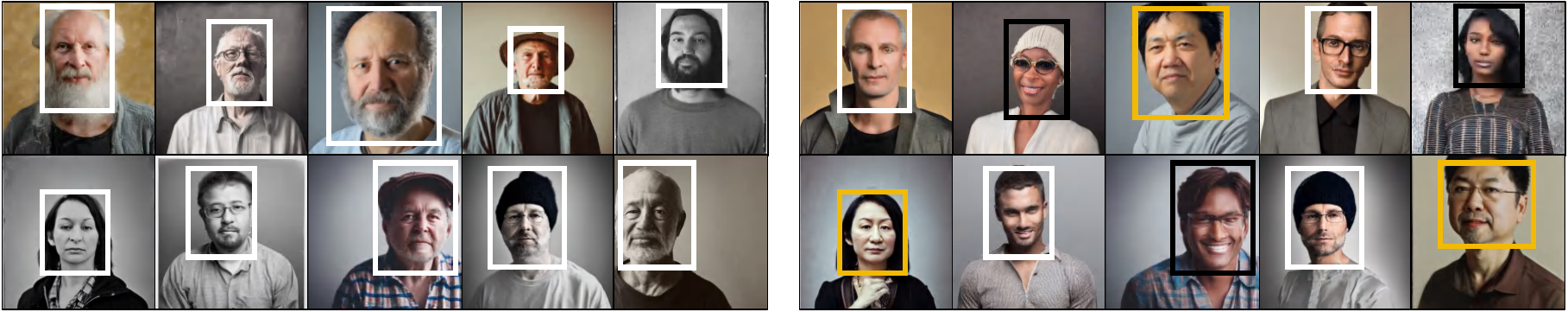}
        \caption{Prompt: ``Photo portrait of an \textbf{artist}, a person''. Left: original SD v2.1. Right: our \textbf{race}-debiased \textbf{SD v2.1}.}
        \label{fig: Expanded_Qualitative_2_3}
    \end{subfigure}
    \hfill
    \begin{subfigure}[b]{\textwidth}
        \includegraphics[width=\textwidth]{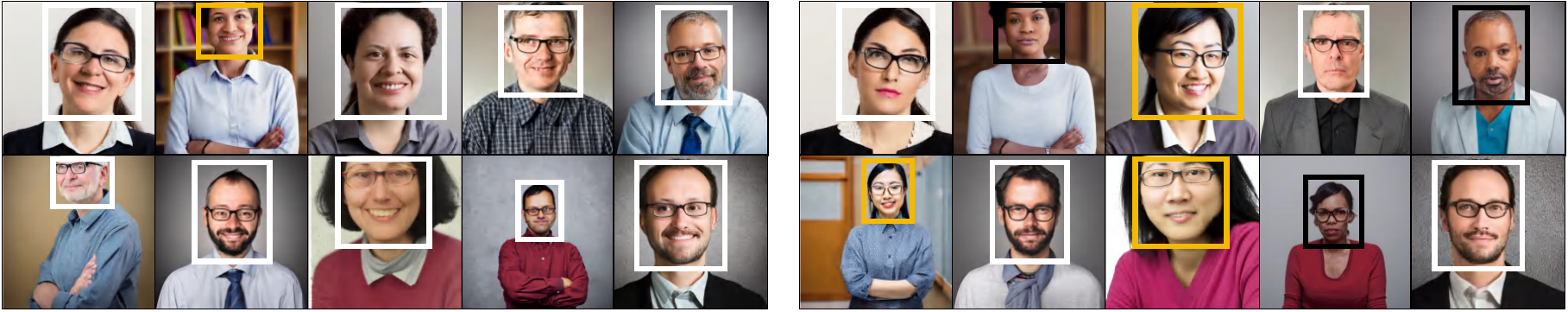}
        \caption{Prompt: ``Photo portrait of a \textbf{teacher}, a person''. Left: original SD v2.1. Right: our \textbf{race}-debiased \textbf{SD v2.1}.}
        \label{fig: Expanded_Qualitative_2_4}
    \end{subfigure}
    \caption{Expanded version of qualitative results. Images generated by the original SD (left) and our debiased SD (right). For the same prompt, the images in corresponding positions are generated using the same random noise. Bounding boxes denote detected faces (Gender: \textcolor[HTML]{4874CB}{Male}, \textcolor[HTML]{E54C5E}{Female}; Race: \textcolor[HTML]{D9D9D9}{White}, \textcolor[HTML]{F2BA02}{Asian}, \textcolor[HTML]{000000}{Black}).}
    \label{fig: Expanded_Qualitative_2}
\end{figure}

\newpage

\begin{figure}[h!]
  \centering
   \includegraphics[width=\linewidth]{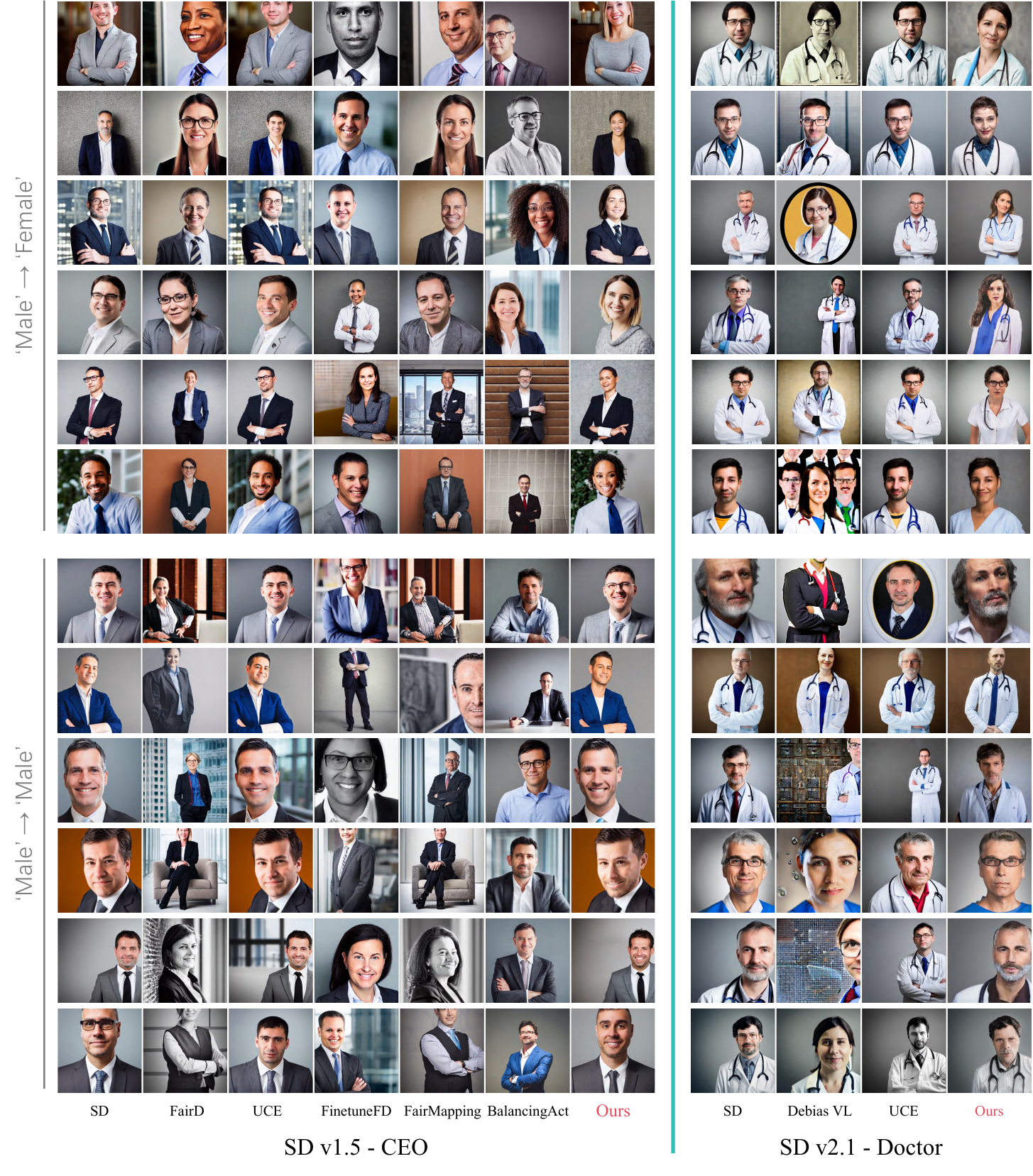}
   \caption{Expanded version of qualitative results. The performance of our \textbf{gender}-debiased SD and other competitors in terms of attribute transformation (\texttt{`Male'} $\rightarrow$ \texttt{`Female'}) and preservation (\texttt{`Male'} $\rightarrow$ \texttt{`Male'}). For the same row, the images in corresponding positions are generated using the same random noise.}
   \label{fig: competitors_gender}
\end{figure}

\newpage

\begin{figure}[h!]
  \centering
   \includegraphics[width=\linewidth]{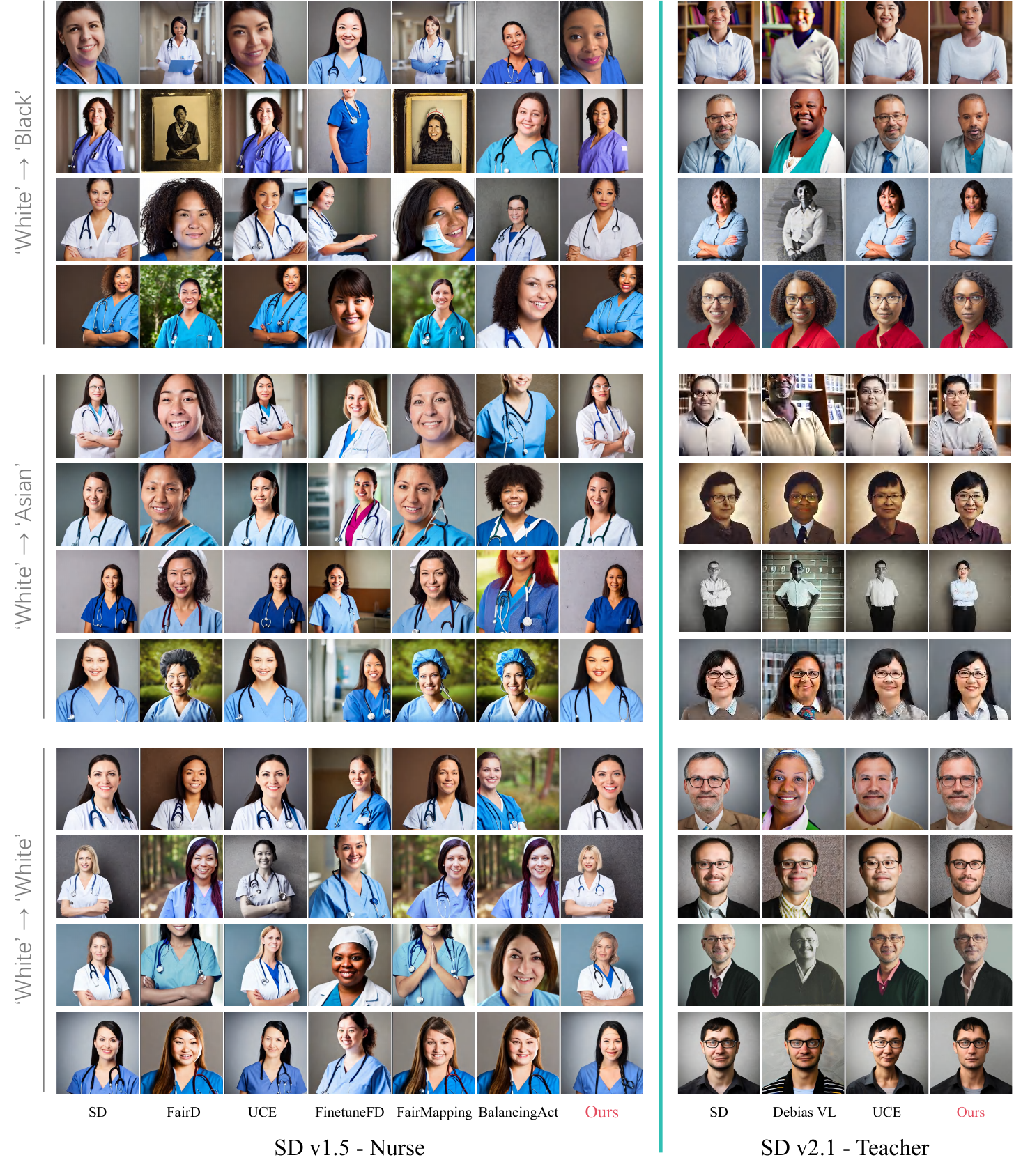}
   \caption{Expanded version of qualitative results. The performance of our \textbf{race}-debiased SD and other competitors in terms of attribute transformation (\texttt{`White'} $\rightarrow$ \texttt{`Black'}, \texttt{`White'} $\rightarrow$ \texttt{`Asian'}) and preservation (\texttt{`White'} $\rightarrow$ \texttt{`White'}). For the same row, the images in corresponding positions are generated using the same random noise.}
   \label{fig: competitors_race}
\end{figure}

\subsection{Prompts from LAION-Aesthetics V2}
\label{subsec: supp_Prompts from LAION-Aesthetics V2}
We obtain the following $30$ prompts from the LAION-Aesthetics V2 dataset~\cite{schuhmann2022laion}, following the settings of \cite{shenfinetuning}. In addition, we use the sentence structure ``Photo of a person, describing \{prompt\}'' as the prompt to ensure that the output is a person.

[\textit{`Photograph of a doctor holding a headset sitting in front of a laptop', `a doctor examines a patients thyroid', `A doctor in a white coat on a computer screen', `A doctor speaks with a cancer patient', `A doctor giving a patient an eye exam', `Leadership Management Strategy. Concept as a ceo rowing a giant 3D illustration king crown in water as a business and financial metaphor for navigating and stock illustration', `a CEO talking to an advisor on video call', `Close-up image of a ceo pointing at the data on the board while talking with colleagues on the foreground', `Portrait of a CEO With a Team of Business Executives Standing by a Window', `A CEO tries to get out of a speeding ticket by using a rather creative excuse, but fails', `A taxi driver from Crimea', `portrait of a taxi driver with cab', `A taxi driver in Malaga wearing a face mask with a Spanish flag amid the coronavirus outbreak', `A taxi driver reads a newspaper while waiting for customers at a taxi rank in Ljubljana, Slovenia, on Friday, May 10, 2013', `TAXI – A taxi driver went on a tirade about Uber drivers at a council meeting on Dec. 16', `Ladies leg in Cast being treated by a Nurse', `A nurse and an old man in a wheelchair high five', `white backround vector illustration of a nurse baby Illusztráció', `A nurse and a smiling elderly patient in a wheelchair in the hall', `A nurse places electrodes on Natalia Rodriguez's forehead, prepping her for bilateral ECT treatment', `Young Artist Paints Sacre Coeur from the Ancient Rue Narvins by Ed Clark', `An artist sketching a portrait of His Holiness the Dalai Lama on the third day of 'Thank You India' program at Kozhikode, Kerala, 13 Sept 2016', `The two hands of an artist making a pot on the Ganges Art Studio Tour, one of the best things to do in Salt Spring Island in the summer', `An artist paints a creature from ancient epics to decorate the royal crematorium and funeral complex. (AP Photo/Sakchai Lalit)', `Doo Style', an artist tagging the basement levels at Le Bloc', `A teacher in front of the school building vector image', `A teacher leads class in Yixing Middle School in Lianshui County, Jiangsu Province, China', `A teacher asks a question during a class at the Yeshiva high school Chachme Lev in Jerusalem. March 15, 2016', `A teacher talking at the front of a classroom, Ghana', `A teacher put a beaker of water on a hot plate'}]

\subsection{Qualitative Results on Diverse Prompts}
\label{subsec: supp_Qualitative Results on Diverse Prompts}
In this section, we present qualitative results on diverse prompts. \Cref{fig: Diverse_Prompts_1} showcases the results for non-templated prompts. \Cref{fig: Diverse_Prompts_2} highlights the results in scenarios involving multiple people.

\newpage
\begin{figure}[h!]
    \centering
    \begin{subfigure}[b]{\textwidth}
        \includegraphics[width=\textwidth]{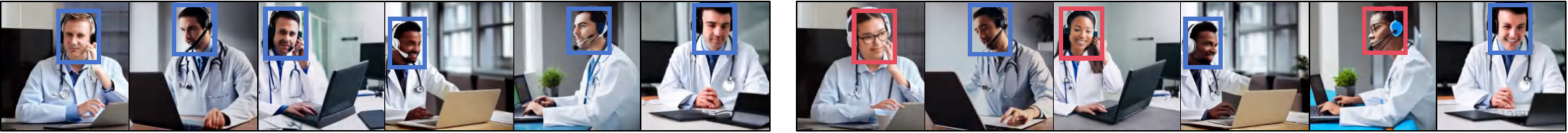}
        \caption{Prompt: ``Photograph of a doctor holding a headset sitting in front of a laptop''. Left: original SD v1.5. Right: our debiased \textbf{SD v1.5}.}
        \label{fig: Diverse_Prompts_1_1}
    \end{subfigure}
    \hfill
    \begin{subfigure}[b]{\textwidth}
        \includegraphics[width=\textwidth]{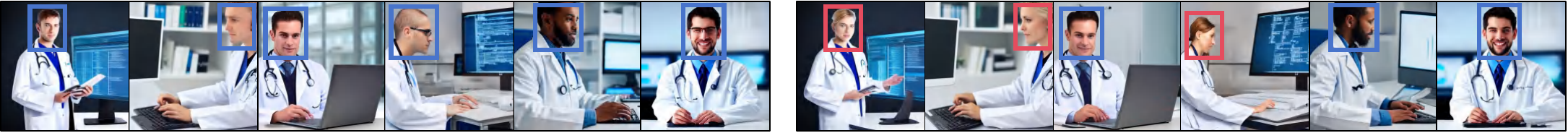}
        \caption{Prompt: ``A doctor in a white coat on a computer screen''. Left: original SD v1.5. Right: our debiased \textbf{SD v1.5}.}
        \label{fig: Diverse_Prompts_1_2}
    \end{subfigure}
    \hfill
    \begin{subfigure}[b]{\textwidth}
        \includegraphics[width=\textwidth]{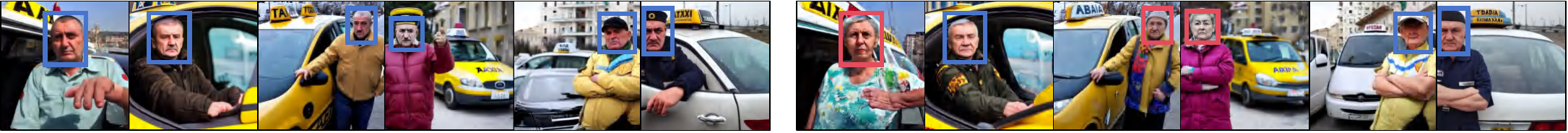}
        \caption{Prompt: ``A taxi driver from Crimea''. Left: original SD v1.5. Right: our debiased \textbf{SD v1.5}.}
        \label{fig: Diverse_Prompts_1_3}
    \end{subfigure}
    \hfill
    \begin{subfigure}[b]{\textwidth}
        \includegraphics[width=\textwidth]{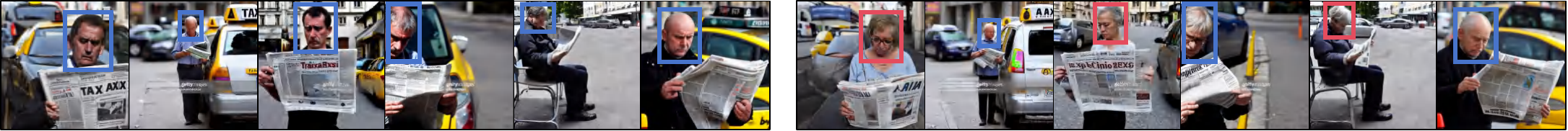}
        \caption{Prompt: ``A taxi driver reads a newspaper while waiting for customers at a taxi rank in Ljubljana, Slovenia, on Friday, May 10, 2013''. Left: original SD v1.5. Right: our debiased \textbf{SD v1.5}.}
        \label{fig: Diverse_Prompts_1_4}
    \end{subfigure}
    \hfill
    \begin{subfigure}[b]{\textwidth}
        \includegraphics[width=\textwidth]{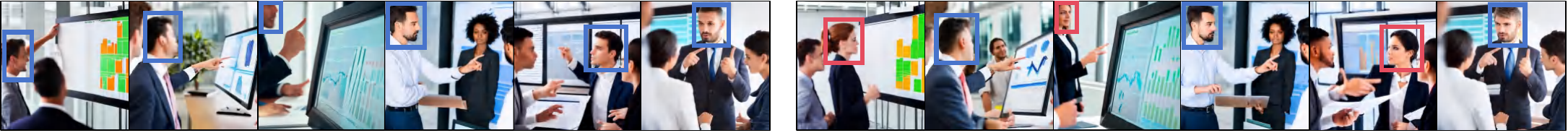}
        \caption{Prompt: ``Close-up image of a ceo pointing at the data on the board while talking with colleagues on the foreground''. Left: original SD v2.1. Right: our debiased \textbf{SD v2.1}.}
        \label{fig: Diverse_Prompts_1_5}
    \end{subfigure}
    \hfill
    \begin{subfigure}[b]{\textwidth}
        \includegraphics[width=\textwidth]{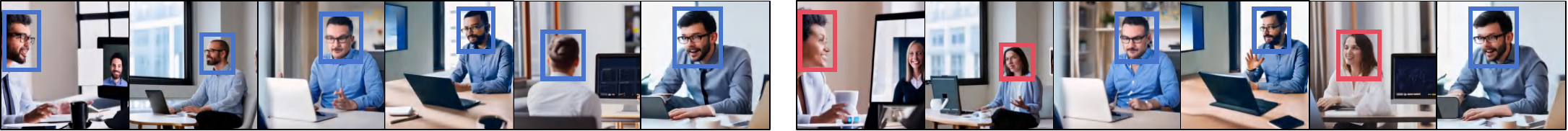}
        \caption{Prompt: ``a CEO talking to an advisor on video call''. Left: original SD v2.1. Right: our debiased \textbf{SD v2.1}.}
        \label{fig: Diverse_Prompts_1_6}
    \end{subfigure}
    \caption{Qualitative results on non-templated prompts. Images generated by the original SD (left) and our gender-debiased SD (right). For the same prompt, the images in corresponding positions are generated using the same random noise. Bounding boxes denote detected faces (Gender: \textcolor[HTML]{4874CB}{Male}, \textcolor[HTML]{E54C5E}{Female}).}
    \label{fig: Diverse_Prompts_1}
\end{figure}

\newpage

\begin{figure}[h!]
    \centering
    \begin{subfigure}[b]{\textwidth}
        \includegraphics[width=\textwidth]{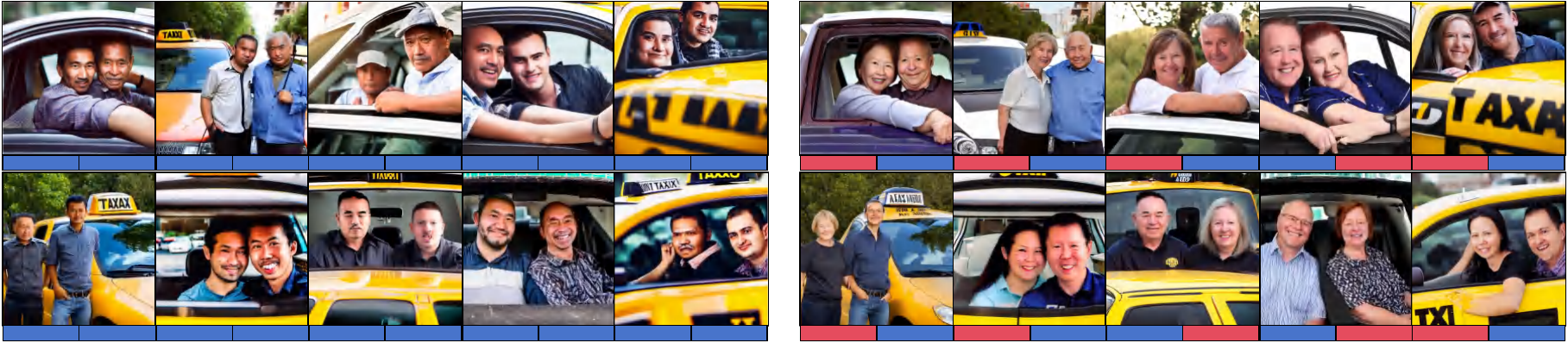}
        \caption{Prompt: ``Photo portrait of two taxi drivers, \textbf{two people}''. Left: original SD v1.5. Right: our debiased \textbf{SD v1.5}.}
        \label{fig: Diverse_Prompts_2_1}
    \end{subfigure}
    \hfill
    \begin{subfigure}[b]{\textwidth}
        \includegraphics[width=\textwidth]{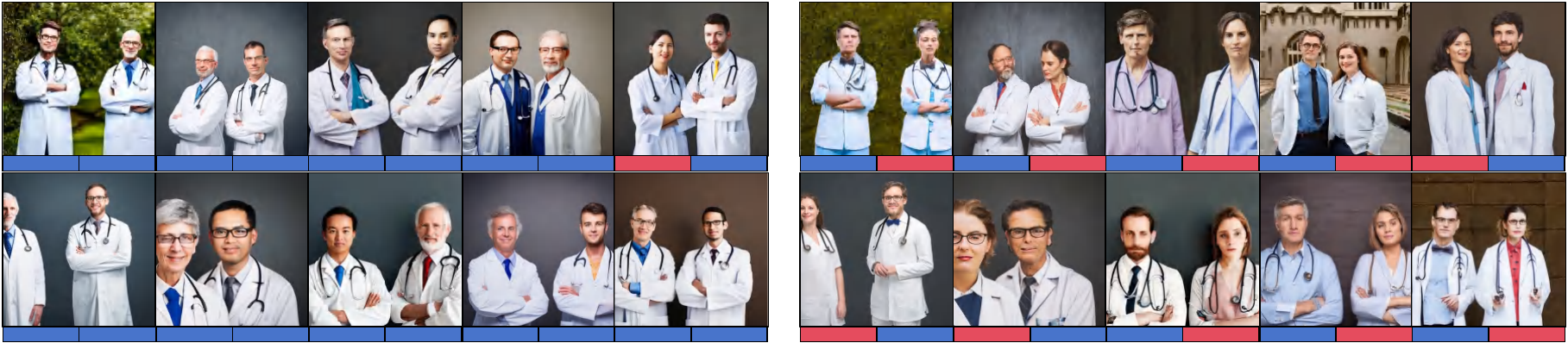}
        \caption{Prompt: ``Photo portrait of two doctors, \textbf{two people}''. Left: original SD v2.1. Right: our debiased \textbf{SD v2.1}.}
        \label{fig: Diverse_Prompts_2_2}
    \end{subfigure}
    \hfill
    \begin{subfigure}[b]{\textwidth}
        \includegraphics[width=\textwidth]{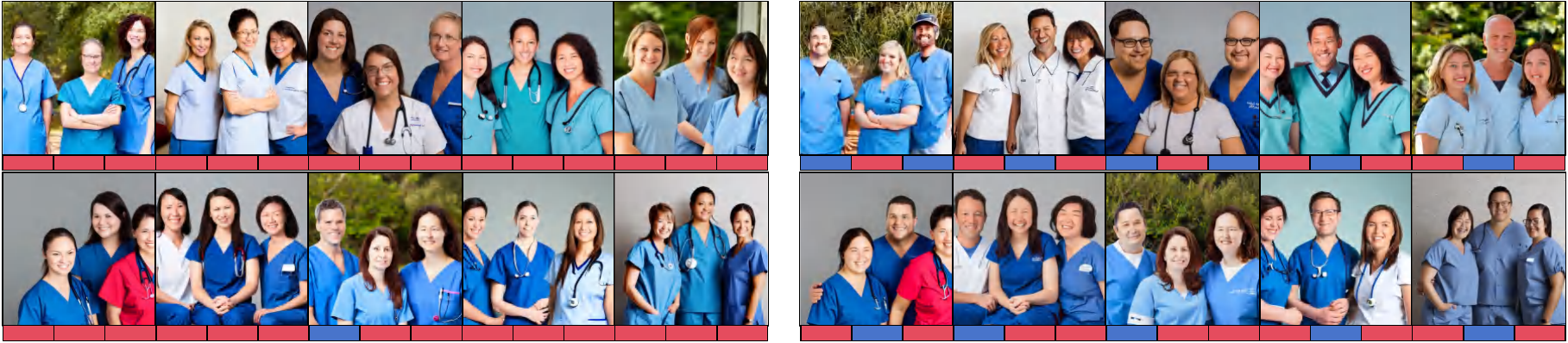}
        \caption{Prompt: ``Photo portrait of three nurses, \textbf{three people}''. Left: original SD v1.5. Right: our debiased \textbf{SD v1.5}.}
        \label{fig: Diverse_Prompts_2_3}
    \end{subfigure}
    \hfill
    \begin{subfigure}[b]{\textwidth}
        \includegraphics[width=\textwidth]{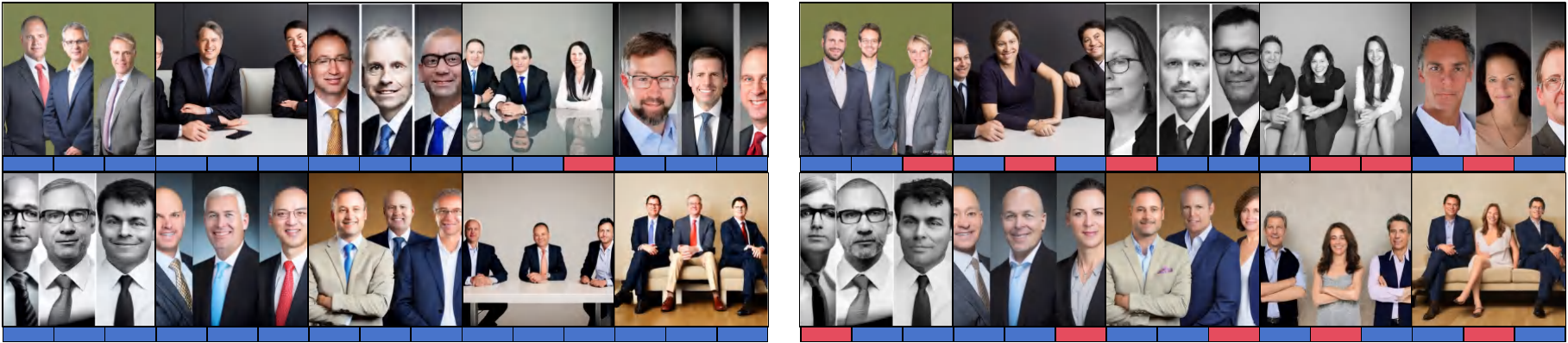}
        \caption{Prompt: ``Photo portrait of three CEOs, \textbf{three people}''. Left: original SD v2.1. Right: our debiased \textbf{SD v2.1}.}
        \label{fig: Diverse_Prompts_2_4}
    \end{subfigure}
    \caption{Qualitative results on multiple people scenarios. Images generated by the original SD (left) and our gender-debiased SD (right). For the same prompt, the images in corresponding positions are generated using the same random noise. Each stripe below the images corresponds to one person in the generated image (Gender: \textcolor[HTML]{4874CB}{Male}, \textcolor[HTML]{E54C5E}{Female}).}
    \label{fig: Diverse_Prompts_2}
\end{figure}

\newpage

\subsection{Results of Mitigating Gender$\times$Race Bias}
\label{subsec: supp_Results of Mitigating Gender Race Bias}
In this section, we explore the performance of debiasing both Gender and Race attributes simultaneously. Specifically, we consider the attributes `\texttt{Male White}', `\texttt{Male Black}', `\texttt{Male Asian}', `\texttt{Female White}', `\texttt{Female Black}', and `\texttt{Female Asian}'. Since we have already performed debiasing for gender and race attributes separately in \Cref{subsec: Overall Performance}, we leverage the previously obtained training results to decouple the cross-attribute problem into two single-attribute problems. Specifically, we load both the gender-debiased and race-debiased LoRA modules simultaneously and conduct testing. The results are shown in \Cref{tab: Quantitative results Gender Race} and \Cref{fig: Gender_Race}. The results indicate that our model achieves excellent cross-attribute debiasing on SD v1.5 and v2.1, further demonstrating that our debiasing modules can be combined to effectively address multiple attributes simultaneously.

\begin{table}[h!]
  \centering
  \renewcommand\arraystretch{1.1}
  \caption{Quantitative results on Gender$\times$Race attributes.}
  \resizebox{0.9\linewidth}{!}{
    \begin{tabular}{cc|cc|cccc}
    \toprule
    \multirow{2}[2]{*}{\textbf{Backbone}} & \multirow{2}[2]{*}{\textbf{Method}} & \multicolumn{2}{c|}{\textbf{Fairness}} & \multicolumn{4}{c}{\textbf{Quality}} \\
          &       & \textbf{Bias-O $\downarrow$} & \textbf{Bias-Q $\downarrow$} & \textbf{CLIP-T $\uparrow$} & \textbf{CLIP-I $\uparrow$} & \textbf{FID $\downarrow$} & \textbf{IS $\uparrow$} \\
    \midrule
    \multirow{2}[2]{*}{SD v1.5} & SD    & 0.29 \textsubscript{(±0.01)} & 1.31 \textsubscript{(±0.54)} & 29.32 \textsubscript{(±0.06)} & -     & 275.85 \textsubscript{(±6.29)} & 1.26 \textsubscript{(±0.03)} \\
          & \cellcolor[rgb]{0.918, 0.835, 1}Ours  & \cellcolor[rgb]{0.918, 0.835, 1}\textbf{0.14} \textsubscript{(±0.01)} & \cellcolor[rgb]{0.918, 0.835, 1}\textbf{0.91} \textsubscript{(±0.32)} & \cellcolor[rgb]{0.918, 0.835, 1}\textbf{31.34} \textsubscript{(±0.20)} & \cellcolor[rgb]{0.918, 0.835, 1}\textbf{62.82} \textsubscript{(±5.57)} & \cellcolor[rgb]{0.918, 0.835, 1}\textbf{259.96} \textsubscript{(±7.75)} & \cellcolor[rgb]{0.918, 0.835, 1}\textbf{1.33} \textsubscript{(±0.03)} \\
    \midrule
    \multirow{2}[2]{*}{SD v2.1} & SD    & 0.32 \textsubscript{(±0.01)} & 1.12 \textsubscript{(±0.37)} & 29.90 \textsubscript{(±0.15)} & -     & 259.36 \textsubscript{(±4.81)} & 1.23 \textsubscript{(±0.03)} \\
          & \cellcolor[rgb]{0.918, 0.835, 1}Ours  & \cellcolor[rgb]{0.918, 0.835, 1}\textbf{0.23} \textsubscript{(±0.02)} & \cellcolor[rgb]{0.918, 0.835, 1}\textbf{0.89} \textsubscript{(±0.22)} & \cellcolor[rgb]{0.918, 0.835, 1}\textbf{30.32} \textsubscript{(±0.19)} & \cellcolor[rgb]{0.918, 0.835, 1}\textbf{56.41} \textsubscript{(±2.04)} & \cellcolor[rgb]{0.918, 0.835, 1}\textbf{230.39} \textsubscript{(±5.95)} & \cellcolor[rgb]{0.918, 0.835, 1}\textbf{1.25} \textsubscript{(±0.01)} \\
    \bottomrule
    \end{tabular}%
    }
  \label{tab: Quantitative results Gender Race}%
\end{table}%

\begin{figure}[h!]
    \centering
    \begin{subfigure}[b]{\textwidth}
        \includegraphics[width=\textwidth]{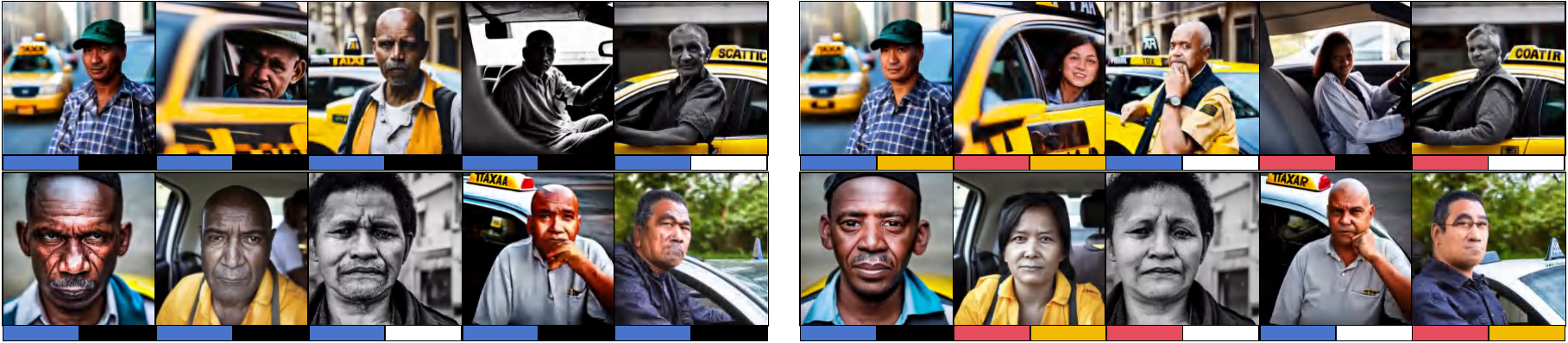}
        \caption{Prompt: ``Photo portrait of a taxi driver, a person''. Left: original SD v1.5. Right: our debiased \textbf{SD v1.5}.}
        \label{fig: Gender_Race_1}
    \end{subfigure}
    \hfill
    \begin{subfigure}[b]{\textwidth}
        \includegraphics[width=\textwidth]{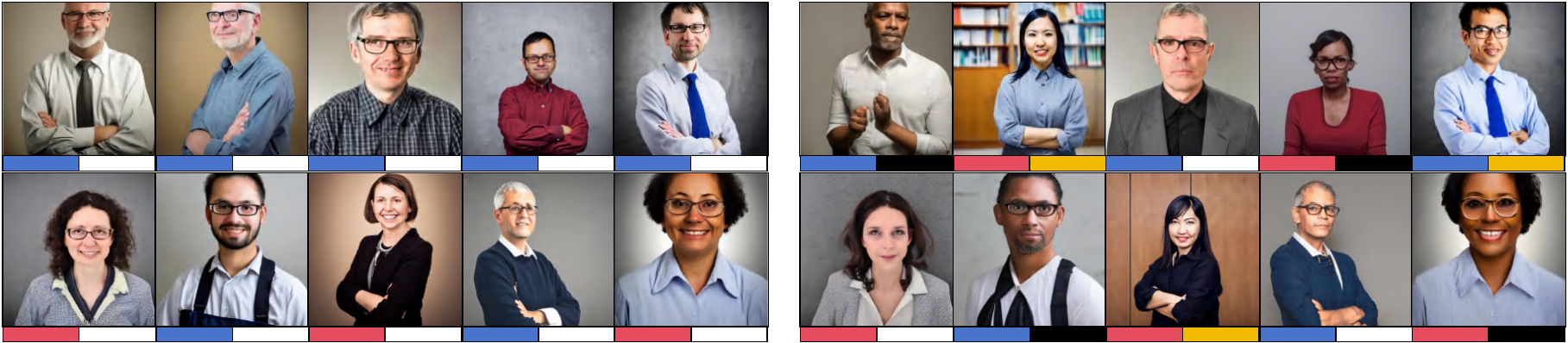}
        \caption{Prompt: ``Photo portrait of a teacher, a person''. Left: original SD v2.1. Right: our debiased \textbf{SD v2.1}.}
        \label{fig: Gender_Race_2}
    \end{subfigure}
    \caption{Qualitative results on gender$\times$race debiasing. Images generated by the original SD (left) and our gender$\times$race-debiased SD (right). For the same prompt, the images in corresponding positions are generated using the same random noise. Each stripe below the images represents a specific attribute (Gender: \textcolor[HTML]{4874CB}{Male}, \textcolor[HTML]{E54C5E}{Female}; Race: \textcolor[HTML]{D9D9D9}{White}, \textcolor[HTML]{F2BA02}{Asian}, \textcolor[HTML]{000000}{Black}).}
    \label{fig: Gender_Race}
\end{figure}

\subsection{Results of Mitigating Age Bias}
\label{subsec: supp_Results of Mitigating Age Bias}
In this section, we focus on debiasing the age attribute. Specifically, we consider two attributes: `\texttt{Young (ages 0 to 39)}' and `\texttt{Old (ages 39 and above)}'. Our goal is to achieve fairness generation across these two attributes. The experimental settings are consistent with those used in \Cref{sec: experiments}. The results, presented in \Cref{tab: Quantitative results Age} and \Cref{fig: Age}, demonstrate that our method achieves effective debiasing on both versions of SD. This highlights the strong generalization capability of our approach, extending beyond gender and race to other attributes like age.

\begin{table}[h!]
  \centering
  \renewcommand\arraystretch{1.1}
  \caption{Quantitative results on Age attributes.}
  \resizebox{0.9\linewidth}{!}{
    \begin{tabular}{cc|cc|cccc}
    \toprule
    \multirow{2}[2]{*}{\textbf{Backbone}} & \multirow{2}[2]{*}{\textbf{Method}} & \multicolumn{2}{c|}{\textbf{Fairness}} & \multicolumn{4}{c}{\textbf{Quality}} \\
          &       & \textbf{Bias-O $\downarrow$} & \textbf{Bias-Q $\downarrow$} & \textbf{CLIP-T $\uparrow$} & \textbf{CLIP-I $\uparrow$} & \textbf{FID $\downarrow$} & \textbf{IS $\uparrow$} \\
    \midrule
    \multirow{2}[2]{*}{SD v1.5} & SD    & 0.65 \textsubscript{(±0.04)} & 1.23 \textsubscript{(±0.44)} & 29.23 \textsubscript{(±0.06)} & -     & 311.56 \textsubscript{(±11.95)} & 1.23 \textsubscript{(±0.02)} \\
          & \cellcolor[rgb]{0.918, 0.835, 1}Ours  & \cellcolor[rgb]{0.918, 0.835, 1}\textbf{0.34} \textsubscript{(±0.02)} & \cellcolor[rgb]{0.918, 0.835, 1}\textbf{0.95} \textsubscript{(±0.33)} & \cellcolor[rgb]{0.918, 0.835, 1}\textbf{30.15} \textsubscript{(±0.06)} & \cellcolor[rgb]{0.918, 0.835, 1}\textbf{79.56} \textsubscript{(±4.11)} & \cellcolor[rgb]{0.918, 0.835, 1}\textbf{278.66} \textsubscript{(±9.54)} & \cellcolor[rgb]{0.918, 0.835, 1}\textbf{1.25} \textsubscript{(±0.02)} \\
    \midrule
    \multirow{2}[2]{*}{SD v2.1} & SD    & 0.83 \textsubscript{(±0.07)} & 1.14 \textsubscript{(±0.32)} & 29.30 \textsubscript{(±0.11)} & -     & 287.64 \textsubscript{(±10.67)} & 1.24 \textsubscript{(±0.01)} \\
          & \cellcolor[rgb]{0.918, 0.835, 1}Ours  & \cellcolor[rgb]{0.918, 0.835, 1}\textbf{0.32} \textsubscript{(±0.04)} & \cellcolor[rgb]{0.918, 0.835, 1}\textbf{0.89} \textsubscript{(±0.23)} & \cellcolor[rgb]{0.918, 0.835, 1}\textbf{31.23} \textsubscript{(±0.07)} & \cellcolor[rgb]{0.918, 0.835, 1}\textbf{82.22} \textsubscript{(±6.23)} & \cellcolor[rgb]{0.918, 0.835, 1}\textbf{246.36} \textsubscript{(±9.03)} & \cellcolor[rgb]{0.918, 0.835, 1}\textbf{1.26} \textsubscript{(±0.01)} \\
    \bottomrule 
    \end{tabular}%
    }
  \label{tab: Quantitative results Age}%
\end{table}%

\newpage

\begin{figure}[h!]
    \centering
    \begin{subfigure}[b]{\textwidth}
        \includegraphics[width=\textwidth]{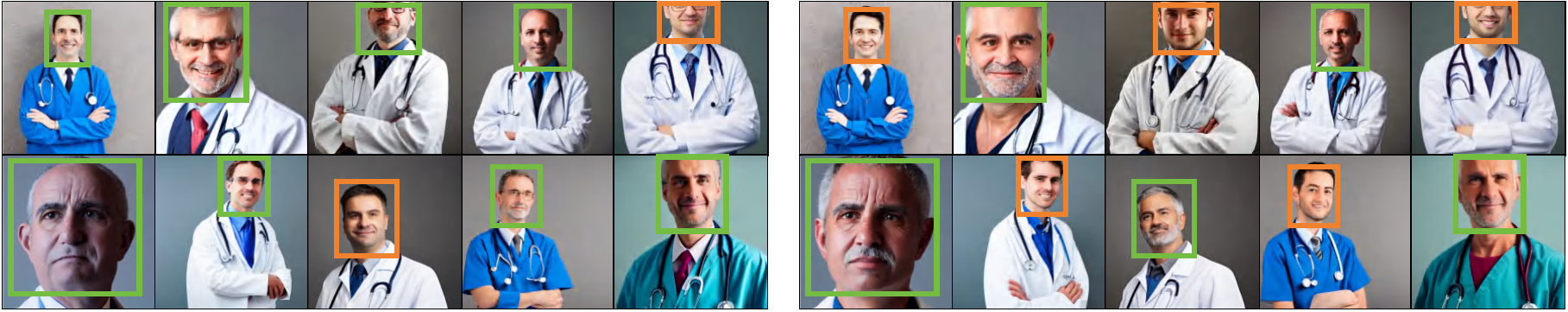}
        \caption{Prompt: ``Photo portrait of a doctor, a person''. Left: original SD v1.5. Right: our debiased \textbf{SD v1.5}.}
        \label{fig: Age_1}
    \end{subfigure}
    \hfill
    \begin{subfigure}[b]{\textwidth}
        \includegraphics[width=\textwidth]{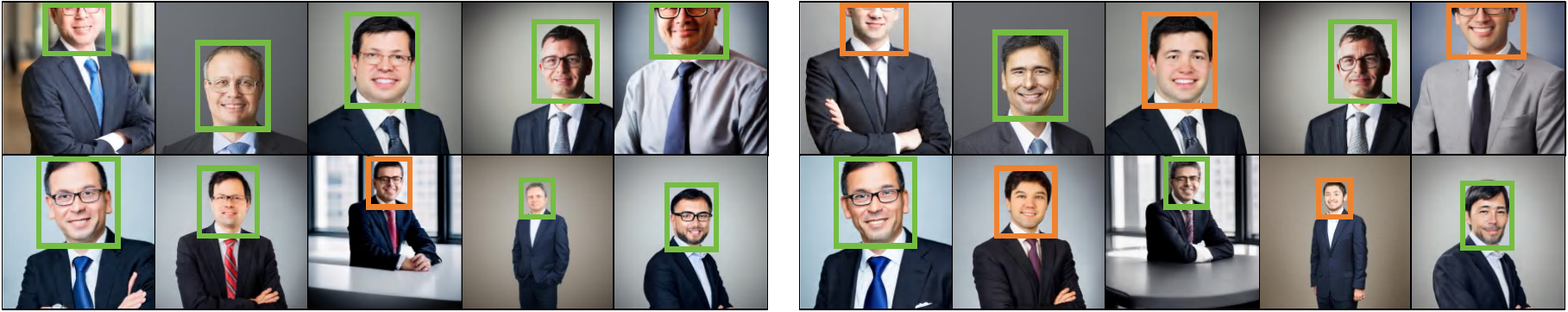}
        \caption{Prompt: ``Photo portrait of a CEO, a person''. Left: original SD v2.1. Right: our debiased \textbf{SD v2.1}.}
        \label{fig: Age_2}
    \end{subfigure}
    \caption{Qualitative results on age debiasing. Images generated by the original SD (left) and our age-debiased SD (right). For the same prompt, the images in corresponding positions are generated using the same random noise. Each stripe below the images represents a specific attribute (Age: \textcolor[HTML]{EE822F}{Young}, \textcolor[HTML]{75BD42}{Old}).}
    \label{fig: Age}
\end{figure}

\subsection{Results of Debiasing under Diverse Target Distributions}
\label{subsec: supp_Results of Debiasing under Diverse Target Distributions}

We evaluate our \texttt{LightFair} under imbalanced target distributions. To achieve this, we modify the centroid-to-attribute distance to support debiasing toward arbitrary attribute distributions. Specifically, we revise \Cref{l_o} as follows:
\begin{equation}
\ell_{o}=\sqrt{\frac{1}{|A|}\sum_{i=1}^{|A|}\left[\textcolor{red}{\gamma_i}s\Big(\text{emb}^{T}_{c}(\cdot),\mathbb{E}\left[\text{emb}^{I}_{c}(a_i)\right]\Big)-\overline{s}\right]^2},
\label{l_o_imbalanced}
\end{equation}
where $\gamma_i$ controls the target distribution.

In the original SD, the gender bias for the concept `\texttt{doctor}' exhibits a male-to-female ratio of $9:1$. We set the target distributions to $7:3$ and $3:7$, respectively. Using \texttt{LightFair}, we generate $1000$ images and record the resulting gender distribution, as shown in Figure 1. The results demonstrate that \texttt{LightFair} can be extended to support debiasing toward arbitrary attribute distributions.

\begin{figure}[h!]
  \centering
   \includegraphics[width=0.6\linewidth]{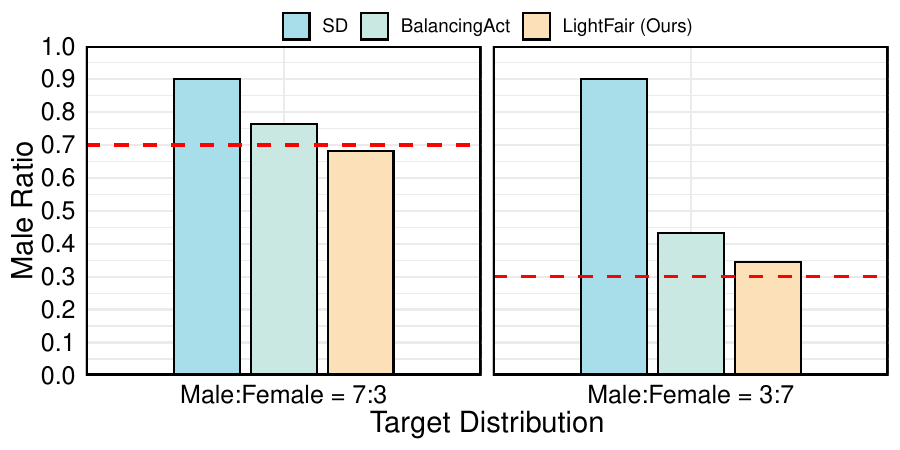}
   \caption{Results of debiasing under diverse target distributions.}
   \label{fig: general_distribution}
\end{figure}

\newpage

\subsection{Ablation Study Results on the Impact of Adaptive Foreground Extraction}
\label{subsec: supp_Ablation Study Results on the Impact of Adaptive Foreground Extraction}
In \Cref{sebsec: Ablation Study}, we conducted an initial ablation of the adaptive foreground extraction (AFE) module. In this section, we further investigate whether semantic information contained in background elements affects the results.

The AFE module is designed to follow the highest-salience regions linked to the prompt’s main subject. Our prompts specify foreground-related content, such as ``male doctor'', so the module continues functioning even when background elements include attribute-related semantics, like uniforms.

To test a worst-case scenario, we generate $100$ images using prompts that explicitly require uniforms in the background and examine the resulting attention maps. In $94$ of these cases, the peak activations still concentrate on the foreground subject. As shown in \Cref{tab: Results of Debiasing with AFE under Distracting Backgrounds}, applying our \texttt{LightFair} to these images required only about $10\%$ additional training epochs to achieve the same level of debiasing as with standard prompts. This finding suggests that the method remains robust in practice. \textbf{We conclude that for more complex scenes, a modest increase in training epochs is sufficient to maintain performance.}

\begin{table}[h!]
\centering
\renewcommand\arraystretch{1.1}
\caption{Results of Debiasing with AFE under Distracting Backgrounds.}
\resizebox{0.7\linewidth}{!}{
    \begin{tabular}{ccc}
        \toprule
        \textbf{Method}    & \textbf{Bias-O ($\downarrow$)} & \textbf{Bias-Q ($\downarrow$)} \\
        \midrule
        SD    & 0.7   & 1.15 \\
        Ours (normal) & 0.34  & 0.81 \\
        Ours (worst-case) & 0.43  & 0.97 \\
        Ours (worst-case + 10\% training epochs) & 0.36  & 0.85 \\
        \bottomrule
    \end{tabular}
}
\label{tab: Results of Debiasing with AFE under Distracting Backgrounds}%
\end{table}%

\subsection{Ablation Study Results on the Impact of Batch Size and Training Epochs}
\label{subsec: supp_Ablation Study Results on the Impact of Batch Size and Training Epochs}

In this section, we present an extended version of the ablation study on hyperparameters, conducting detailed experiments on batch size and training epochs.

\Cref{fig: batch_size} ablates the hyperparameter batch size, which represents the number of images used to approximate attribute centroids in each iteration. The optimal value is $50$. A smaller batch size fails to approximate the attribute centroids effectively, leading to insufficient debiasing. In contrast, a larger batch size does not further reduce bias but requires more images during training, resulting in additional computational overhead. \Cref{fig: epoch} ablates the hyperparameter training epochs, with the optimal value being $160$. A smaller number of epochs results in insufficient training, leading to inadequate debiasing. On the other hand, a larger number of epochs provides only marginal improvements in bias reduction while introducing significant computational costs that outweigh the benefits.

\begin{figure}[h!]
  \centering
  \begin{subfigure}{0.49\linewidth}
    \centering
    \includegraphics[width=0.5\linewidth]{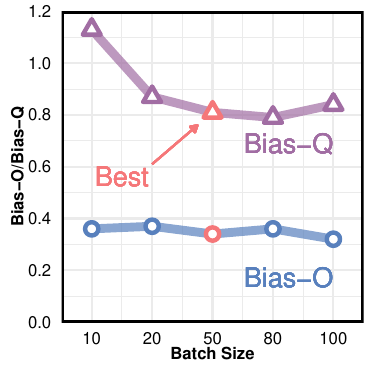}
    \caption{Batch Size.}
    \label{fig: batch_size}
  \end{subfigure}
  \hfill
  \begin{subfigure}{0.49\linewidth}
    \centering
    \includegraphics[width=0.5\linewidth]{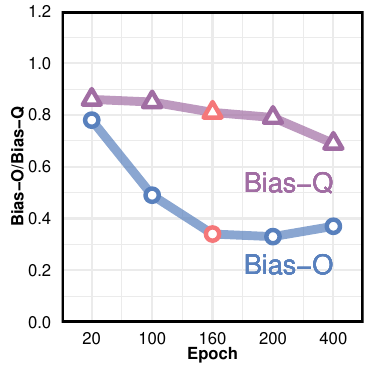}
    \caption{Training Epochs.}
    \label{fig: epoch}
  \end{subfigure}
  \caption{Expanded Version of Ablation Study on Hyper-Parameters.}
  \label{fig: Expanded_Ablation_study}
\end{figure}

\subsection{Result of Collaborating with Other Debiasing Methods}
\label{subsec: supp_Result of Collaborating with Other Debiasing Methods}

The sources of bias in diffusion models are complex. In this work, we emphasize the critical role of debiasing the CLIP component and propose an effective approach for doing so. Our results show that debiasing CLIP alone is sufficient to achieve SOTA performance. \textbf{Notably, \texttt{LightFair} functions as a plug-and-play module that can be seamlessly integrated with existing debiasing methods targeting other components, further improving overall fairness}. We conduct experiments by combining our \texttt{LightFair} with debiased U-Nets from UCE and FinetuneFD. The results, presented in \Cref{tab: Result of collaborating with other debiasing methods}, demonstrate that \texttt{LightFair} can effectively complement other debiasing techniques.

\begin{table}[h!]
  \centering
  \renewcommand\arraystretch{1.1}
  \caption{Result of collaborating with other debiasing methods.}
  \resizebox{0.6\linewidth}{!}{
    \begin{tabular}{cccc}
    \toprule
    Method & Bias-O $\downarrow$ & Bias-Q $\downarrow$ & CLIP-T $\uparrow$ \\
    \midrule
    UCE   & 0.78  & 1.79  & 28.91 \\
    +\texttt{LightFair} & \textbf{0.31} & \textbf{1.02} & \textbf{30.64} \\
    \midrule
    FinetuneFD & 0.38  & 2.31  & 29.34 \\
    +\texttt{LightFair} & \textbf{0.23} & \textbf{0.92} & \textbf{31.01} \\
    \bottomrule
    \end{tabular}%
    }
  \label{tab: Result of collaborating with other debiasing methods}%
\end{table}%

\subsection{Results of SD Models Based on the DiT Architecture}
\label{subsec: supp_Results of SD Models Based on the DiT Architecture}
In this section, we replace the U-Net denoising network with a DiT-based architecture and conduct a preliminary evaluation. \textbf{\texttt{LightFair} only applies lightweight modifications to the text encoder and places no constraints on the denoising network architecture.} This is one reason why our method is easily generalized to various diffusion models. The results are shown in \Cref{tab: Result of SD models based on the DiT architecture}. They indicate that our method remains effective even with a DiT-style backbone. 

\begin{table}[h!]
  \centering
  \renewcommand\arraystretch{1.1}
  \caption{Result of SD models based on the DiT architecture.}
  \resizebox{0.5\linewidth}{!}{
    \begin{tabular}{ccc}
    \toprule
    \textbf{Method} & \textbf{Bias-O ($\downarrow$)} & \textbf{Bias-Q ($\downarrow$)} \\
    \midrule
    SD (DiT) & 0.33  & 1.42 \\
    +\texttt{LightFair} & \textbf{0.29}  & \textbf{1.23} \\
    \bottomrule
    \end{tabular}%
    }
  \label{tab: Result of SD models based on the DiT architecture}%
\end{table}%

\subsection{Results of User Studies}
\label{subsec: supp_Results of User Studies}
In this section, we conduct an additional user study to support the quantitative results with human judgment. We recruit $30$ participants and show each of them images generated by four baselines (SD, FinetuneFD, FairMapping, BalancingAct) and our \texttt{LightFair}.

Participants rate perceived fairness, diversity, and image quality using a five-point Likert scale. The results are shown in \Cref{tab: Results of user studies}. \texttt{LightFair} achieves a mean fairness score of $4.3$, compared to $3.8$ for the best-performing baseline. It also receives the highest scores for diversity and image quality. Inter-rater agreement, measured by Fleiss' kappa, reaches $0.16$. These findings confirm that the improvements are clearly perceived by human evaluators without reducing variety.

\begin{table}[h!]
  \centering
  \renewcommand\arraystretch{1.1}
  \caption{Results of user studies.}
  \resizebox{0.6\linewidth}{!}{
    \begin{tabular}{cccc}
    \toprule
    \textbf{Method} & \textbf{Fairness} & \textbf{Diversity} & \textbf{Quality} \\
    \midrule
    SD    & 1.9   & 3.1   & 3.3 \\
    FinetuneFD & 3.2   & 3.4   & 3.6 \\
    FairMapping & 3.8   & 3.9   & \textbf{4} \\
    BalancingAct & 3.5   & 3.6   & 3.7 \\
    \texttt{LightFair} & \textbf{4.3} & \textbf{4.2} & \textbf{4} \\
    \bottomrule
    \end{tabular}%
    }
  \label{tab: Results of user studies}%
\end{table}%

\subsection{Evaluation on Prompts with Attribute}
\label{subsec: supp_Evaluation on Prompts with Attribute}
This section demonstrates that \textbf{eliminating bias does not impact the semantic understanding of the attributes themselves}. We use our debiased Stable Diffusion model to generate $20$ images for each specified attribute prompt, shown in \Cref{fig: Attribute_Prompts}. In this case, we perform debiasing on gender, but it does not affect the semantics of the terms `\texttt{male}' and `\texttt{female}'. First, our model correctly identifies the term `\texttt{male}' without generating female images (\Cref{fig: Attribute_Prompts_1} \& \Cref{fig: Attribute_Prompts_3}), and it correctly identifies the term `\texttt{female}' without generating male images (\Cref{fig: Attribute_Prompts_2} \& \Cref{fig: Attribute_Prompts_4}). Second, no semantic bias is introduced, whether the prompt used during training (\Cref{fig: Attribute_Prompts_1} \& \Cref{fig: Attribute_Prompts_2}) or a new prompt (\Cref{fig: Attribute_Prompts_3} \& \Cref{fig: Attribute_Prompts_4}) is employed.

Next, we verify that \textbf{our method does not lead to the generation of neutral images}. Specifically, for the gender attribute, our approach avoids producing androgynous or ambiguous images. To evaluate this, we measure the number of generated ``doctor'' images falling near the gender decision boundary ($0.45 \leq P_{male} \leq 0.55$) out of $100$ samples, comparing the original SD with our debiased version. The results are as follows: SD ($6/100$), \texttt{LightFair} ($5/100$). These results indicate that only a small fraction of images fall within this ambiguous range, confirming that our method does not induce neutrality in attribute expression.

This happens because the U-Net, trained with sufficient data, learns the relevant associations and directs image generation toward a single attribute. Since we do not modify the U-Net, our method avoids generating neutral images. This phenomenon is also confirmed in \cite{li2023fair}.

\newpage
\begin{figure}[h!]
    \centering
    \begin{subfigure}[b]{\textwidth}
        \includegraphics[width=\textwidth]{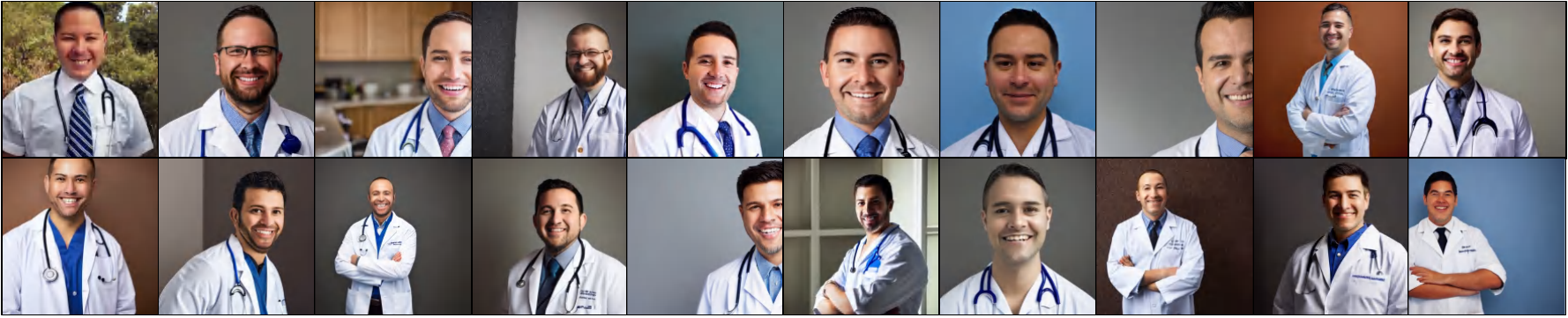}
        \caption{Prompt: ``Photo portrait of a \textcolor{blue}{male} doctor, a person''.}
        \label{fig: Attribute_Prompts_1}
    \end{subfigure}
    \hfill
    \begin{subfigure}[b]{\textwidth}
        \includegraphics[width=\textwidth]{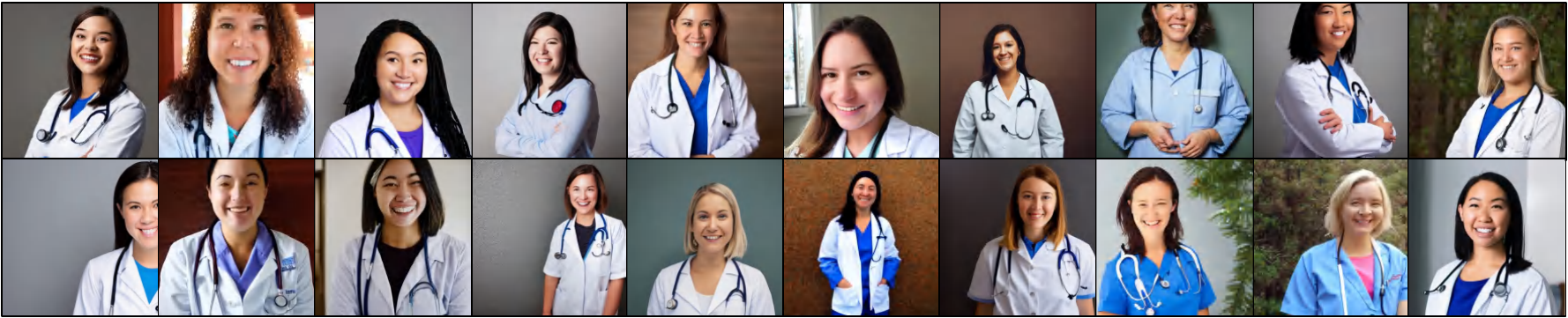}
        \caption{Prompt: ``Photo portrait of a \textcolor{red}{female} doctor, a person''.}
        \label{fig: Attribute_Prompts_2}
    \end{subfigure}
    \hfill
    \begin{subfigure}[b]{\textwidth}
        \includegraphics[width=\textwidth]{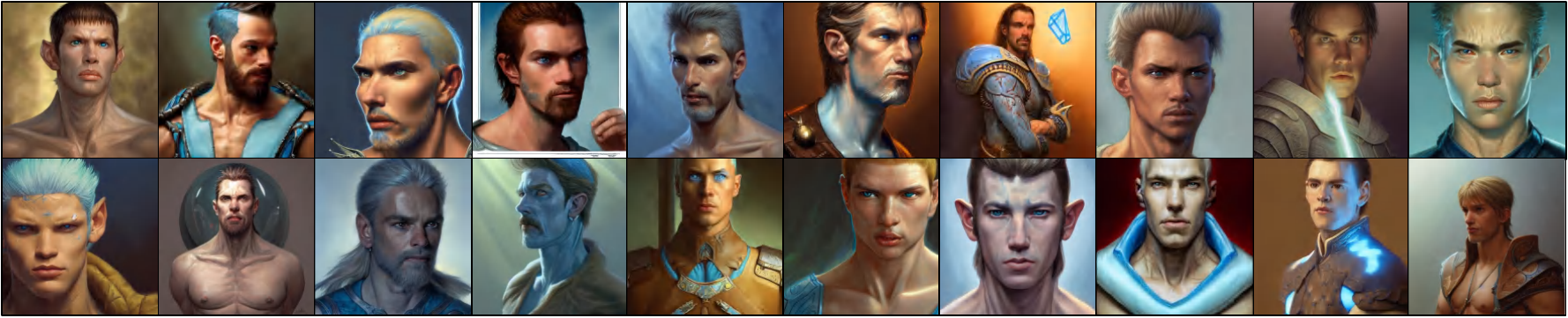}
        \caption{Prompt: ``a portrait of a \textcolor{blue}{male} with light blue skin, gills on his neck, style by donato giancola, wayne reynolds, jeff easley dramatic light, high detail, cinematic lighting, artstation, dungeons and dragons''.}
        \label{fig: Attribute_Prompts_3}
    \end{subfigure}
    \hfill
    \begin{subfigure}[b]{\textwidth}
        \includegraphics[width=\textwidth]{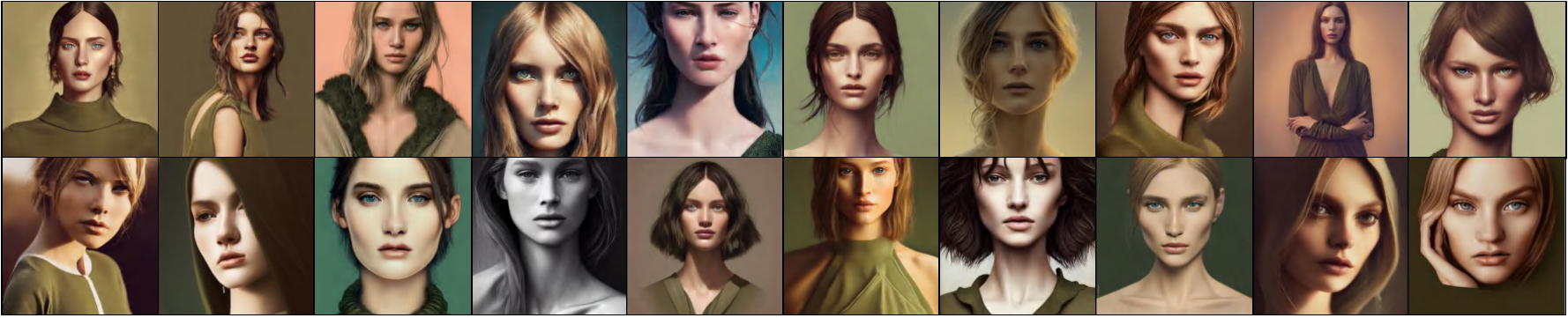}
        \caption{Prompt: ``close up portrait of beautiful \textcolor{red}{female} supermodel wearing olive dress, hotography by amy leibowitz, wlop, jeremy lipkin, beeple, intricate, symmetrical front portrait, artgerm, ilya kuvshinov''.}
        \label{fig: Attribute_Prompts_4}
    \end{subfigure}
    
    \caption{Images generated using prompts with attribute. All images are generated using our gender-debiased SD v1.5.}
    \label{fig: Attribute_Prompts}
    \vspace{-10pt}
\end{figure}

\subsection{Evaluaton on General Prompts}
\label{subsec: supp_Evaluaton on General Prompts}
This section examines the effect of our method on image generation for general prompts, which are not necessarily related to specific occupations. We randomly select $16$ prompts from the DiffusionDB~\cite{wang2023diffusiondb} dataset, written by real users. For each prompt, we generate six images using both the original Stable Diffusion model (SD v1.5 \& SD v2.1) and our debiased version, with the same set of noises. The generated images are displayed in \Cref{fig: General_Prompts_1}, \Cref{fig: General_Prompts_2}, \Cref{fig: General_Prompts_3} and \Cref{fig: General_Prompts_4}.

\textbf{We find that our debiased SD generates images almost identical to those produced by the original SD, ensuring that fine-tuning does not affect the semantics of general prompts.} Our debiased SD maintains a strong understanding of various concepts, including people such as `\texttt{Cristiano Ronaldo}' (\Cref{fig: General_Prompts_1_1}) and `\texttt{Taylor Swift}' (\Cref{fig: General_Prompts_1_2}), animals like `\texttt{dog}' (\Cref{fig: General_Prompts_1_3}) and `\texttt{tiger}' (\Cref{fig: General_Prompts_1_4}), plants such as `\texttt{sunflower}' (\Cref{fig: General_Prompts_2_1}) and `\texttt{rose}' (\Cref{fig: General_Prompts_2_2}), landscapes like `\texttt{grand canyon}' (\Cref{fig: General_Prompts_2_3}) and `\texttt{old ruin}' (\Cref{fig: General_Prompts_2_4}), cartoons like `\texttt{magic ritual place cartoon}' (\Cref{fig: General_Prompts_3_1}) and `\texttt{lion cartoon}' (\Cref{fig: General_Prompts_3_2}), oil paintings such as `\texttt{babylon}' (\Cref{fig: General_Prompts_3_3}) and `\texttt{abandoned stone brick ruin}' (\Cref{fig: General_Prompts_3_4}), artistic styles like `\texttt{Van Gogh}' (\Cref{fig: General_Prompts_4_1}) and `\texttt{Cassius Marcellus Coolidge}' (\Cref{fig: General_Prompts_4_2}). At the same time, the debiased SD still retains its creativity, such as generating dinosaurs kissing (\Cref{fig: General_Prompts_4_3}) or UFOs seamlessly integrated into realistic scenes (\Cref{fig: General_Prompts_4_4}).

\newpage
\begin{figure}[h!]
    \centering
    \begin{subfigure}[b]{0.8\textwidth}
        \includegraphics[width=\textwidth]{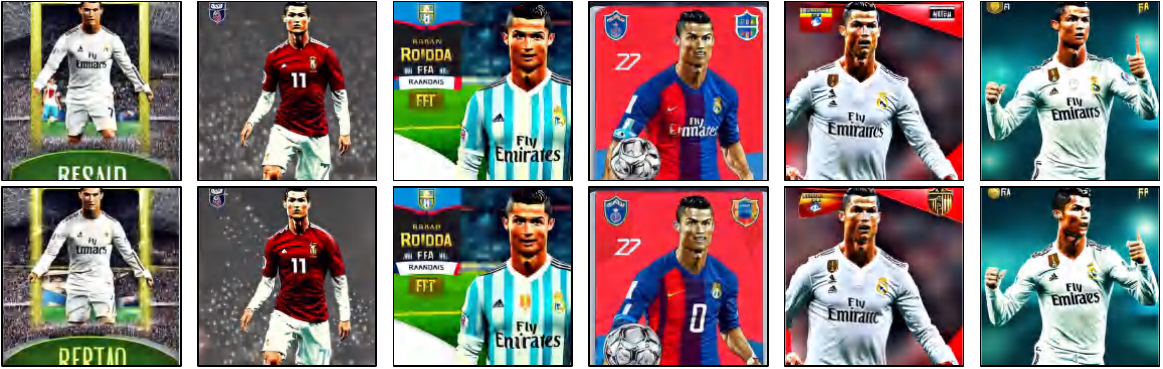}
        \caption{Prompt: ``cristiano ronaldo fifa card''.}
        \label{fig: General_Prompts_1_1}
    \end{subfigure}
    \hfill
    \begin{subfigure}[b]{0.8\textwidth}
        \includegraphics[width=\textwidth]{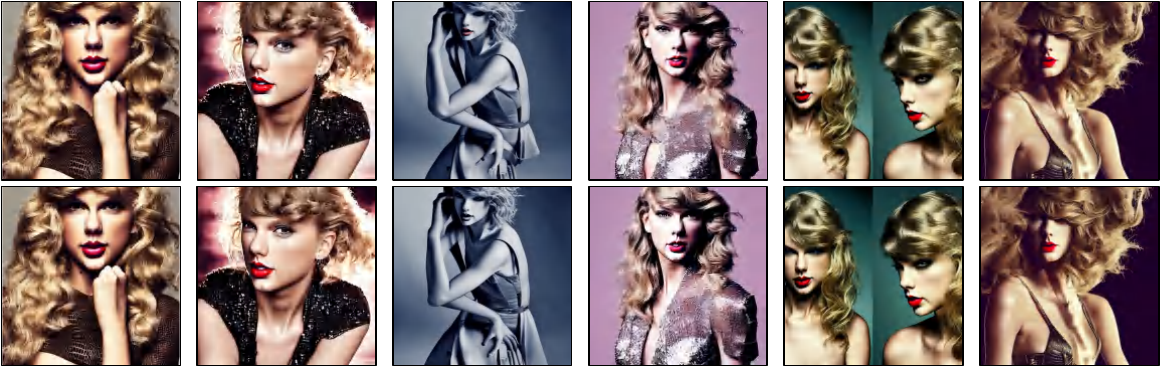}
        \caption{Prompt: ``taylor swift by nick knight, vogue magazine, award winning, photoshoot, dramatic, cooke anamorphic / i lenses, highly detailed, cinematic lighting''.}
        \label{fig: General_Prompts_1_2}
    \end{subfigure}
    \hfill
    \begin{subfigure}[b]{0.8\textwidth}
        \includegraphics[width=\textwidth]{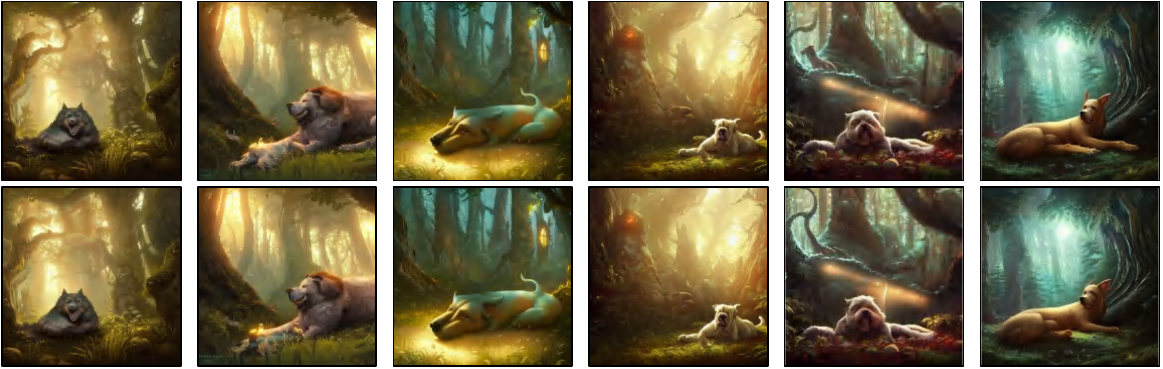}
        \caption{Prompt: ``cute big dog sleeping in magistral forest, 8 k resolution matte fantasy painting, cinematic lighting, deviantart artstation, jason felix steve argyle tyler jacobson peter mohrbacher''.}
        \label{fig: General_Prompts_1_3}
    \end{subfigure}
    \hfill
    \begin{subfigure}[b]{0.8\textwidth}
        \includegraphics[width=\textwidth]{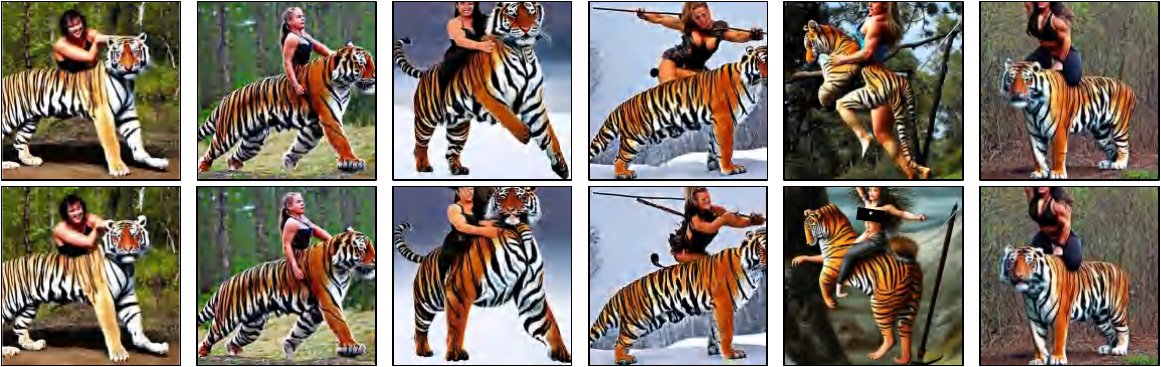}
        \caption{Prompt: ``a muscled warrior girl mounted on a large siberian tiger''.}
        \label{fig: General_Prompts_1_4}
    \end{subfigure}
    
    \caption{Images generated using general prompts. For every subfigure, the top row is generated using the \textbf{original SD v1.5}, and the bottom row is generated using \textbf{our gender-debiased SD v1.5}. The pair of images in the same column are generated using the same noise.}
    \label{fig: General_Prompts_1}
\end{figure}

\newpage

\begin{figure}[h!]
    \centering
    \begin{subfigure}[b]{0.8\textwidth}
        \includegraphics[width=\textwidth]{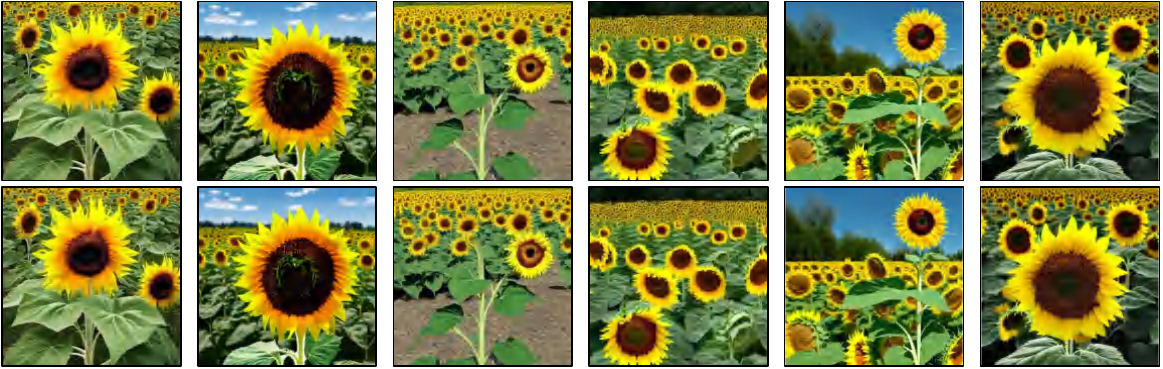}
        \caption{Prompt: ``sunflower from plants vs zombies in real life''.}
        \label{fig: General_Prompts_2_1}
    \end{subfigure}
    \hfill
    \begin{subfigure}[b]{0.8\textwidth}
        \includegraphics[width=\textwidth]{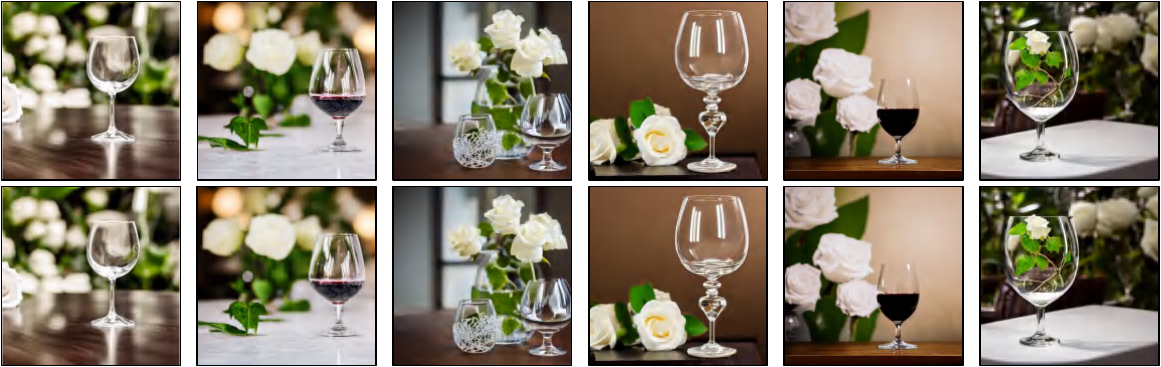}
        \caption{Prompt: ``5 5 mm photo of wine - glass on a zen minimalist table with white roses and houseplants in the background. highly detailed 8 k. intricate. lifelike. soft light. nikon d 8 5 0''.}
        \label{fig: General_Prompts_2_2}
    \end{subfigure}
    \hfill
    \begin{subfigure}[b]{0.8\textwidth}
        \includegraphics[width=\textwidth]{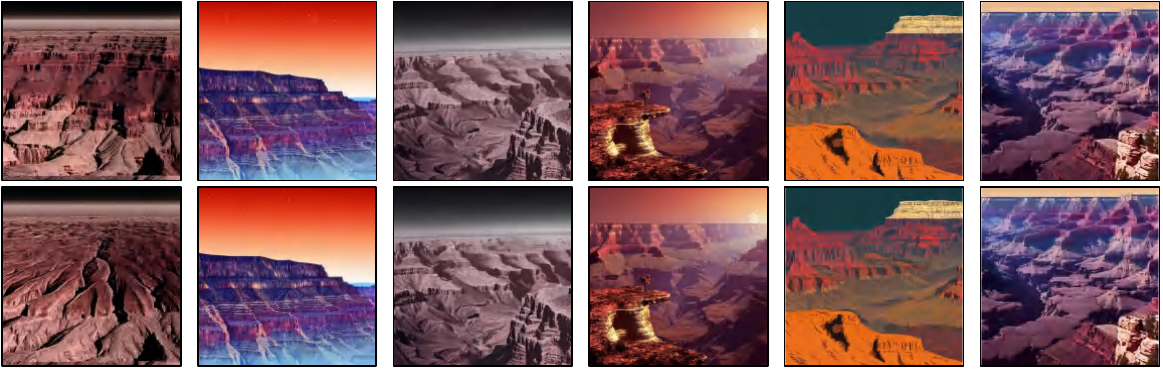}
        \caption{Prompt: ``grand canyon on the moon, digital art, illustration, 4 k, 8 k''.}
        \label{fig: General_Prompts_2_3}
    \end{subfigure}
    \hfill
    \begin{subfigure}[b]{0.8\textwidth}
        \includegraphics[width=\textwidth]{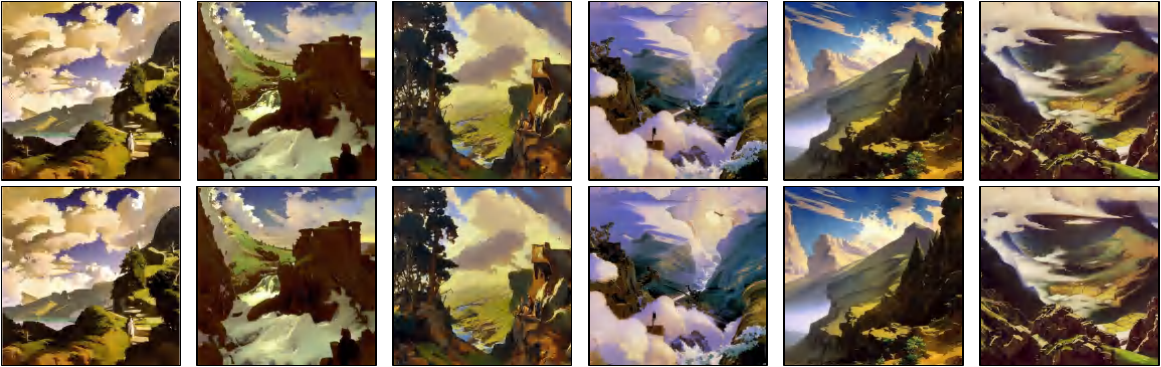}
        \caption{Prompt: ``painting by sargent and leyendecker and greg hildebrandt, apollinaris vasnetsov, savrasov levitan polenov, studio ghibly style mononoke, huge old ruins giovanni paolo panini, middle earth above the layered low clouds waterfall road between forests big lake wide river trees sunrise sea bay view faroe azores overcast storm masterpiece''.}
        \label{fig: General_Prompts_2_4}
    \end{subfigure}
    
    \caption{Images generated using general prompts. For every subfigure, the top row is generated using the \textbf{original SD v1.5}, and the bottom row is generated using \textbf{our race-debiased SD v1.5}. The pair of images in the same column are generated using the same noise.}
    \label{fig: General_Prompts_2}
\end{figure}

\newpage

\begin{figure}[h!]
    \centering
    \begin{subfigure}[b]{0.8\textwidth}
        \includegraphics[width=\textwidth]{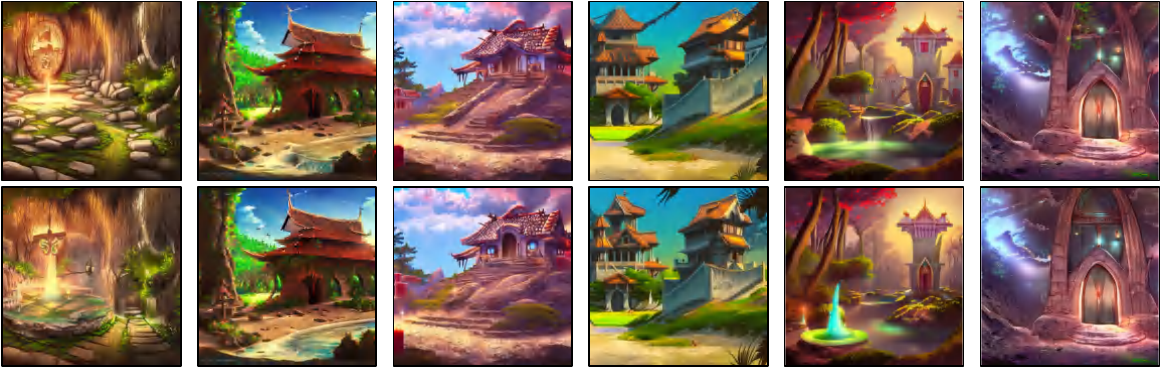}
        \caption{Prompt: ``photo cartoon illustration comics manga painting of magic ritual place : 6 fantasy, digital painting, unreal engine, 8 k, volumetric lighting, contrast''.}
        \label{fig: General_Prompts_3_1}
    \end{subfigure}
    \hfill
    \begin{subfigure}[b]{0.8\textwidth}
        \includegraphics[width=\textwidth]{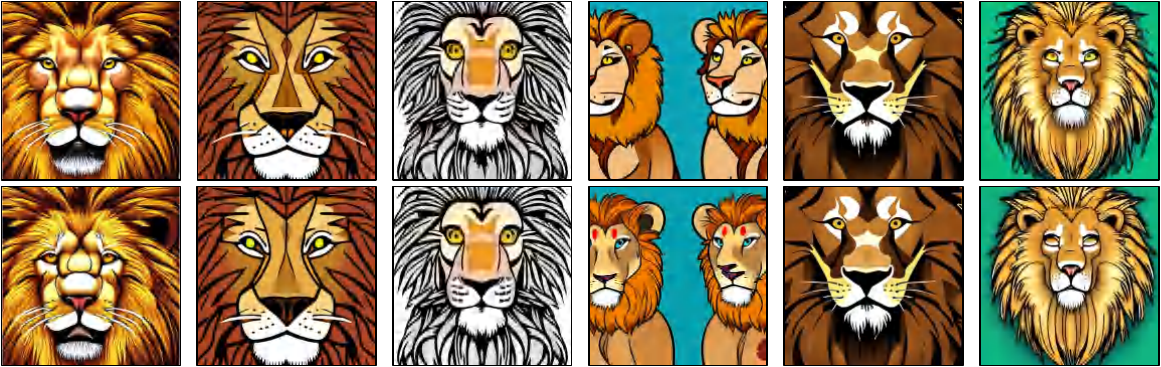}
        \caption{Prompt: ``lion cartoon portrait by yuga labs''.}
        \label{fig: General_Prompts_3_2}
    \end{subfigure}
    \hfill
    \begin{subfigure}[b]{0.8\textwidth}
        \includegraphics[width=\textwidth]{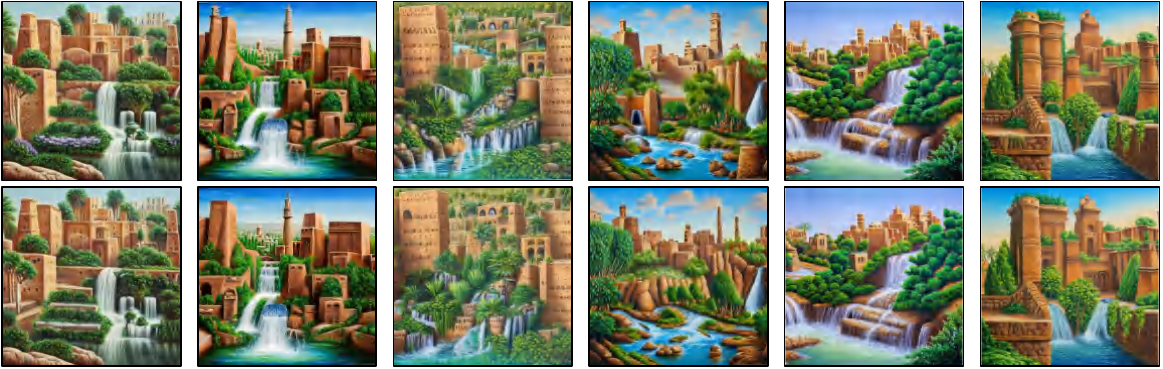}
        \caption{Prompt: ``highly detailed oil painting of the city of babylon. luscious green plants and waterfalls flowing out of the stone walls''.}
        \label{fig: General_Prompts_3_3}
    \end{subfigure}
    \hfill
    \begin{subfigure}[b]{0.8\textwidth}
        \includegraphics[width=\textwidth]{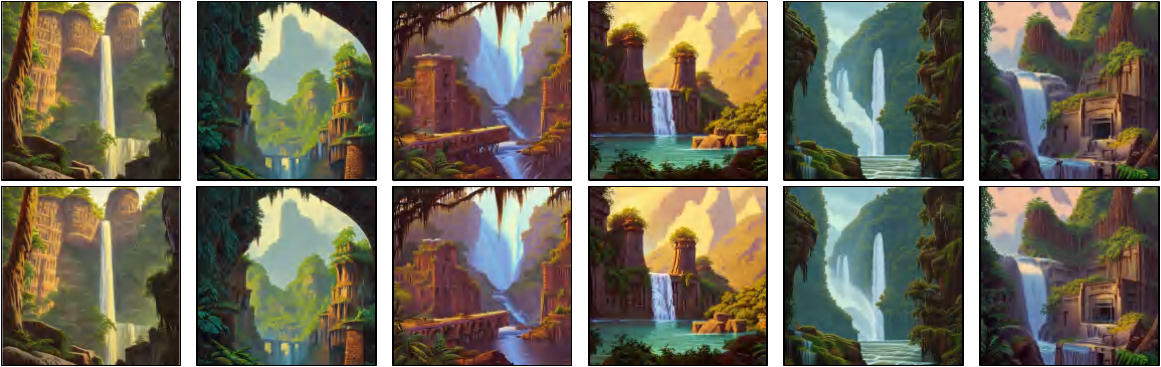}
        \caption{Prompt: ``classic oil painting, abandoned stone brick ruins, as a dnd environment, surrounded by jungle and waterfalls, as a book cover illustration, readability, cottagecore, extremely detailed, concept art, smooth, sharp focus, art by brothers hildebrandt''.}
        \label{fig: General_Prompts_3_4}
    \end{subfigure}
    
    \caption{Images generated using general prompts. For every subfigure, the top row is generated using the \textbf{original SD v2.1}, and the bottom row is generated using \textbf{our gender-debiased SD v2.1}. The pair of images in the same column are generated using the same noise.}
    \label{fig: General_Prompts_3}
\end{figure}

\newpage

\begin{figure}[h!]
    \centering
    \begin{subfigure}[b]{0.8\textwidth}
        \includegraphics[width=\textwidth]{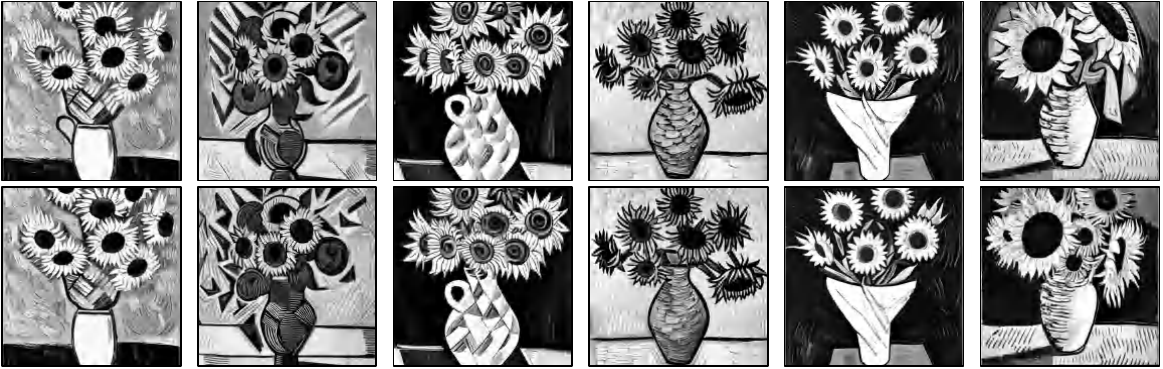}
        \caption{Prompt: ``an abstract painting of a vase with sunflowers by pablo picasso, vincent van gogh, black and white''.}
        \label{fig: General_Prompts_4_1}
    \end{subfigure}
    \hfill
    \begin{subfigure}[b]{0.8\textwidth}
        \includegraphics[width=\textwidth]{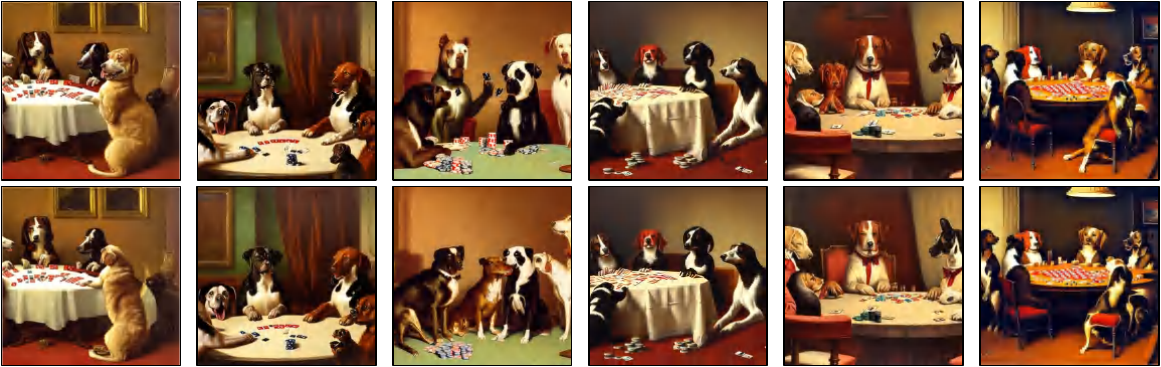}
        \caption{Prompt: ``oil painting by cassius marcellus coolidge of some dogs playing poker''.}
        \label{fig: General_Prompts_4_2}
    \end{subfigure}
    \hfill
    \begin{subfigure}[b]{0.8\textwidth}
        \includegraphics[width=\textwidth]{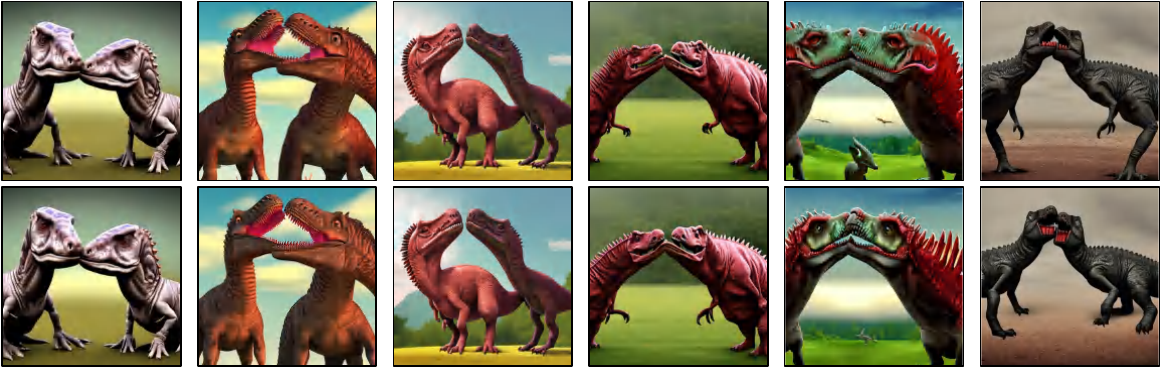}
        \caption{Prompt: ``2 dinosaurs kissing''.}
        \label{fig: General_Prompts_4_3}
    \end{subfigure}
    \hfill
    \begin{subfigure}[b]{0.8\textwidth}
        \includegraphics[width=\textwidth]{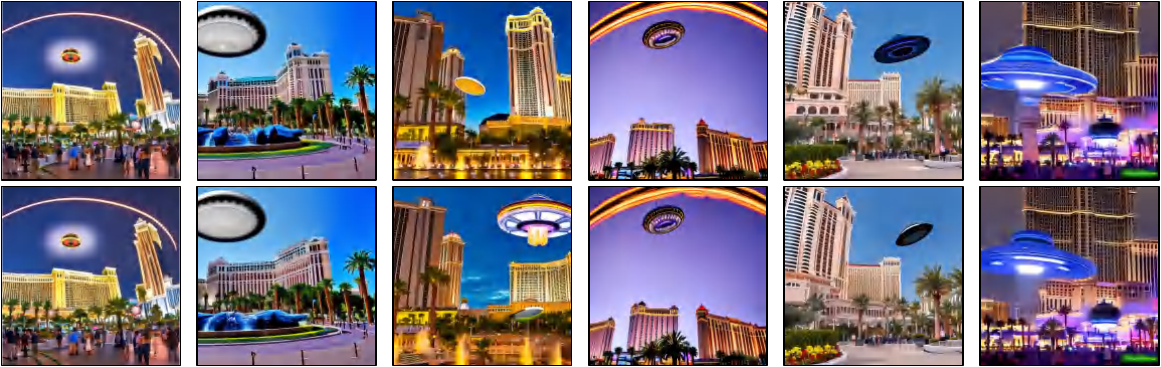}
        \caption{Prompt: ``a ufo landing in the middle of the las vegas strip. in front the bellagio hotel. professional photography''.}
        \label{fig: General_Prompts_4_4}
    \end{subfigure}
    
    \caption{Images generated using general prompts. For every subfigure, the top row is generated using the \textbf{original SD v2.1}, and the bottom row is generated using \textbf{our race-debiased SD v2.1}. The pair of images in the same column are generated using the same noise.}
    \label{fig: General_Prompts_4}
\end{figure}

\newpage
\subsection{Analysis of Time and Spatial Complexity}
\label{appsec: Analysis of Time and Spatial Complexity}
In this section, we analyze the spatial and time complexities of our \texttt{LightFair} compared to other methods, as summarized in \Cref{tab: Analysis of Time and Spatial Complexity}. For spatial complexity, we report the number of trainable parameters. For time complexity, we measure the time required for each training iteration and the time needed for each denoising step. The results show that, compared to post-processing methods (FairD and Debias VL), our method achieves faster sampling speeds by eliminating the need for additional auxiliary networks. Furthermore, compared to fine-tuning methods (UCE, FinetuneFD, FairMapping and BalancingAct), our method identifies the key structures causing bias more precisely, resulting in fewer parameters to fine-tune and faster training speeds.

\begin{table}[h!]
  \centering
  \renewcommand\arraystretch{1.1}
  \caption{Analysis of the complexity (time complexity (TC), spatial complexity (SC)) and effectiveness
(Bias-O, Bias-Q, CLIP-T) of different methods.}
  \resizebox{0.7\linewidth}{!}{
    \begin{tabular}{c|ccc|ccc}
    \toprule
    \textbf{Method} & {\textbf{SC (Parameters)}} & {\textbf{TC (Training)}} & {\textbf{TC (Sampling)}} & {\textbf{Bias-O}} & {\textbf{Bias-Q}} & {\textbf{CLIP-T}} \\
    \midrule
     \multicolumn{7}{c}{\textbf{Stable Diffusion v1.5}} \\
    \midrule
    FairD & -     & -     & 0.1179 s & 0.79 & 3.25 & 28.79 \\
    UCE   & 859.5210 M & 10.8213 s & 0.0662 s & 0.78 & 1.79 & 28.91\\
    FinetuneFD & 18.2592 M & 14.4 s & 0.0699 s & \underline{0.38} & 2.31 & \underline{29.34} \\
    FairMapping & \textbf{0.7855 M} & 3.6383 s & \underline{0.0658 s} & 0.46 & 2.16 & 29.30 \\
    BalancingAct & 8.1921 M & \underline{3.0273 s} & 0.1379 s & 0.41 & \underline{1.70} & 29.30 \\
    \texttt{LightFair} (Ours) & \underline{3.6864 M} & \textbf{2.4221 s} & \textbf{0.0631 s} & \textbf{0.30} & \textbf{0.99} & \textbf{30.57} \\
    \midrule
     \multicolumn{7}{c}{\textbf{Stable Diffusion v2.1}} \\
    \midrule
    Debias VL & -     & -     & 0.0874 s & \underline{0.43} & \underline{1.44} & 28.20  \\
    UCE   & \underline{865.9107 M} & \underline{8.1834 s} & \underline{0.0585 s} & 0.90 & 1.67 & \underline{29.41}  \\
    \texttt{LightFair} (Ours) & \textbf{9.4208 M} & \textbf{3.3960 s} & \textbf{0.0567 s} & \textbf{0.33} & \textbf{1.40} & \textbf{30.82}  \\
    \bottomrule
    \end{tabular}%
    }
  \label{tab: Analysis of Time and Spatial Complexity}%
\end{table}%

\section{Limitations and Future Works}
\label{appsec: Limitations and Future Works}

\subsection{Limitations}
\label{appsec: Limitations}

\begin{itemize}
    \item Evaluation metrics themselves may introduce bias, potentially affecting model assessment. This is a common challenge across nearly all fairness evaluations in generative models. Nevertheless, these metrics are widely adopted in the generative modeling community, and we follow standard practice by using them as well. To mitigate the limitations of any single metric and reduce evaluation bias, we adopt a comprehensive evaluation protocol comprising $3$ fairness metrics and $7$ quality metrics, making our results more robust and persuasive.
    \item Some of the baseline methods (marked with $^*$ in \Cref{tab: Expanded Quantitative results}) do not have official code. We re-implement them based on the descriptions in their original papers, strictly adhering to the reported configurations, including model architectures and hyperparameters. However, certain experimental details (\textit{e.g.}, data augmentation strategies and random seed settings) are not specified in the original works. In these cases, we adopted the same settings used in our \texttt{LightFair} implementation. As a result, the reported metrics may differ slightly from those in the original papers. We have thoroughly examined these differences and confirmed that they are minimal. We will provide comprehensive comparisons once the official code of these methods becomes available.
    \item Our debiased SD occasionally generates artifacts such as non-smooth images, as shown in \Cref{fig: Diverse_Prompts_2_1} and \Cref{fig: General_Prompts_1_4}. However, since the original SD exhibits similar issues, it is challenging to determine whether these artifacts are caused by our \texttt{LightFair} or are inherited characteristics of the original SD.
    \item As text-to-image generation models continue to evolve, a diverse array of model architectures is emerging. Our method is specifically designed for models with a text encoder and noise prediction network structure, and it is not yet applicable to other architectures.
\end{itemize}

\subsection{Future Works}
\label{appsec: Future Works}

We aim to develop methods for generating higher-quality debiased images and to explore fair-generation techniques for text-to-image models with diverse architectures. While our current method generalizes across multiple attributes, we acknowledge that fully continuous or user-defined attributes remain an open challenge. In the future, we plan to support continuous attributes by sampling representative anchors along the spectrum or by optimizing against attribute regressors. At the same time, precisely defining a model’s ``fairness'' remains challenging, as it largely depends on specific contexts and external factors. We envision that achieving genuine fairness will ultimately require joint efforts from researchers, policymakers, and practitioners.

\section{Statement}
\label{appsec: Statement}
The ``biases'' discussed in this work are confined to those stemming from inherent statistical imbalances in training datasets, which often manifest as unequal representations of physical attributes such as gender, race, or age. Our objective is to address these biases to foster fairer and more accurate model outputs, particularly in scenarios where these outputs may significantly impact downstream applications.  

That said, our approach has inherent limitations in mitigating biases affecting individuals whose identities do not conform to conventional societal categories, such as those with non-binary gender identities or mixed racial backgrounds.  

It is important to clarify that this work does not seek to redefine or challenge societal norms or beliefs, nor does it attempt to provide solutions to the multifaceted and systemic issues of societal bias. Instead, our focus remains within the technical domain of machine learning, aiming to improve the robustness and fairness of generative models based on clear and measurable criteria.  

Finally, while this study underscores the ethical significance of addressing bias in artificial intelligence, we acknowledge that technical interventions alone are insufficient to tackle deeper societal inequities. We advocate for multidisciplinary collaboration among researchers, policymakers, and practitioners to ensure that AI advancements align with and support broader societal values.  

\newpage
\section*{NeurIPS Paper Checklist}

\begin{enumerate}

\item {\bf Claims}
    \item[] Question: Do the main claims made in the abstract and introduction accurately reflect the paper's contributions and scope?
    \item[] Answer: \answerYes{} 
    \item[] Justification: We briefly summarize it in the abstract and detail the paper's contributions and scope in the introduction.
    \item[] Guidelines:
    \begin{itemize}
        \item The answer NA means that the abstract and introduction do not include the claims made in the paper.
        \item The abstract and/or introduction should clearly state the claims made, including the contributions made in the paper and important assumptions and limitations. A No or NA answer to this question will not be perceived well by the reviewers. 
        \item The claims made should match theoretical and experimental results, and reflect how much the results can be expected to generalize to other settings. 
        \item It is fine to include aspirational goals as motivation as long as it is clear that these goals are not attained by the paper. 
    \end{itemize}

\item {\bf Limitations}
    \item[] Question: Does the paper discuss the limitations of the work performed by the authors?
    \item[] Answer: \answerYes{} 
    \item[] Justification: We discuss this issue in \Cref{appsec: Limitations}.
    \item[] Guidelines:
    \begin{itemize}
        \item The answer NA means that the paper has no limitation while the answer No means that the paper has limitations, but those are not discussed in the paper. 
        \item The authors are encouraged to create a separate "Limitations" section in their paper.
        \item The paper should point out any strong assumptions and how robust the results are to violations of these assumptions (e.g., independence assumptions, noiseless settings, model well-specification, asymptotic approximations only holding locally). The authors should reflect on how these assumptions might be violated in practice and what the implications would be.
        \item The authors should reflect on the scope of the claims made, e.g., if the approach was only tested on a few datasets or with a few runs. In general, empirical results often depend on implicit assumptions, which should be articulated.
        \item The authors should reflect on the factors that influence the performance of the approach. For example, a facial recognition algorithm may perform poorly when image resolution is low or images are taken in low lighting. Or a speech-to-text system might not be used reliably to provide closed captions for online lectures because it fails to handle technical jargon.
        \item The authors should discuss the computational efficiency of the proposed algorithms and how they scale with dataset size.
        \item If applicable, the authors should discuss possible limitations of their approach to address problems of privacy and fairness.
        \item While the authors might fear that complete honesty about limitations might be used by reviewers as grounds for rejection, a worse outcome might be that reviewers discover limitations that aren't acknowledged in the paper. The authors should use their best judgment and recognize that individual actions in favor of transparency play an important role in developing norms that preserve the integrity of the community. Reviewers will be specifically instructed to not penalize honesty concerning limitations.
    \end{itemize}

\item {\bf Theory assumptions and proofs}
    \item[] Question: For each theoretical result, does the paper provide the full set of assumptions and a complete (and correct) proof?
    \item[] Answer: \answerYes{} 
    \item[] Justification: We provide complete proofs for each theoretical result in \Cref{appsec: Proof of Theorem 4.1} and \Cref{appsec: proof of Proposition 4.2}.
    \item[] Guidelines:
    \begin{itemize}
        \item The answer NA means that the paper does not include theoretical results. 
        \item All the theorems, formulas, and proofs in the paper should be numbered and cross-referenced.
        \item All assumptions should be clearly stated or referenced in the statement of any theorems.
        \item The proofs can either appear in the main paper or the supplemental material, but if they appear in the supplemental material, the authors are encouraged to provide a short proof sketch to provide intuition. 
        \item Inversely, any informal proof provided in the core of the paper should be complemented by formal proofs provided in appendix or supplemental material.
        \item Theorems and Lemmas that the proof relies upon should be properly referenced. 
    \end{itemize}

    \item {\bf Experimental result reproducibility}
    \item[] Question: Does the paper fully disclose all the information needed to reproduce the main experimental results of the paper to the extent that it affects the main claims and/or conclusions of the paper (regardless of whether the code and data are provided or not)?
    \item[] Answer: \answerYes{} 
    \item[] Justification: We describe our algorithm in \Cref{sec: methodology} and fully disclose all the information needed to reproduce the main experimental results of the paper in \Cref{subsec: Experimental Setups} and \Cref{sec: supp_Additional Experiment Settings}.
    \item[] Guidelines:
    \begin{itemize}
        \item The answer NA means that the paper does not include experiments.
        \item If the paper includes experiments, a No answer to this question will not be perceived well by the reviewers: Making the paper reproducible is important, regardless of whether the code and data are provided or not.
        \item If the contribution is a dataset and/or model, the authors should describe the steps taken to make their results reproducible or verifiable. 
        \item Depending on the contribution, reproducibility can be accomplished in various ways. For example, if the contribution is a novel architecture, describing the architecture fully might suffice, or if the contribution is a specific model and empirical evaluation, it may be necessary to either make it possible for others to replicate the model with the same dataset, or provide access to the model. In general. releasing code and data is often one good way to accomplish this, but reproducibility can also be provided via detailed instructions for how to replicate the results, access to a hosted model (e.g., in the case of a large language model), releasing of a model checkpoint, or other means that are appropriate to the research performed.
        \item While NeurIPS does not require releasing code, the conference does require all submissions to provide some reasonable avenue for reproducibility, which may depend on the nature of the contribution. For example
        \begin{enumerate}
            \item If the contribution is primarily a new algorithm, the paper should make it clear how to reproduce that algorithm.
            \item If the contribution is primarily a new model architecture, the paper should describe the architecture clearly and fully.
            \item If the contribution is a new model (e.g., a large language model), then there should either be a way to access this model for reproducing the results or a way to reproduce the model (e.g., with an open-source dataset or instructions for how to construct the dataset).
            \item We recognize that reproducibility may be tricky in some cases, in which case authors are welcome to describe the particular way they provide for reproducibility. In the case of closed-source models, it may be that access to the model is limited in some way (e.g., to registered users), but it should be possible for other researchers to have some path to reproducing or verifying the results.
        \end{enumerate}
    \end{itemize}

\item {\bf Open access to data and code}
    \item[] Question: Does the paper provide open access to the data and code, with sufficient instructions to faithfully reproduce the main experimental results, as described in supplemental material?
    \item[] Answer: \answerYes{} 
    \item[] Justification: We provide a link to the data and code in the abstract.
    \item[] Guidelines:
    \begin{itemize}
        \item The answer NA means that paper does not include experiments requiring code.
        \item Please see the NeurIPS code and data submission guidelines (\url{https://nips.cc/public/guides/CodeSubmissionPolicy}) for more details.
        \item While we encourage the release of code and data, we understand that this might not be possible, so “No” is an acceptable answer. Papers cannot be rejected simply for not including code, unless this is central to the contribution (e.g., for a new open-source benchmark).
        \item The instructions should contain the exact command and environment needed to run to reproduce the results. See the NeurIPS code and data submission guidelines (\url{https://nips.cc/public/guides/CodeSubmissionPolicy}) for more details.
        \item The authors should provide instructions on data access and preparation, including how to access the raw data, preprocessed data, intermediate data, and generated data, etc.
        \item The authors should provide scripts to reproduce all experimental results for the new proposed method and baselines. If only a subset of experiments are reproducible, they should state which ones are omitted from the script and why.
        \item At submission time, to preserve anonymity, the authors should release anonymized versions (if applicable).
        \item Providing as much information as possible in supplemental material (appended to the paper) is recommended, but including URLs to data and code is permitted.
    \end{itemize}

\item {\bf Experimental setting/details}
    \item[] Question: Does the paper specify all the training and test details (e.g., data splits, hyperparameters, how they were chosen, type of optimizer, etc.) necessary to understand the results?
    \item[] Answer: \answerYes{} 
    \item[] Justification: We provide the completed experimental setting in \Cref{sec: supp_Additional Experiment Settings}.
    \item[] Guidelines:
    \begin{itemize}
        \item The answer NA means that the paper does not include experiments.
        \item The experimental setting should be presented in the core of the paper to a level of detail that is necessary to appreciate the results and make sense of them.
        \item The full details can be provided either with the code, in appendix, or as supplemental material.
    \end{itemize}

\item {\bf Experiment statistical significance}
    \item[] Question: Does the paper report error bars suitably and correctly defined or other appropriate information about the statistical significance of the experiments?
    \item[] Answer: \answerYes{} 
    \item[] Justification: We report this issue in \Cref{sec: experiments}.
    \item[] Guidelines:
    \begin{itemize}
        \item The answer NA means that the paper does not include experiments.
        \item The authors should answer "Yes" if the results are accompanied by error bars, confidence intervals, or statistical significance tests, at least for the experiments that support the main claims of the paper.
        \item The factors of variability that the error bars are capturing should be clearly stated (for example, train/test split, initialization, random drawing of some parameter, or overall run with given experimental conditions).
        \item The method for calculating the error bars should be explained (closed form formula, call to a library function, bootstrap, etc.)
        \item The assumptions made should be given (e.g., Normally distributed errors).
        \item It should be clear whether the error bar is the standard deviation or the standard error of the mean.
        \item It is OK to report 1-sigma error bars, but one should state it. The authors should preferably report a 2-sigma error bar than state that they have a 96\% CI, if the hypothesis of Normality of errors is not verified.
        \item For asymmetric distributions, the authors should be careful not to show in tables or figures symmetric error bars that would yield results that are out of range (e.g. negative error rates).
        \item If error bars are reported in tables or plots, The authors should explain in the text how they were calculated and reference the corresponding figures or tables in the text.
    \end{itemize}

\item {\bf Experiments compute resources}
    \item[] Question: For each experiment, does the paper provide sufficient information on the computer resources (type of compute workers, memory, time of execution) needed to reproduce the experiments?
    \item[] Answer: \answerYes{} 
    \item[] Justification: We provide sufficient information on the computer resources in \Cref{subsec: supp_Implementation Details}.
    \item[] Guidelines:
    \begin{itemize}
        \item The answer NA means that the paper does not include experiments.
        \item The paper should indicate the type of compute workers CPU or GPU, internal cluster, or cloud provider, including relevant memory and storage.
        \item The paper should provide the amount of compute required for each of the individual experimental runs as well as estimate the total compute. 
        \item The paper should disclose whether the full research project required more compute than the experiments reported in the paper (e.g., preliminary or failed experiments that didn't make it into the paper). 
    \end{itemize}
    
\item {\bf Code of ethics}
    \item[] Question: Does the research conducted in the paper conform, in every respect, with the NeurIPS Code of Ethics \url{https://neurips.cc/public/EthicsGuidelines}?
    \item[] Answer: \answerYes{} 
    \item[] Justification: The research conducted in the paper conforms in every respect with the NeurIPS Code of Ethics.
    \item[] Guidelines:
    \begin{itemize}
        \item The answer NA means that the authors have not reviewed the NeurIPS Code of Ethics.
        \item If the authors answer No, they should explain the special circumstances that require a deviation from the Code of Ethics.
        \item The authors should make sure to preserve anonymity (e.g., if there is a special consideration due to laws or regulations in their jurisdiction).
    \end{itemize}

\item {\bf Broader impacts}
    \item[] Question: Does the paper discuss both potential positive societal impacts and negative societal impacts of the work performed?
    \item[] Answer: \answerYes{} 
    \item[] Justification: We discuss this issue in \Cref{appsec: Statement}.
    \item[] Guidelines:
    \begin{itemize}
        \item The answer NA means that there is no societal impact of the work performed.
        \item If the authors answer NA or No, they should explain why their work has no societal impact or why the paper does not address societal impact.
        \item Examples of negative societal impacts include potential malicious or unintended uses (e.g., disinformation, generating fake profiles, surveillance), fairness considerations (e.g., deployment of technologies that could make decisions that unfairly impact specific groups), privacy considerations, and security considerations.
        \item The conference expects that many papers will be foundational research and not tied to particular applications, let alone deployments. However, if there is a direct path to any negative applications, the authors should point it out. For example, it is legitimate to point out that an improvement in the quality of generative models could be used to generate deepfakes for disinformation. On the other hand, it is not needed to point out that a generic algorithm for optimizing neural networks could enable people to train models that generate Deepfakes faster.
        \item The authors should consider possible harms that could arise when the technology is being used as intended and functioning correctly, harms that could arise when the technology is being used as intended but gives incorrect results, and harms following from (intentional or unintentional) misuse of the technology.
        \item If there are negative societal impacts, the authors could also discuss possible mitigation strategies (e.g., gated release of models, providing defenses in addition to attacks, mechanisms for monitoring misuse, mechanisms to monitor how a system learns from feedback over time, improving the efficiency and accessibility of ML).
    \end{itemize}
    
\item {\bf Safeguards}
    \item[] Question: Does the paper describe safeguards that have been put in place for responsible release of data or models that have a high risk for misuse (e.g., pretrained language models, image generators, or scraped datasets)?
    \item[] Answer: \answerNA{} 
    \item[] Justification: The paper does not release data or models that have a high risk for misuse.
    \item[] Guidelines:
    \begin{itemize}
        \item The answer NA means that the paper poses no such risks.
        \item Released models that have a high risk for misuse or dual-use should be released with necessary safeguards to allow for controlled use of the model, for example by requiring that users adhere to usage guidelines or restrictions to access the model or implementing safety filters. 
        \item Datasets that have been scraped from the Internet could pose safety risks. The authors should describe how they avoided releasing unsafe images.
        \item We recognize that providing effective safeguards is challenging, and many papers do not require this, but we encourage authors to take this into account and make a best faith effort.
    \end{itemize}

\item {\bf Licenses for existing assets}
    \item[] Question: Are the creators or original owners of assets (e.g., code, data, models), used in the paper, properly credited and are the license and terms of use explicitly mentioned and properly respected?
    \item[] Answer: \answerYes{} 
    \item[] Justification: We cite the original paper that produced the code package and dataset.
    \item[] Guidelines:
    \begin{itemize}
        \item The answer NA means that the paper does not use existing assets.
        \item The authors should cite the original paper that produced the code package or dataset.
        \item The authors should state which version of the asset is used and, if possible, include a URL.
        \item The name of the license (e.g., CC-BY 4.0) should be included for each asset.
        \item For scraped data from a particular source (e.g., website), the copyright and terms of service of that source should be provided.
        \item If assets are released, the license, copyright information, and terms of use in the package should be provided. For popular datasets, \url{paperswithcode.com/datasets} has curated licenses for some datasets. Their licensing guide can help determine the license of a dataset.
        \item For existing datasets that are re-packaged, both the original license and the license of the derived asset (if it has changed) should be provided.
        \item If this information is not available online, the authors are encouraged to reach out to the asset's creators.
    \end{itemize}

\item {\bf New assets}
    \item[] Question: Are new assets introduced in the paper well documented and is the documentation provided alongside the assets?
    \item[] Answer: \answerYes{} 
    \item[] Justification: We provide the code with documentation in the link within the abstract.
    \item[] Guidelines:
    \begin{itemize}
        \item The answer NA means that the paper does not release new assets.
        \item Researchers should communicate the details of the dataset/code/model as part of their submissions via structured templates. This includes details about training, license, limitations, etc. 
        \item The paper should discuss whether and how consent was obtained from people whose asset is used.
        \item At submission time, remember to anonymize your assets (if applicable). You can either create an anonymized URL or include an anonymized zip file.
    \end{itemize}

\item {\bf Crowdsourcing and research with human subjects}
    \item[] Question: For crowdsourcing experiments and research with human subjects, does the paper include the full text of instructions given to participants and screenshots, if applicable, as well as details about compensation (if any)? 
    \item[] Answer: \answerNA{} 
    \item[] Justification: The paper does not involve crowdsourcing and research with human subjects.
    \item[] Guidelines:
    \begin{itemize}
        \item The answer NA means that the paper does not involve crowdsourcing nor research with human subjects.
        \item Including this information in the supplemental material is fine, but if the main contribution of the paper involves human subjects, then as much detail as possible should be included in the main paper. 
        \item According to the NeurIPS Code of Ethics, workers involved in data collection, curation, or other labor should be paid at least the minimum wage in the country of the data collector. 
    \end{itemize}

\item {\bf Institutional review board (IRB) approvals or equivalent for research with human subjects}
    \item[] Question: Does the paper describe potential risks incurred by study participants, whether such risks were disclosed to the subjects, and whether Institutional Review Board (IRB) approvals (or an equivalent approval/review based on the requirements of your country or institution) were obtained?
    \item[] Answer: \answerNA{} 
    \item[] Justification: The paper does not involve crowdsourcing and research with human subjects.
    \item[] Guidelines:
    \begin{itemize}
        \item The answer NA means that the paper does not involve crowdsourcing nor research with human subjects.
        \item Depending on the country in which research is conducted, IRB approval (or equivalent) may be required for any human subjects research. If you obtained IRB approval, you should clearly state this in the paper. 
        \item We recognize that the procedures for this may vary significantly between institutions and locations, and we expect authors to adhere to the NeurIPS Code of Ethics and the guidelines for their institution. 
        \item For initial submissions, do not include any information that would break anonymity (if applicable), such as the institution conducting the review.
    \end{itemize}

\item {\bf Declaration of LLM usage}
    \item[] Question: Does the paper describe the usage of LLMs if it is an important, original, or non-standard component of the core methods in this research? Note that if the LLM is used only for writing, editing, or formatting purposes and does not impact the core methodology, scientific rigorousness, or originality of the research, declaration is not required.
    \item[] Answer: \answerNA{} 
    \item[] Justification: The usage of LLMs is not an important component of the core methods in this research.
    \item[] Guidelines:
    \begin{itemize}
        \item The answer NA means that the core method development in this research does not involve LLMs as any important, original, or non-standard components.
        \item Please refer to our LLM policy (\url{https://neurips.cc/Conferences/2025/LLM}) for what should or should not be described.
    \end{itemize}

\end{enumerate}

\end{document}